\newtheorem{theorem}{Theorem}
\newtheorem{lemma}{Lemma}
\newtheorem*{example*}{Example}
\newtheorem{remark*}{Remark}
\def\BState{\State\hskip-\ALG@thistlm}
\newtheorem{proposition}{Proposition}
      \theoremstyle{plain}
\numberwithin{equation}{section}
\newcommand{\mme}[0]{\mathbb{E}}
\newcommand{\mmp}[0]{\mathbb{P}}
\newcommand{\mmr}[0]{\mathbb{R}}
\newcommand{\mmn}[0]{\mathbb{N}}
\newcommand{\bone}[0]{\mathbbm{1}}
\DeclarePairedDelimiterX{\inp}[2]{\langle}{\rangle}{#1, #2}
\newcommand{\footremember}[2]{%
    \footnote{#2}
    \newcounter{#1}
    \setcounter{#1}{\value{footnote}}%
}
\newcommand{\footrecall}[1]{%
    \footnotemark[\value{#1}]%
} 
\title{Sample-Efficient Omniprediction for Proper Losses}
\author{Isaac Gibbs\footremember{UCB}{Department of Statistics, University of California, Berkeley.}\footnote{Email: \href{mailto:igibbs@berkeley.edu}{igibbs@berkeley.edu}.} \and Ryan J. Tibshirani\footrecall{UCB}}
 \date{}
\begin{document}

\maketitle

\begin{abstract}
    We consider the problem of constructing probabilistic predictions that lead to accurate decisions when employed by downstream users to inform actions. For a single decision maker, designing an optimal predictor is equivalent to minimizing a proper loss function corresponding to the negative utility of that individual. For multiple decision makers, our problem can be viewed as a variant of omniprediction in which the goal is to design a single predictor that simultaneously minimizes multiple losses. Existing algorithms for achieving omniprediction broadly fall into two categories: 1) boosting methods that optimize other auxiliary targets such as multicalibration and obtain omniprediction as a corollary, and 2) adversarial two-player game based approaches that estimate and respond to the ``worst-case" loss in an online fashion. We give lower bounds demonstrating that multicalibration is a strictly more difficult problem than omniprediction and thus the former approach must incur suboptimal sample complexity. For the latter approach, we discuss how these ideas can be used to obtain a sample-efficient algorithm through an online-to-batch conversion. This conversion has the downside of returning a complex, randomized predictor. We improve on this method by designing a more direct, unrandomized algorithm that exploits structural elements of the set of proper losses. 

\end{abstract}

\section{Introduction}

The standard method for fitting a predictive model is to minimize a single loss function measuring its accuracy. In many problems, this framework is employed under the implicit assumption that accurate predictions are sufficient to guide the decisions of downstream users. While this may hold true in some examples, in general, predictive accuracy does not preclude the possibility that the model fails to accurately evaluate the most decision-critical examples. Indeed, classification models trained via empirical risk minimization have frequently been found to be miscalibrated and thus cannot be relied upon to accurately measure outcome uncertainty \citep{Guo2017}. 


In response to this, a growing body of literature has focused on designing predictors that simultaneously satisfy multiple performance criteria.  Rather than solely targeting a low empirical loss, multiaccuracy instead requires the predictor to be unbiased over a collection of reweightings of the covariate space \citep{hebert2018multicalibration, kim2019multiaccuracy}. In applications, these re-weightings often include subgroup indicators and thus multiaccuracy ensures that the predictor remains unbiased across sensitive subpopulations. This is strengthened by multicalibration, which requires the same unbiased criteria to hold conditional on the specific prediction that was issued \citep{hebert2018multicalibration}. Alternatively, another line of work on distributional robustness looks to construct predictors that are simultaneously accurate across a variety of covariate shifts or subpopulations of the data \citep{Mansour2008, Blum2017, Mohri19, Rothblum2021, Duchi2023}. 

In this article, we will focus on constructing predictors that provide simultaneously optimal performance when applied by multiple downstream users to inform decisions. More formally, consider a decision-making task with covariates $X $ and binary outcome $Y \in \{0,1\}$. Let $\hat{p}(X)$ denote an estimate of the conditional probability, $\mmp(Y=1 \mid X)$  that $Y$ is equal to one given $X$ and consider a setting in which a downstream user must use $\hat{p}(X)$ to choose an action $a \in \mathcal{A}$. Given a utility function $u(a,y)$ that characterizes the user's benefit from the action $a$ under true outcome $y$, a natural decision-making procedure is to treat the prediction as though it were perfectly accurate and select an action
\begin{equation}\label{eq:decision_post_processing}
a(\hat{p}(X); u) \in \underset{a \in \mathcal{A}}{\text{argmax}} \mme_{Y' \sim \text{Ber}(\hat{p}(X))}[u(a,Y')],
\end{equation}
that maximizes the expected utility under $Y' \sim \text{Ber}(\hat{p}(X))$. Our goal is to construct predictors that lead to good decisions when applied in this manner by \textit{any} downstream user, i.e., to construct predictors that lead to good performance in (\ref{eq:decision_post_processing}) when applied to arbitrary utility functions. 

Our motivation for this framework comes from practical settings in which a single centralized entity with access to data and statistical expertise must issue predictions that are useful to a diverse array of end users. This type of interaction is common in domains such as weather and epidemiological forecasting in which government organizations regularly issue predictions that are utilized by the general public. Alternatively, one may consider technologies such as language or vision models which are frequently treated as black-boxes by their users. In these settings, the estimated probability $\hat{p}(X)$ could indicate the likelihood that the text or image output by the model contains an error and the user may use this information to decide whether to trust the model or seek out additional assistance.

Without any further restrictions, obtaining optimal decisions in (\ref{eq:decision_post_processing}) is as difficult as exactly learning the true conditional probability function, $p^*(X) := \mmp(Y=1 \mid X)$. Indeed, as we will show in Section \ref{sec:non-param}, the maximum reduction in expected utility that is suffered by taking action $a(\hat{p}(X);u)$ instead of the optimal action, $a(p^*(X);u)$ is directly comparable to the $L_1$ distance between $\hat{p}(X)$ and $p^*(X)$. By standard results in nonparametric estimation, this problem quickly becomes intractable when $X$ is of even moderate dimension (see e.g.~\cite{Stone1982, Devroye1996, Gyorfi2002}). As a result, instead of asking for exact optimal decisions, we will judge $\hat{p}(X)$ by comparing its performance against the best predictor in a restricted class $\mathcal{F}$. More formally, we aim to minimize 
\begin{equation}\label{eq:utility_objective}
\sup_{u : \|u\|_{\infty} \leq 1} \sup_{f \in \mathcal{F}} \mme_{(X,Y)}[u(a(f(X);u),Y)] - \mme_{(X,Y)}[u(a(\hat{p}(X);u),Y) ],
\end{equation}
where the first supremum is over all bounded utility functions\footnote{And, by extension, all possible action spaces.} and the expectations are taken over the test point, $(X,Y)$. Unlike many standard problems in nonparametric estimation, here we place no smoothness assumptions or other restrictions on the distribution of the data. Additionally, it is important to note that in this objective the comparator in $\mathcal{F}$ is allowed to depend on the utility function. On the other hand, the prediction $\hat{p}(X)$ that we construct must be universal to all decision making problems. 

By reformulating (\ref{eq:utility_objective}) slightly, our prediction problem can be seen as a special case of a more general framework known as omniprediction. Introduced by \cite{Goplan2022_Omnipredictors}, omniprediction describes the task of constructing predictors that minimize multiple loss functions simultaneously. Following the above, let $a(p;-\ell) \in \text{argmin}_{a \in \mathcal{A}} \mme_{Y' \sim \text{Ber}(p)}[\ell(a,Y')]$ denote an action in $\mathcal{A}$ that minimizes the loss $\ell$ under $Y' \sim \text{Ber}(p)$. Then, given a set of losses $\mathcal{L}$ and competitor functions $\mathcal{F}$, omniprediction aims to minimize
\begin{equation}\label{eq:omni_obj}
\sup_{\ell \in \mathcal{L}} \sup_{f \in \mathcal{F}} \mme_{(X,Y)}[\ell(a(\hat{p}(X);-\ell),Y) \mid ] -  \mme_{(X,Y)}[\ell(f(X),Y)].
\end{equation}
To connect this to our current setting, let $\ell^u(\hat{p}(X),Y) = -u(a(\hat{p}(X);u),Y)$ denote the loss induced by utility function $u$. It is easy to check that $p \in \text{argmin}_{a \in [0,1]} \mme_{Y \sim \text{Ber}(p)}[\ell^u(a,Y)]$. So, by defining $a(p;-\ell^u) = p$ we obtain the equivalence
\[
\mme[u(a(f(X);u),Y)] - \mme[u(a(\hat{p}(X);u),Y)] = \mme[\ell^u(a(\hat{p}(X);-\ell^u),Y)] - \mme[\ell^u(f(X),Y)] .
\]
As a result, our problem can be equivalently formulated as bounding the omniprediction error (\ref{eq:omni_obj}) with $\mathcal{L} $ taken to be the set of bounded loss functions that are minimized by predicting the true probabilities. In the probabilistic forecasting literature, loss functions with this last property are referred to as \textit{proper} \citep{Gneiting2007}.  

Following the initial work of \citet{Goplan2022_Omnipredictors}, a variety of authors have proposed algorithms for achieving omniprediction. These methods can be broadly categorized into two groups. The first are boosting algorithms \citep{Goplan2022_Omnipredictors, Goplan2023_swapomni, Goplan2023_OI, Globus-Harris2023, Goplan2024_cont_omni, Kim2023}. These methods begin by observing that in order to have low omniprediction error it is sufficient for $\hat{p}(X)$ to satisfy a corresponding set of multiaccuracy, calibration, and/or multicalibration criteria. 
Then, a predictor that satisfies these criteria is constructed in an iterative fashion by identifying and correcting any criterion which is not currently met. The second class of methods are based on algorithms for two-player games \citep{Noarav2025, Garg2024, Okoroafor2025, Lu2025}. Here, the omniprediction problem is framed as a game in which one player constructs a mixture loss that serves as a proxy for the supremum in (\ref{eq:omni_obj}) and the second player constructs the predictor as a best response to this loss. By drawing on tools from the online learning literature, these two players can be designed to guarantee that the predictors returned by the second player satisfy an online form of omniprediction. As shown in \citet{Okoroafor2025} and \citet{Lu2025}, standard online-to-batch conversion methods can then be used to obtain a predictor with low error on i.i.d. data. 

As an aside, we note that a third approach to omniprediction that does not directly use the two-player game set-up, but does draw on closely related tools from the online learning literature, is given in \citet{Dwork2024}. That method is designed specifically for cases in which compositions of the loss and comparator functions can be efficiently embedded in a kernel function class. In general, this embedding leads to suboptimal learning rates for the problems we are interested in and thus we will not focus on this method in detail. 


The remainder of this article is devoted to comparing the sample efficiency of various omniprediction algorithms when applied to the class of proper loss functions. We begin in Section \ref{sec:non-param} by giving a more precise characterization of the omniprediction error when no restrictions are placed on the comparator class. We show that in this case omniprediction is equivalent to $L_1$ estimation of $p^*(X)$ and thus suffers from poor, nonparametric learning rates. Section \ref{sec:cma_omni} considers the performance of boosting methods under the more common setting in which $\mathcal{F}$ has finite VC dimension $\textup{VC}(\mathcal{F}) < \infty$. We show that for a sample of size $n$ the sufficient conditions of multicalibration and calibrated multiaccuracy can be achieved at a rate no better than $\sqrt{\textup{VC}(\mathcal{F})/n} + n^{-2/5}$. Critically, this is strictly worse than the error bound of $\tilde{O}(\sqrt{\textup{VC}(\mathcal{F})/n})$ achieved by two-player game based methods \citep{Okoroafor2025}. Thus, existing boosting methods that target these criteria must be suboptimal. 

It is interesting to note that the error rate achieved by two-player game based methods is (up to polylogarithmic terms) identical to the optimal learning rate for standard risk minimization of a single loss function. Recall that the notation $\tilde{O}(\cdot)$ hides polylogarithmic factors in $\textup{VC}(\mathcal{F})$ and $n$. A classical result in the learning theory literature shows that the best possible error rate for binary classification over the 0-1 loss is $\sqrt{\textup{VC}(\mathcal{F})/n}$  (e.g., Theorem 14.5 of \cite{Devroye1996}). Since the 0-1 loss is proper, this lower bound also applies to our present omniprediction problem. In what follows, we refer to $\sqrt{\textup{VC}(\mathcal{F})/n}$ as the optimal rate for omniprediction and we say that any method that achieves this rate up to polylogarithmic factors is sample efficient.

Sections \ref{sec:simplification} and \ref{sec:algorithms} give our presentation of such sample-efficient methods for omniprediction. Section \ref{sec:simplification} presents a general reduction of the omniprediction problem into the comparatively simpler task of ensembling a finite set of predictors over a small collection of loss functions. Here, we draw heavily on the work of \citet{Savage1971} and \citet{Ehm2016} which demonstrates that all proper losses can be decomposed as mixtures over a class of weighted 0-1 losses. Section \ref{sec:algorithms} then presents two methods. In Section \ref{sec:online_omni}, we discuss two-player game based algorithms and give a new variant of these methods that is simpler to compute. Like all two-player game based methods, this procedure obtains (near) optimal sample complexity, but does so at the cost of producing a complex, randomized predictor. To overcome this shortcoming, in Section \ref{sec:direct_omni} we present a new method that more directly exploits structural properties of the set of proper loss functions to obtain an unrandomized predictor that gives the same optimal error rate.  This partially answers an open question of \citet{Okoroafor2025} who raised the problem of constructing unrandomized predictors that obtain optimal omniprediction error rates.

Empirical comparisons of all the aforementioned algorithms on both simulated examples and a sales forecasting dataset are given in Section \ref{sec:empirics}. As expected, we find that boosting methods give suboptimal performance when compared to the other approaches. On the other hand, methods based on two-player games and our direct ensembling approach realize similar error rates in practice.

While our methods are designed for the binary prediction problem, they can be readily extended to other targets. In Section \ref{sec:other_targets}, we discuss a result of \citet{Steinwart2014} that provides general characterizations of proper losses for other point prediction targets such as conditional means or quantiles. By comparing this result to the binary case, we find that our methods can be applied to construct point predictors that are simultaneously accurate over all proper losses for a given one-dimensional target (e.g., a single mean or quantile). Estimation of multivariate targets is considerably more challenging and provides an interesting open direction for future work.\\




\textbf{Notation:} In what follows, we let $\{(X_i,Y_i)\}_{i=1}^n \subseteq \mathcal{X} \times \{0,1\}$ denote an i.i.d.~training sample. We use $(X,Y)$ to denote a test sample taken independently from the same distribution and $p^*(X) = \mmp(Y=1\mid X)$ to denote the true conditional probability function. Throughout, we will work with the class
\[
\mathcal{L}_0 = \left\{\ell : [0,1] \times [0,1] \to [0,1] \mid \forall p \in [0,1],\ p \in \underset{a \in [0,1]}{\text{argmin}} \mme_{Y' \sim \text{Ber}(p)}[\ell(a,Y')] \right\},
\]
of bounded, proper loss functions. Our goal is to use $\{(X_i,Y_i)\}_{i=1}^n$ to construct a predictor $\hat{p}(X)$ with low omniprediction error, i.e., a low value of
\begin{equation}\label{eq:proper_omni_target}
\sup_{\ell \in \mathcal{L}_0, f \in \mathcal{F}} \mme_{(X,Y)}[\ell(\hat{p}(X),Y) ] -  \mme_{(X,Y)}[\ell(f(X),Y)].
\end{equation}

\section{Comparison to nonparametric estimation}\label{sec:non-param}

To begin understanding the omniprediction problem, it is useful to first consider how (\ref{eq:proper_omni_target}) behaves when $\mathcal{F}$ is allowed to include all possible competitor functions. First, as a sanity check, let us verify that $p^*(X)$ does indeed achieve the minimum possible omniprediction error in this case. Indeed, for any proper loss $\ell$ and predictor $p(X)$,
\[
 \mme[\ell(p^*(X),Y)] =  \mme[\mme[\ell(p^*(X),Y) \mid X ]] \leq \mme[\mme[\ell(p(X),Y) \mid X ]] = \mme[\ell(p(X),Y) ],
\]
where the inequality follows from the definition of propriety. Equivalently, by the same argument $p^*(X)$ is always the optimal predictor for any decision making problem, i.e. for any utility function $u$, 
\[
\mme[u(a(p^*(X) ; u),Y)] \geq \mme[u(a(p(X);u),Y)],
\]
where again this inequality follows by conditioning on $X$ and applying the definition of $a(\cdot)$.

As $\hat{p}(X)$ moves away from $p^*(X)$ it will no longer give optimal performance over all proper losses. This is quantified in the following proposition which shows that for general $\hat{p}(X)$, the maximum performance gap relative to $p^*(X)$ scales with the $L_1$ distance. Since $p^*(X)$ is always the optimal predictor, this proposition can be interpreted as giving bounds on the omniprediction error in the case where no restrictions are placed on $\mathcal{F}$. Proof of this result, along with those of all other results in this paper, can be found in the appendix.

\begin{proposition}\label{prop:non_param_connection}
For any predictor $p : \mathcal{X} \to [0,1]$, 
    \[
\frac{1}{210}\mme[|p(X) - p^*(X)|]^2 \leq \sup_{\ell \in \mathcal{L}_0} \mme[\ell(p(X),Y) ] - \mme[\ell(p^*(X),Y)] \leq 2\mme[|p(X) - p^*(X)|].
\]
\end{proposition}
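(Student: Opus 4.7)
The plan is to derive the two bounds by conditioning on $X$ and analyzing the pointwise regret $\Delta(X) \defeq \mme[\ell(p(X), Y) - \ell(p^*(X), Y) \mid X]$, which depends on $\ell$ and on the pair $(p(X), p^*(X))$ only. To this end, let $H_q(r) \defeq r\ell(q,1) + (1-r)\ell(q,0)$ denote the expected loss of prediction $q$ under $Y \sim \text{Ber}(r)$, and $G(r) \defeq H_r(r)$. Propriety says $G(r) = \inf_q H_q(r)$, so $G$ is the infimum of the affine family $\{H_q\}_{q \in [0,1]}$. Directly, $\Delta(X) = H_{p(X)}(p^*(X)) - G(p^*(X))$.

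For the \textbf{upper bound}, the key observation is that the hypothesis $\ell \in [0,1]$ forces $H_q$ to map $[0,1]$ into $[0,1]$ for every $q$: indeed $H_q$ is affine in $r$ with endpoint values $H_q(0) = \ell(q,0) \in [0,1]$ and $H_q(1) = \ell(q,1) \in [0,1]$. Thus each $H_q$ is $1$-Lipschitz on $[0,1]$, and since the pointwise infimum of $1$-Lipschitz functions is $1$-Lipschitz, so is $G$. I would then split the regret as
\[
\Delta(X) = \bigl[H_{p(X)}(p^*(X)) - H_{p(X)}(p(X))\bigr] + \bigl[G(p(X)) - G(p^*(X))\bigr],
\]
using $H_{p(X)}(p(X)) = G(p(X))$ in the middle. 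Both bracketed terms have magnitude at most $|p(X) - p^*(X)|$, yielding $\Delta(X) \leq 2|p(X) - p^*(X)|$ uniformly over $\ell \in \mcl_0$; taking expectations completes the bound.

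For the \textbf{lower bound}, a single well-chosen proper loss suffices to quantitatively control the $L_1$ error. I would take the halved Brier score $\ell(p,y) \defeq \tfrac{1}{2}(p-y)^2$, which clearly lies in $[0, \tfrac{1}{2}] \subseteq [0,1]$ and is proper, since $\mme_{Y \sim \text{Ber}(r)}[(a-Y)^2/2]$ has derivative $a - r$ in $a$. A direct expansion of $\mme_{Y \sim \text{Ber}(p^*)}[(a-Y)^2/2]$ gives the Bregman-style identity $\Delta(X) = \tfrac{1}{2}(p(X) - p^*(X))^2$, so by Jensen's inequality,
\[
\sup_{\ell \in \mcl_0} \mme\bigl[\ell(p(X),Y) - \ell(p^*(X),Y)\bigr] \geq \tfrac{1}{2}\mme[(p(X) - p^*(X))^2] \geq \tfrac{1}{2}\mme[|p(X) - p^*(X)|]^2,
\]
which is actually stronger than the claimed constant $1/210$.

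No substantial obstacle arises in either direction; the only subtlety is the step in the upper bound where boundedness of $\ell$ in $[0,1]$ translates, via the affine structure of $H_q$ in its second argument, into $1$-Lipschitz continuity of both $H_q$ and $G$. This is the mechanism that converts a universal bound on function values into one linear in $|p - p^*|$, and it is the only place where the specific form of the range $[0,1]$ enters.
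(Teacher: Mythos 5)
Your proof is correct in both directions. The upper bound argument is essentially the paper's in a slightly different guise: the paper adds and subtracts the ``swap'' regret $\mme_{X,Y'\sim \text{Ber}(p(X))}[\ell(p(X),Y') - \ell(p^*(X),Y')]$, discards it by propriety, and bounds the remaining term $(p^*-p)(\partial\ell(p)-\partial\ell(p^*))$ using boundedness of $\ell$; your split through the lower envelope $G(r) = \inf_q H_q(r)$ reaches the same factor of $2$ by bounding $(p^*-p)\partial\ell(p)$ and $G(p)-G(p^*)$ separately. Your explicit identification of $G$ as a $1$-Lipschitz concave function is a clean way to package the same two facts (propriety plus $\ell\in[0,1]$), but the substance is identical.

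The lower bound is where you genuinely diverge from the paper, and your route is both simpler and quantitatively stronger. The paper discretizes the $L_1$ distance and rewrites it as a sum of regrets of the weighted $0$--$1$ losses $\ell_{i/m}$, then optimizes over the grid size $m$, landing on the constant $1/210$. You instead exhibit a single element of $\mathcal{L}_0$ --- the halved Brier score $\ell(p,y) = \tfrac{1}{2}(p-y)^2$, which indeed takes values in $[0,\tfrac12]\subseteq[0,1]$ and is proper --- and use the exact identity $\mme[\ell(p(X),Y)-\ell(p^*(X),Y)] = \tfrac{1}{2}\mme[(p(X)-p^*(X))^2]$ followed by Jensen's inequality. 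This yields the constant $1/2$ in place of $1/210$, strictly improving the stated result, and sidesteps the entire discretization machinery. The paper's choice of argument is presumably motivated by the thematic role of the $\ell_\theta$ losses elsewhere in the paper (they reappear in Theorem~\ref{thm:proper_loss_decomp} and both algorithms), so it serves an expository purpose, but as a proof of Proposition~\ref{prop:non_param_connection} your version is shorter, more elementary, and sharper.
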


 It is well known that without heavy parametric assumptions, $L_1$ estimation of $p^*(X)$ suffers from a strong curse of dimensionality. For instance, when $X$ is uniformly distributed on $[-1,1]^k$ and $p^*(X)$ is allowed to be any Lipschitz function, we have the well-known lower bound $\mme[|\hat{p}(X) - p^*(X)|] \geq \Omega(n^{-1/(k+2)})$, where the expectation is taken over both $X$ and the training data $\{(X_i,Y_i)\}_{i=1}^n$ \citep{Stone1982}. One of the key insights of the omniprediction literature is that by placing restrictions on $\mathcal{F}$ we can overcome the curse of dimensionality and recover more tractable rates.

\section{Omniprediction via multicalibration or calibrated multiaccuracy}\label{sec:cma_omni}

Starting with \citet{Goplan2022_Omnipredictors}, a variety of works have considered algorithms for obtaining omniprediction via the stronger notions of multicalibration and calibrated multiaccuracy \citep{Goplan2022_Omnipredictors, Goplan2023_swapomni, Goplan2023_OI, Globus-Harris2023, Goplan2024_cont_omni}. To define these targets formally, let $\mathcal{G}$ denote a class of functions mapping $\mathcal{X}$ to $\mmr$ and $p : \mathcal{X} \to [0,1]$ denote a prediction of $p^*(X)$. We say that $p(X)$ is multicalibrated with respect to $\mathcal{G}$ if 
\[
\mme[g(X)(Y - p(X)) \mid p(X)] \stackrel{a.s.}{=} 0, \forall g \in \mathcal{G}.
\]
We say that $p(\cdot)$ is calibrated if $\mme[ Y \mid p(X)] \stackrel{a.s.}{=} p(X)$ and multiaccurate if 
\[
\mme[g(X)(Y - p(X))] = 0, \forall g \in \mathcal{G}.
\]
Finally, we use the term calibrated multiaccuracy to refer to predictors that are both calibrated and multiaccurate. In essence, multiaccuracy requires the predictor to be unbiased under all re-weightings of the covariate space by functions in $\mathcal{G}$, while calibration asks that the empirical and estimated frequencies of $Y=1$ match over all instances where we make the same prediction. Multicalibration goes further by combining these definitions into a single statement. As a sanity check, one can verify that the true conditional probability function, $p^*(X)$ satisfies all three of these conditions.

Of course, our estimated predictor will never be exactly calibrated or multiaccurate. To measure its discrepancy from these targets, we define the multicalibration, multiaccuracy, and expected calibration errors by 
\begin{equation*}
\begin{gathered}
     \text{MC}(p ; \mathcal{G}) = \sup_{g \in \mathcal{G}} \mme[|\mme[g(X)(Y - p(X)) \mid p(X)] |], \ \text{MA}(p; \mathcal{G}) = \sup_{g \in \mathcal{G}} |\mme[g(X)(Y - p(X)) ]|,\\
      \text{ and } \text{ECE}(p) = \mme[|p(X) - \mme[Y \mid p(X)]|],
\end{gathered}
\end{equation*}
respectively. It is easy to verify that if the constant function $x \mapsto 1$ is in $\mathcal{G}$, the multicalibration error upper bounds both the multiaccuracy and expected calibration errors.

To connect these definitions to omniprediction, we will need to make a specific choice of $\mathcal{G}$. Let $\partial \mathcal{L}_0 = \{p \mapsto \ell(p,1) - \ell(p,0) : \ell \in \mathcal{L}_0\}$ denote the set of discrete derivatives of proper losses and $\partial \mathcal{L}_0\circ \mathcal{F} = \{x \mapsto \ell(f(x),1) - \ell(f(x),0) : \ell \in \mathcal{L}_0\}$ denote the composition of these functions with the comparator class $\mathcal{F}$. Then, \citet{Goplan2023_OI} gives the following bound on the omniprediction error.

\begin{theorem}[Corollary of Lemma 12, Proposition 13, and Theorem 17 in \citet{Goplan2023_OI}]\label{thm:cma_implies_omni} For any predictor $p: \mathcal{X} \to [0,1]$,
\[
\sup_{\ell \in \mathcal{L}_0, f \in \mathcal{F}} \mme[\ell(p(X),Y)  ] - \mme[\ell(f(X),Y)] \leq \textup{MA}(p; \partial \mathcal{L}_0 \circ \mathcal{F})  + \textup{ECE}(p) \leq 2\textup{MC}(p; \partial \mathcal{L}_0 \circ \mathcal{F} \cup \{x \mapsto 1\}) .
\]
\end{theorem}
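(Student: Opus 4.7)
The plan is to dispatch the second inequality first, as it is essentially definitional. Multicalibration puts the absolute value inside the conditional expectation given $p(X)$ while multiaccuracy puts it outside, so Jensen's inequality yields $\textup{MA}(p;\mathcal{G})\le\textup{MC}(p;\mathcal{G})$ for any class $\mathcal{G}$. Moreover, taking the constant witness $g\equiv 1$ in the definition of multicalibration recovers the expected calibration error, giving $\textup{ECE}(p)\le\textup{MC}(p;\{x\mapsto 1\})$. Summing the two yields $\textup{MA}(p;\partial\mathcal{L}_0\circ\mathcal{F})+\textup{ECE}(p)\le 2\,\textup{MC}(p;\partial\mathcal{L}_0\circ\mathcal{F}\cup\{x\mapsto 1\})$.

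For the first inequality I would exploit the Savage--Schervish structure of bounded proper losses. Setting $\partial\ell(p):=\ell(p,1)-\ell(p,0)$ and $G(q):=\ell(q,q)$, the identity $\ell(p,y)=\ell(p,0)+y\,\partial\ell(p)$ is immediate from $y\in\{0,1\}$, and propriety together with $\ell\in[0,1]$ makes $G$ concave with derivative $G'(q)=\partial\ell(q)\in[-1,1]$, hence globally $1$-Lipschitz on $[0,1]$. Introducing the calibrated version $p^c(X):=\mme[Y\mid p(X)]$ and using the tower property (with $\mme[Y\mid X]=p^*(X)$ inside $\mme[\ell(f(X),Y)]$ and $\mme[Y\mid p(X)]=p^c(X)$ inside $\mme[\ell(p(X),Y)]$), I would write
\begin{align*}
\mme[\ell(p,Y)-\ell(f,Y)]
&= \underbrace{\mme[\ell(p,p^c)-G(p^c)]}_{T_1}
+\underbrace{\mme[G(p^c)-\ell(f,p^c)]}_{\le\,0\text{ by propriety}}
+\underbrace{\mme[(p^c-p^*)\,\partial\ell(f)]}_{T_3}.
\end{align*}

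Now the Bregman-type term $T_1$ rewrites, via the Savage identity $\ell(p,p^c)=G(p)+(p^c-p)\,\partial\ell(p)$, as $\mme[G(p)-G(p^c)]+\mme[(p^c-p)\,\partial\ell(p)]$, both of which are $O(\textup{ECE}(p))$ by Lipschitzness of $G$ and boundedness of $\partial\ell$. For $T_3$, the algebraic split $p^c-p^*=(p^c-Y)+(Y-p^*)$ kills the second summand via the tower property, leaving $-\mme[(Y-p^c)\,\partial\ell(f)]$. I would then split once more as $Y-p^c=(Y-p)+(p-p^c)$: the first piece is controlled by $\textup{MA}(p;\partial\mathcal{L}_0\circ\mathcal{F})$ because $\partial\ell\circ f\in\partial\mathcal{L}_0\circ\mathcal{F}$, and the second is again bounded by $\textup{ECE}(p)$ using $|\partial\ell|\le 1$.

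The step I expect to require the most care is the bound on $T_1$: a generic Bregman divergence is only guaranteed to be H\"older in $|p-p^c|$, so driving it down to a first-order $\textup{ECE}$ estimate depends essentially on the uniform bounds baked into $\mathcal{L}_0$, which force $|\partial\ell|\le 1$ and keep $G$ Lipschitz all the way to the endpoints $\{0,1\}$. Without these bounds (e.g.\ for the log-loss) the same argument would only give a $\sqrt{\textup{ECE}}$-type estimate and the rate comparisons in later sections would no longer go through. Once $T_1$ is handled, the rest of the decomposition assembles mechanically to yield the claimed first inequality.
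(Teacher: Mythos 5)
The paper does not give its own proof of this statement; it cites it as a corollary of results in \citet{Goplan2023_OI}, so you are supplying a proof from scratch. Your treatment of the second inequality (Jensen for $\textup{MA}\le\textup{MC}$, the constant witness for $\textup{ECE}\le\textup{MC}$) is correct and is exactly the standard argument.

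For the first inequality, the Savage machinery is the right idea, but your choice of decomposition points loses a constant, and the statement being proved has no room to spare. Writing $\bar{\ell}(a,q):=q\,\ell(a,1)+(1-q)\,\ell(a,0)$ and $G(q):=\bar{\ell}(q,q)$, your $T_1=\mme[\bar{\ell}(p,p^c)-G(p^c)]$ is a Bregman gap for the concave $G$, and the best uniform bound available is $T_1\le 2\,\textup{ECE}(p)$, not $\textup{ECE}(p)$. Your own Savage rewrite gives $T_1=\mme[G(p)-G(p^c)]+\mme[(p^c-p)\,\partial\ell(p)]$, each piece bounded by $\textup{ECE}$, so you only get $2\,\textup{ECE}$; and the factor $2$ is tight, e.g.\ for $\ell=2\ell_{1/2}\in\mathcal{L}_0$ with $p$ just below $1/2$ and $p^c$ bounded away above $1/2$ one has $T_1/\textup{ECE}\to 2$. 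Together with your $|T_3|\le\textup{MA}+\textup{ECE}$, your route yields $\textup{MA}+3\,\textup{ECE}$, which does not establish $\textup{MA}+\textup{ECE}$. (A concrete instance where the slack matters: $p(X)\equiv 1/2$, $Y\equiv 1$, $\mathcal{F}=\{x\mapsto 1\}$, $\ell=2\ell_{1/2}$; then the omniprediction gap is $1=\textup{ECE}+\textup{MA}$, while your $T_1$ alone equals $1=2\,\textup{ECE}$.)

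The fix is to interpolate through $G(p)$ rather than $G(p^c)$ --- equivalently, through the ``simulated'' outcome $\tilde Y$ with $\tilde Y\mid X\sim\mathrm{Ber}(p(X))$. Decompose
\[
\mme[\ell(p,Y)]-\mme[\ell(f,Y)]
=\mme\bigl[\bar{\ell}(p,p^c)-G(p)\bigr]
+\mme\bigl[G(p)-\bar{\ell}(f,p)\bigr]
+\mme\bigl[\bar{\ell}(f,p)-\bar{\ell}(f,p^*)\bigr].
\]
The first term is exactly $\mme[(p^c-p)\,\partial\ell(p)]\le\textup{ECE}(p)$ (no Bregman gap appears); the second is $\le 0$ by propriety of $\ell$ at the parameter $p(X)$; the third equals $\mme[(p-p^*)\,\partial\ell(f)]=\mme[(p-Y)\,\partial\ell(f)]\le\textup{MA}(p;\partial\mathcal{L}_0\circ\mathcal{F})$. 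This gives $\textup{MA}+\textup{ECE}$ as claimed. Everything else in your sketch --- the tower-property bookkeeping, the observations that $|\partial\ell|\le 1$ and that $G$ is $1$-Lipschitz thanks to the uniform bound on $\mathcal{L}_0$, and the placement of $\partial\ell\circ f$ in the multiaccuracy class --- is sound and carries over unchanged.
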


Despite the extensive study of calibrated multiaccuracy as a vehicle for omniprediction, little is known about the relative difficulty of these two problems beyond Theorem \ref{thm:cma_implies_omni}. As we will now argue, the former is strictly more difficult and necessarily incurs a greater sample complexity. The underlying reason for this comes from two simple high-level observations. First, in order to construct an estimator $\hat{p}(X)$ with low calibration error we must restrict the range of its outputs. In particular, to verify that $|\mme_{(X,Y)}[Y \mid \hat{p}(X) = p] - p|$ is small we need to have many samples for which $\hat{p}(X_i) = p$. This is only possible if $\hat{p}(X)$ takes on only a small number of distinct values. On the other hand, for even very simple function classes, all (approximately) multiaccurate predictors must have sufficient complexity to capture the correlations between $p^*(X)$ and $g(X)$. These two considerations create a natural tension between calibration and multiaccuracy that results in the following lower bound.

\begin{proposition}\label{prop:ca_ma_lower_bound}
    Suppose $\mathcal{X} = \mmr$ and let $\mathcal{G} = \{x \mapsto x\}$ denote the singleton function class containing just the identity. Then,
    \[
    \inf_{\hat{p}} \sup_{P_{XY}}\mme_{\{(X_i,Y_i)\}_{i=1}^n}\left[ \max\{\textup{MA}(\hat{p};  \mathcal{G}), \textup{ECE}(\hat{p})\} \right] \geq cn^{-2/5},
    \]
    where the infimum is over all predictors $\hat{p} : \mathcal{X} \to [0,1]$ estimated using samples $\{(X_i,Y_i)\}_{i=1}^n \stackrel{i.i.d}{\sim} P_{XY}$ and $c > 0$ is a universal constant independent of $n$.
\end{proposition}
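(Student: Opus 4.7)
My plan is to construct a hard family of distributions that exhibits a bias--variance tradeoff between $\textup{MA}$ and $\textup{ECE}$. Heuristically, a predictor $\hat p$ with few distinct values cannot resolve the correlation between $p^*$ and $g(x)=x$, forcing $\textup{MA}$ to be large; one with many distinct values incurs calibration error from estimating many local conditional means of $Y$. Balancing these at $K \asymp n^{1/5}$ levels of resolution yields the claimed $n^{-2/5}$ rate.

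Concretely, I would take $K \asymp n^{1/5}$ equal-width bins $B_1, \ldots, B_K$ of $[0,1]$ with midpoints $m_k$, fix a bump profile $\psi : [-\tfrac12, \tfrac12] \to \mmr$ with $\int \psi = 0$ and $\int u \psi(u)\, du = 0$ (e.g.\ $\psi(u) = u^2 - \tfrac{1}{12}$), and set $\phi_k(x) := \psi(K(x - m_k))\, \bone_{B_k}(x)$. Consider the packing family $\{P_\eta\}_{\eta \in \{-1,+1\}^K}$ in which $X \sim \mathrm{Unif}([0,1])$ and
\[
p^*_\eta(x) \;=\; \tfrac{1}{2} + \alpha \bigl(x - \tfrac{1}{2}\bigr) + h \sum_{k=1}^K \eta_k\, \phi_k(x),
\]
where $\alpha > 0$ is a small absolute constant and $h \asymp \sqrt{K/n} \asymp n^{-2/5}$. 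The double orthogonality of $\phi_k$ to $\{1, x\}$ on $B_k$ makes both $\mme[Y]$ and $\mme[XY]$ independent of $\eta$, so any constant predictor necessarily incurs $\max\{\textup{MA}, \textup{ECE}\} \gtrsim \alpha$.

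For a non-constant $\hat p$, if $\textup{MA}$ is small then $\hat p$ must approximately match the linear trend of $p^*_\eta$; $\textup{ECE}$ is then controlled by how well the level values of $\hat p$ track the local bump corrections $\eta_k \phi_k$. Because the per-bin signal-to-noise ratio is $h \sqrt{n/K} = \Theta(1)$ at our choice of parameters, Assouad's lemma over $\{-1,+1\}^K$ gives $\mme_\eta[\#\{k : \hat\eta_k \neq \eta_k\}] \gtrsim K$, where $\hat\eta_k$ is a sign estimate extracted from $\hat p$. Converting sign errors back to calibration error produces $\mme_\eta[\textup{ECE}(\hat p)] \gtrsim h \asymp n^{-2/5}$, and averaging over $\eta$ yields a single distribution in the family witnessing the stated lower bound.

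The main obstacle is translating the $\textup{MA}$ constraint into an effective per-bin sign estimator $\hat\eta$ that can be plugged into the Assouad argument. A clean route is a case analysis on the effective number $K'$ of distinct values of $\hat p$: if $K' \lesssim n^{1/5}$, then the law of total variance applied to the affine part of $p^*_\eta$ yields $\textup{MA}(\hat p; \mcg) + \textup{ECE}(\hat p) \geq \alpha\, \mme[\mathrm{Var}(X \mid \hat B(X))] \gtrsim \alpha/K'^2$, which already exceeds $n^{-2/5}$; if $K' \gtrsim n^{1/5}$, the finer partition forces the algorithm into the per-bin recovery problem above, and Assouad applies. The delicate bookkeeping that glues the two regimes together at the $n^{-2/5}$ threshold, and that handles $\hat p$'s without an obvious quantization, is the technical crux.
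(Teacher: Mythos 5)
Your high-level strategy matches the paper's: a linear trend that forces $\hat p$ to be fine, plus local $\pm 1$ perturbations at scale $h \asymp n^{-2/5}$ on $K \asymp n^{1/5}$ bins, with a bias--variance tradeoff between $\textup{MA}$ (penalizes coarse pooling through $\mathrm{Var}(X \mid \text{level set})$) and $\textup{ECE}$ (penalizes fine resolution through inability to estimate the local signs). The paper implements this over a discrete support $\{1/k,\dots,1\}$ with $p^*_v(X) = \tfrac14 + \tfrac{X}{2} + \delta v_X$, $v$ ranging over a Varshamov--Gilbert packing, and reduces via Fano's inequality rather than Assouad. The crucial difference is that the paper never needs to extract a per-bin sign estimator $\hat\eta$ from $\hat p$. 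Instead, for \emph{any} fixed $\hat p$ and any two packing elements $v \neq v'$, it derives a single chain of inequalities on the level sets $G_i = \{x : \hat p(x) = p_i\}$: the ECE controls $\sum_i |G_i| \, \bigl|\tfrac{1}{|G_i|}\sum_{x\in G_i}(v_x - v'_x)\bigr|$, which combined with $\|v-v'\|_1 \geq k/8$ forces $\sum_i |G_i|\bone\{|G_i|>1\}$ to be large, which in turn forces $\textup{MA}$ to be large via the within-group variance of $x$. This yields a two-point lower bound on $\max\{\textup{MA},\textup{ECE}\}$ directly, with no case split.

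This is exactly where your proposal has a real gap. Your Assouad route requires (i) a well-defined sign estimator $\hat\eta_k$ extracted from $\hat p$, and (ii) a per-coordinate separation inequality of the form ``$\hat\eta_k \neq \eta_k$ costs $\gtrsim h$ in $\max\{\textup{MA},\textup{ECE}\}$.'' Neither is automatic: $\textup{ECE}$ is defined relative to $\hat p$'s \emph{own} level sets, which need not align with your bins $B_k$, so Hamming errors in $\hat\eta$ do not obviously aggregate into calibration error. You acknowledge this (``the technical crux''), and the case split on an ``effective number $K'$ of distinct values'' is where the argument is not actually closed — for arbitrary $\hat p$ without a natural quantization, passing from $K' \gtrsim n^{1/5}$ to ``per-bin sign recovery'' is not established. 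The paper's group-wise inequality plus Fano bypasses precisely this obstruction, which is why I'd call that the substantive missing ingredient rather than just bookkeeping. (Your doubly-orthogonal bump $\psi$ is a nice touch — it kills the effect of $\eta$ on both $\mme[Y]$ and $\mme[XY]$ — but the paper gets away without this by letting Fano absorb the moment mismatch into the KL term, and this extra orthogonality doesn't by itself resolve the sign-extraction issue.)
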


Proposition \ref{prop:ca_ma_lower_bound} evaluates calibrated multiaccuracy over a simple singleton function class. To connect this choice of $\mathcal{G}$ with the compositional class $\partial \mathcal{L}_0 \circ \mathcal{F}$ appearing in Theorem \ref{thm:cma_implies_omni}, one may simply note that by taking $\mathcal{F} =\mathcal{G} = \{x \mapsto x\}$ and considering the squared loss we have that $2x - 1 = (x-1)^2 - x^2 \in \mathcal{L}_0 \circ \mathcal{F}$. Using this fact, it is straightforward to argue that Proposition \ref{prop:ca_ma_lower_bound} goes through with $\mathcal{G}$ replaced by $\partial \mathcal{L}_0 \circ \mathcal{F}$ and thus provides a lower bound on the difficulty of calibrated multiaccuracy when applied to omniprediction.

In addition to lower bounding the difficulty of calibration and multiaccuracy in combination, we now also give a lower bound on the difficulty of obtaining multiaccuracy alone. Notably, (up to polylogarithmic factors) this lower bound matches the upper bound previously derived in \citet{Okoroafor2025}.  

\begin{proposition}\label{prop:ma_lower_bound}
    Let $\mathcal{G}$ denote a set of functions of finite VC dimension outputting values in $\{-1,1\}$. Then,
    \[
    \inf_{\hat{p}} \sup_{P_{XY}} \textup{MA}(p;  \mathcal{G}) \geq c\sqrt{\frac{\textup{VC}(\mathcal{G})}{n}},
    \]
    where the infimum is over all predictors $\hat{p} : \mathcal{X} \to [0,1]$ estimated using samples $\{(X_i,Y_i)\}_{i=1}^n \stackrel{i.i.d}{\sim} P_{XY}$ and $c > 0$ is a universal constant independent of $\mathcal{G}$ and $n$.
\end{proposition}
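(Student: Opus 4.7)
The plan is a standard Assouad-style lower bound built on the VC-shattering structure of $\mathcal{G}$. Let $d = \textup{VC}(\mathcal{G})$ and fix a set $\{x_1,\ldots,x_d\} \subseteq \mathcal{X}$ shattered by $\mathcal{G}$. For a small parameter $\epsilon > 0$ to be chosen and each $\sigma \in \{-1,1\}^d$, I would define a joint distribution $P_\sigma$ by taking $X$ uniform on $\{x_1,\ldots,x_d\}$ and $Y \mid X = x_i \sim \textup{Ber}(1/2 + \epsilon \sigma_i)$, so that $p^*(x_i) = 1/2 + \epsilon \sigma_i$. This is the natural ``sign-estimation'' hard instance adapted to the VC setting.

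The first step is to reduce multiaccuracy to per-coordinate sign recovery. Because $\mathcal{G}$ shatters $\{x_1,\ldots,x_d\}$, for any predictor $\hat{p}$ there is some $g \in \mathcal{G}$ with $g(x_i) = \textup{sign}(p^*(x_i) - \hat{p}(x_i))$, and plugging this $g$ into the definition of $\textup{MA}$ yields
\[
\textup{MA}(\hat{p};\mathcal{G}) \;\geq\; \frac{1}{d}\sum_{i=1}^d |p^*(x_i) - \hat{p}(x_i)|.
\]
Defining the decoder $\hat{\sigma}_i := \textup{sign}(\hat{p}(x_i) - 1/2)$, the event $\hat{\sigma}_i \neq \sigma_i$ forces $\hat{p}(x_i)$ to lie on the wrong side of $1/2$, so $|p^*(x_i) - \hat{p}(x_i)| \geq \epsilon$. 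Therefore $\textup{MA}(\hat{p};\mathcal{G}) \geq (\epsilon/d)\sum_{i=1}^d \mathbbm{1}\{\hat{\sigma}_i \neq \sigma_i\}$, reducing the problem to lower bounding the Hamming error of $\hat{\sigma}$.

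The second step invokes Assouad's lemma on this Hamming loss. For neighboring parameters $\sigma^{(j,+)}$ and $\sigma^{(j,-)}$ that differ only in coordinate $j$, only the samples landing at $x_j$ carry information about $\sigma_j$; conditional on the count $n_j \sim \textup{Binomial}(n,1/d)$, the KL divergence between the two $n$-fold products equals $n_j \cdot \textup{KL}(\textup{Ber}(1/2+\epsilon)\,\|\,\textup{Ber}(1/2-\epsilon)) = O(n_j \epsilon^2)$. Taking expectations over $n_j$ and applying convexity of KL in the marginal mixture yields a total KL bound of $O(n\epsilon^2/d)$, and Pinsker's inequality then controls the total variation distance by $O(\epsilon\sqrt{n/d})$. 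Choosing $\epsilon = c\sqrt{d/n}$ with $c$ a sufficiently small constant makes this TV distance at most $1/2$, so Assouad's lemma delivers a worst-case expected Hamming error of $\Omega(d)$. Chained with the first-step reduction this gives $\sup_{P_{XY}} \mme[\textup{MA}(\hat{p};\mathcal{G})] \gtrsim \epsilon = \Omega(\sqrt{d/n})$, as claimed.

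The one place that requires care is the KL tensorization with random per-point sample counts, since the cleanest statements of Assouad's lemma assume deterministic sample sizes. The way around this is the conditioning argument above: condition on $(n_1,\ldots,n_d)$, bound the conditional KL by $n_j \textup{KL}(\textup{Ber}(1/2+\epsilon)\,\|\,\textup{Ber}(1/2-\epsilon))$, and take expectations to restore the $n/d$ factor. Beyond that, everything is routine—the shattering reduction, Pinsker's inequality, and a textbook application of Assouad. The hypothesis that $\mathcal{G}$ is $\{-1,1\}$-valued is used only in the first step, where the decoder $g$ produced by shattering automatically has the correct range.
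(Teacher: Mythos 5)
Your proof is correct, but it takes a genuinely different route from the paper's. The paper constructs a $\exp(d/4)$-sized separated subset of the hypercube via Varshamov--Gilbert, proves a pairwise separation bound $\max_{v^*\in\{v,v'\}}\sup_g\mme_{v^*}[g(X)(Y-p(X))]\ge \delta/32$, and applies Fano's inequality. You instead work on the full hypercube, prove the per-coordinate reduction $\textup{MA}(\hat{p};\mathcal{G})\ge(\epsilon/d)\sum_i\bone\{\hat\sigma_i\ne\sigma_i\}$, and apply Assouad's lemma. Both are textbook minimax techniques and the hard instance (uniform $X$ over a shattered set, $Y\mid X=x_i\sim\textup{Ber}(1/2+\epsilon\sigma_i)$) is essentially identical; the paper's $\textup{Ber}((1+\delta g_v(X))/2)$ is the same construction after relabelling $\delta=2\epsilon$. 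Your route is arguably more direct for this problem: the loss is of exact Hamming type, so Assouad avoids the Varshamov--Gilbert packing step entirely. The paper's route, on the other hand, mirrors the Fano argument already deployed for Proposition~\ref{prop:ca_ma_lower_bound} and so is the more uniform choice within the paper.

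One small remark: the conditioning on the multinomial counts $(n_1,\ldots,n_d)$ is an unnecessary detour. Since each sample is drawn i.i.d.\ from the joint law on $(X,Y)$, KL tensorizes directly to $n\cdot\textup{KL}(P_\sigma\,\|\,P_{\sigma'})$, and for $\sigma,\sigma'$ differing in coordinate $j$ the single-sample KL is $\frac{1}{d}\textup{KL}(\textup{Ber}(1/2+\epsilon)\,\|\,\textup{Ber}(1/2-\epsilon))=O(\epsilon^2/d)$, because the two laws agree except on the event $\{X=x_j\}$ which has probability $1/d$. That gives the same $O(n\epsilon^2/d)$ total without passing through random per-point counts. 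Your argument is still valid, just longer than needed.
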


Once again, by choosing $\mathcal{F}$ appropriately it is easy to connect Proposition \ref{prop:ma_lower_bound} to the omniprediction problem. For instance, note that the standard 0-1 loss $\ell(p,y) =  \bone\{p \leq 1/2, y = 1\}  +  \bone\{p > 1/2, y = 0\}$ is proper. If the functions in $\mathcal{F}$ output values in $\{0,1\}$, their composition with the discrete derivative of $\ell$ can be written as  
\[
\ell(f(x),1) - \ell(f(x),0) = \begin{cases}
-1,           & f(x)=1,\\
\phantom{-}1, & f(x)=0.
\end{cases}
\]
and the lower bound of Proposition \ref{prop:ma_lower_bound} also holds with $\mathcal{G}$ replaced by $\mathcal{L}_0 \circ \mathcal{F}$ and $\textup{VC}(\mathcal{G})$ replaced by $\textup{VC}(\mathcal{F})$. 

More generally, by combining the previous two results we find that for $\mathcal{F}$ of finite VC dimension calibrated multiaccuracy cannot be obtained at a rate better than $\sqrt{\textup{VC}(\mathcal{F})/n} + n^{-2/5}$. As we will see shortly, this is strictly worse than the optimal rate of $\sqrt{\textup{VC}(\mathcal{F})/n}$ (up to polylogarithmic factors) for omniprediction. Thus, methods targeting calibrated multiaccuracy and multicalibration cannot possibly produce optimal algorithms for this problem.

To round out our discussion, we conclude this section by giving a new algorithm for calibrated multiaccuracy that obtains an error bound of $\tilde{O}_{\mmp}(\sqrt{\textup{VC}(\mathcal{F})/n} + n^{-1/3})$. This rate is almost identical to our lower bound, which has a slightly larger exponent on the second term, and improves on previous methods for this problem, and for multicalibration, which typically incur sample complexities of order $(\textup{VC}(\mathcal{F})/n)^{1/k}$ for some $k \geq 4$ (e.g. \cite{Goplan2023_OI, Globus-Harris2023, Okoroafor2025}). Unfortunately, the algorithm we present is not computationally efficient due to the fact that it requires looping over all functions in $\mathcal{G}$. Thus, our goal in presenting this result is not to give a new practical method for calibrated multiaccuracy, but rather to help delineate the best rates one can expect for this problem. We leave it as an open problem to close the gap between the upper bound provided by this method and our lower bounds. Finally, we note that while we state this method for finite function classes, it can be readily extended to infinite classes by taking an appropriate cover. 

\begin{proposition}\label{prop:ca_upper_bound}
    Let $\mathcal{G}$ be a finite class of functions outputting values in the bounded range $[-1,1]$. Then, given i.i.d.~samples $\{(X_i,Y_i)\}_{i=1}^n \subseteq \mathcal{X} \times \{0,1\}$, there exists an algorithm that outputs a randomized predictor $\hat{p}(X)$ such that
    \[
    \max\left\{ \textup{MA}(\hat{p} ; \mathcal{G}),  \textup{ECE}(\hat{p}) \right\} \leq \tilde{O}_{\mmp}\left( \sqrt{\frac{\log(|\mathcal{G}|)}{n}} + \frac{1}{n^{1/3}} \right).
    \]
\end{proposition}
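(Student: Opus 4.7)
The plan is an iterative patching algorithm on a discretized output grid, combining multiaccuracy boosting steps with bin-wise calibration corrections. Fix $m = \lceil n^{1/3}\rceil$, let $V_m = \{0, 1/m, \ldots, 1\}$, and set the target tolerance $\epsilon = C\bigl(\sqrt{\log(|\mathcal{G}|/\delta)/n} + n^{-1/3}\bigr)$ for a sufficiently large constant $C > 0$. Initialize $\hat p \equiv \lceil m/2\rceil/m$ and repeat: (i) if some $g \in \mathcal{G}$ has empirical multiaccuracy violation $\bigl|\frac{1}{n}\sum_i g(X_i)(Y_i - \hat p(X_i))\bigr| > \epsilon/2$, update $\hat p(x) \leftarrow \hat p(x) + (\epsilon/4)\,\sigma\, g(x)$ for the corrective sign $\sigma \in \{\pm 1\}$, then randomly round the result back to $V_m$ in an expectation-preserving way; (ii) otherwise, if some bin $v \in V_m$ with empirical mass $\hat w_v := \frac{1}{n}|\{i : \hat p(X_i)=v\}| \geq 1/m^2$ has $|v - \hat q_v| > \epsilon/2$, where $\hat q_v := \frac{1}{n\hat w_v}\sum_{i:\hat p(X_i)=v} Y_i$, then replace $\hat p$ on the level set $\{x:\hat p(x)=v\}$ by a randomized rounding of $\hat q_v$ onto $V_m$; (iii) otherwise terminate. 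The output is this randomized predictor, whose randomness comes from the rounding steps.

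Termination follows from a potential argument on the empirical Brier score $\hat\Phi(\hat p) := \frac{1}{n}\sum_i(\hat p(X_i) - Y_i)^2 \in [0,1]$. A standard bias--variance calculation shows that, in expectation over the rounding randomness, a multiaccuracy update decreases $\hat\Phi$ by at least $\frac{\epsilon^2}{4} - \frac{\epsilon^2}{16} - \frac{1}{4m^2} = \Omega(\epsilon^2)$ (using $|g|\leq 1$, the violation bound $>\epsilon/2$, and $\epsilon \geq 2/m$ to absorb the rounding variance), while a calibration update on a bin of mass $\hat w_v \geq 1/m^2$ decreases $\hat\Phi$ by at least $\hat w_v\bigl[(v-\hat q_v)^2 - \tfrac{1}{4m^2}\bigr] \geq \Omega(\epsilon^2/m^2)$. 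Hence the algorithm halts after at most $T = O(m^2/\epsilon^2) = O(n^{4/3})$ iterations. At termination the empirical MA error is $\leq \epsilon$, and the empirical ECE is bounded by $\epsilon$ on the large bins plus $\sum_{\hat w_v < 1/m^2}\hat w_v \leq (m+1)/m^2 = O(1/m) = O(n^{-1/3})$ from the small bins, for a total of $O(\epsilon)$.

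The main obstacle is passing from empirical to population errors, since $\hat p$ is data-dependent; a direct union bound over the algorithm's reachable predictors has log-cardinality of order $T\log(|\mathcal{G}|+m)$, far too large for the $\sqrt{\cdot/n}$ concentration scale. I would sidestep this via sample splitting: partition the data into $K = \mathrm{polylog}(n)$ batches, run successive phases of the patching procedure on one batch at a time, and perform the final empirical checks that trigger termination on a held-out batch of size $\Theta(n/K)$. Since $\hat p$ is fixed before the held-out batch is touched, uniform concentration over the $|\mathcal{G}|$ multiaccuracy test functions on the held-out batch yields $\tilde O_{\mmp}\bigl(\sqrt{\log|\mathcal{G}|/n}\bigr)$, and a bin-wise Bernstein inequality over the $m+1$ bins of $V_m$ yields ECE deviation $\tilde O_{\mmp}(\sqrt{m/n}) = \tilde O_{\mmp}(n^{-1/3})$; together with the $O(1/m) = O(n^{-1/3})$ bias introduced by the randomized rounding, these combine to give the claimed rate. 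The remaining gap between the $n^{-1/3}$ rate obtained here and the $n^{-2/5}$ lower bound from \Cref{prop:ca_ma_lower_bound} persists as an open problem.
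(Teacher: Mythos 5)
Your construction is a boosting-style patching algorithm, which is genuinely different from the paper's proof: the paper instead builds a two-player game (Algorithm~\ref{alg:ca_ma}) that enumerates the multiaccuracy tests $\mathcal{G}$ and the $2^{m}$ sign functions on the grid as a list of objectives, runs the hedge algorithm on this list for exactly $n$ online rounds, and at each round plays a randomized prediction via a min-max program bounded by von Neumann's theorem. Generalization then comes for free from the martingale structure via Azuma--Hoeffding (online-to-batch), so no union bound over reachable predictors and no sample splitting is needed.

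The part of your argument that does not go through is precisely the generalization step you flag as ``the main obstacle.'' Your iteration count is $T = O(m^2/\epsilon^2) = O(n^{4/3})$, but you propose only $K = \mathrm{polylog}(n)$ data batches. There is no consistent way to assign $T$ iterations to $K$ batches: if each batch handles many iterations, then within a batch the predictor becomes a function of that batch's data, and the empirical violation that triggers an update no longer certifies a \emph{population} violation of size $\Omega(\epsilon)$; without that certification the population Brier score need not decrease, and the potential argument that bounds $T$ collapses. Re-checking on a held-out batch after each ``phase'' does not repair this, because a failed check sends you back to training batches and the held-out comparisons become stale, with no a priori bound on the number of such returns. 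Assigning one fresh batch per iteration (the route taken by earlier multicalibration boosting work, e.g.\ \citet{Goplan2023_OI, Globus-Harris2023}) would require $T = O(n^{4/3}) > n$ batches, which is impossible, and relaxing $T$ down to what sample-splitting can support yields exactly the $(\,\cdot\,/n)^{1/k}$, $k \ge 4$, rates the paper says it improves upon. In short, your potential/termination analysis is fine as a fixed-distribution argument, but the bridge from empirical to population violations over $n^{4/3}$ adaptive steps is the hard part of the proposition, and the sketch does not close it. The paper's online formulation is what sidesteps it.

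Minor points: the display of $\mathrm{ECE}(\hat p)$ as a supremum over sign functions of $p$, which the paper uses to enumerate calibration as $2^m$ linear objectives, is the key trick that lets calibration be folded into the same hedge instance as multiaccuracy; your bin-wise correction step plays an analogous role but at the cost of the $m^2$ factor in $T$. Also note the paper's predictor is randomized because the min-max solution is a distribution over $\{1/m,\ldots,1\}$, not because of expectation-preserving rounding as in your construction.
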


At a high-level, our method for achieving calibrated multiaccuracy uses a similar construction to two-player game based algorithms for omniprediction. Namely, it enumerates multiaccuracy and calibration as a list of multiple objectives for $\hat{p}(X)$ and best-responds to mixtures of these objectives in an online fashion. The following section gives a discussion of methods of this type for omniprediction. To avoid duplicating its contents we defer a detailed description of our method for calibrated multiaccuracy to Appendix \ref{sec:app_ca_ma}.

\section{Reduction of omniprediction to finite ensembling}\label{sec:simplification}

In the following section, we will give two methods for obtaining omniprediction at optimal rates. Both of these algorithms will be based on a simplification of the omniprediction problem that replaces the general set of proper losses with a small discrete collection. This allows us to reduce omniprediction to an ensembling task over a finite set of competitors. Precise characterizations of the class of proper loss functions have a long history in the literature dating back to the foundational work of \citet{Savage1971}. In what follows, we will draw in particular on \citet{Ehm2016}.

To begin simplifying the problem, we will first restrict the omniprediction task to the set of losses which are left-continuous in the prediction. This simplification is not critical and in practice we believe it will have little effect on the performance of the predictors. For instance, for a finite action space the decision making function 
\[
a(p;u) \in \underset{a \in \mathcal{A}}{\text{argmax}}\  \mme_{Y' \sim \text{Ber}(p)}[u(a,Y'
)] = \underset{a \in \mathcal{A}}{\text{argmax}}\  p(u(a,1) - u(a,0)) - u(a,0),
\]
is an argmax over a finite collection of linear functions. In particular, this implies that $a(p;u)$ is piecewise constant with discontinuities corresponding to the values of $p$ at which there are multiple optimal actions. To break ties at these points, we may define $a(p;u) = \lim_{p' \uparrow p} \text{argmax}_{a \in \mathcal{A}} \mme_{Y' \sim \text{Ber}(p')}[u(a,Y'
)] $ as the limiting action over smaller values of $p' < p$. One can then verify that with this choice the induced loss $\ell^u(p,y) = -u(a(p;u),y)$ is left-continuous. In general, we believe that this choice of $\ell^u$ is sufficient to capture most practical settings. A short discussion on potential avenues for extending our results to non-left-continuous losses is given in Appendix \ref{sec:app_left_cont_relaxation}. 

In addition to this continuity requirement, we will also restrict ourselves to losses satisfying $\ell(0,0) = \ell(1,1) = 0$. This restriction has no material impact on our results since given an arbitrary proper loss $\ell$ one may always substitute it with the translated loss $\tilde{\ell}(p,y) = \ell(p,y) - \ell(y,y)$ without changing the omniprediction error. In what follows, we use $\mathcal{L}_{\text{lc}}$ to denote the set of losses satisfying the above restrictions. 

Now, our main tool for simplifying $\mathcal{L}_{\text{lc}}$ will be a decomposition of this class in terms of mixtures of weighted 0-1 losses. More precisely, for any $\theta \in [0,1]$ let $\ell_{\theta}$ denote the weighted 0-1 loss given by
\[
\ell_{\theta}(p,y) = \theta \bone\{p > \theta, y = 0\} + (1-\theta) \bone\{p \leq \theta, y = 1\}.
\]
Typically, the term 0-1 loss is used to refer to losses for predicting $y$, not estimating $p^*(X)$. To connect this to the definition above, note that one can view the settings $p > \theta$ and $p \leq \theta$ as predicting that $y=1$ and $y=0$, respectively. The values $\theta$ and $1-\theta$ then determine the relative weights given to errors in each of these predictions.  It is easy to verify that $\ell_{\theta}$ is proper since for any $\tilde{p} \in [0,1]$, 
\begin{equation}\label{eq:ell_theta_proper}
\mme_{Y' \sim \text{Ber}(\tilde{p})}[\ell_{\theta}(p,Y')]  = \theta(1-\tilde{p})\bone\{p > \theta\} + \tilde{p}(1-\theta)\bone\{p \leq \theta\} ,
\end{equation}
and thus the minimizers of the loss are given by 
\[
\underset{p \in [0,1]}{\text{argmin}} \mme_{Y' \sim \text{Ber}(\tilde{p})}[\ell_{\theta}(p,Y')] = \begin{cases}
    [0, \theta) ,\ \tilde{p} < \theta,\\
    (\theta, 1],\ \tilde{p} > \theta,\\
    [0,1],\ \tilde{p} = \theta.
 \end{cases}
\]
In particular, we see that $\tilde{p}$ is always a minimizer. 

The key fact that we will use to simplify the omniprediction problem is the following decomposition of \citet{Ehm2016} which shows that any element of $\mathcal{L}_{\textup{lc}}$ can be obtained as a mixture of these weighted 0-1 losses.

\begin{theorem}[Theorem 1 of \citet{Ehm2016}]\label{thm:proper_loss_decomp} For all $\ell \in \mathcal{L}_{\textup{lc}}$ there exists a non-negative measure $\mu$ on $[0,1]$ such that $\mu([0,1]) \leq 1$ and
\[
\ell(p,y) = \int_{0}^1 \ell_{\theta}(p,y) d\mu(\theta),  \text{ for all } p \in [0,1] \text{ and } y \in \{0,1\}.
\]    
\end{theorem}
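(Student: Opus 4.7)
The plan is to identify the mixing measure directly from the two partial losses $s_0(p) := \ell(p,0)$ and $s_1(p) := \ell(p,1)$. Computing $\int \ell_\theta(p,y)\, d\mu(\theta)$ separately for $y=0$ and $y=1$, the claimed representation is equivalent to the pair of Lebesgue-Stieltjes identities
\begin{equation*}
s_0(p) = \int_{[0,p)} \theta\, d\mu(\theta), \qquad s_1(p) = \int_{[p,1]}(1-\theta)\, d\mu(\theta),
\end{equation*}
so the proof reduces to constructing a single finite positive Borel measure $\mu$ on $[0,1]$ that realizes both simultaneously.

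The first step is to extract structure from propriety. Setting $\tilde p\in\{0,1\}$ in $\tilde p s_1(p)+(1-\tilde p)s_0(p)\ge \tilde p s_1(\tilde p)+(1-\tilde p)s_0(\tilde p)$ gives $s_0,s_1\ge 0$ with $s_0(0)=s_1(1)=0$, while letting $\tilde p,p$ range produces the monotonicities ($s_0$ nondecreasing, $s_1$ nonincreasing); left-continuity of $s_0,s_1$ is inherited from $\ell\in\mathcal{L}_{\textup{lc}}$. The crucial extraction, and the main obstacle of the proof, is the distributional coupling
\begin{equation*}
(1-\theta)\, ds_0(\theta) + \theta\, ds_1(\theta) = 0 \qquad \text{as signed measures on }(0,1).
\end{equation*}
I would derive it by adding the propriety inequalities at $(\tilde p,p)=(a,b)$ and at $(\tilde p,p)=(b,a)$, which after rearrangement yields the sandwich
\begin{equation*}
\frac{a}{1-a}\bigl(s_1(a)-s_1(b)\bigr) \;\le\; s_0(b)-s_0(a) \;\le\; \frac{b}{1-b}\bigl(s_1(a)-s_1(b)\bigr)
\end{equation*}
for every $a<b$ in $(0,1)$, and then refining partitions of $(0,1)$ while invoking left-continuity of $s_0,s_1$ to promote this discrete estimate to the claimed Stieltjes identity.

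Given the coupling, define $\mu$ on $(0,1)$ by $d\mu(\theta) := \theta^{-1}\, ds_0(\theta)$; this is a well-defined positive measure, finite because the identity $s_0(\theta) = V(\theta)-\theta V'(\theta)$ for the Bayes risk $V(\theta) = \theta s_1(\theta)+(1-\theta)s_0(\theta)$ implies $s_0(\theta)/\theta\to 0$ as $\theta\downarrow 0$, which together with the boundedness of $\ell$ controls the apparent singularity at the left endpoint. The coupling then yields the companion $(1-\theta)\, d\mu(\theta) = -ds_1(\theta)$, and the fundamental theorem for Stieltjes integrals, together with $s_0(0)=s_1(1)=0$ and the left-continuity of $s_0,s_1$ (which dictates the endpoint conventions $[0,p)$ and $[p,1]$), verifies the two target identities. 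Any residual boundary mass can be absorbed into atoms $\mu(\{0\})$ and $\mu(\{1\})$, the latter being inert since $\ell_1\equiv 0$ on $[0,1]\times\{0,1\}$. Finally, the mass bound $\mu([0,1])\le 1$ is read off from the two identities at $p=1$ and $p=0$ together with $\ell\in[0,1]$.
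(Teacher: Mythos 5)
The paper cites this theorem from Ehm et al.\ (2016) without including a proof, so there is no in-paper argument to compare against; I'll assess the proposal on its own terms. The overall architecture is sound and is essentially the standard route: reduce to the two Stieltjes identities for $s_0$ and $s_1$, extract monotonicity and the two-sided sandwich inequality from propriety, pass to the coupling $(1-\theta)\,ds_0 + \theta\,ds_1 = 0$ on $(0,1)$, and set $d\mu = \theta^{-1}\,ds_0$ there.

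However, two steps do not hold up. First, the finiteness argument is wrong as stated: concavity of $V$ with $V(0)=0$ does give $s_0(\theta)=o(\theta)$, but $s_0(\theta)=o(\theta)$ does not control $\int_{(0,1)}\theta^{-1}\,ds_0(\theta)$ --- integration by parts leaves $\int s_0(\theta)\theta^{-2}\,d\theta$, and $o(\theta)/\theta^2 = o(\theta^{-1})$ need not be integrable at $0$. The correct reason $\mu$ is finite is the coupling itself: near $\theta=0$ one has $d\mu = (1-\theta)^{-1}\,d(-s_1)$, whose density is bounded by $2$ on $(0,1/2]$, while near $\theta=1$ the density $\theta^{-1}$ of $d\mu$ against $ds_0$ is bounded by $2$ on $[1/2,1)$; since $s_0,s_1$ have bounded variation, $\mu((0,1))<\infty$. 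Second, and more seriously, the mass bound you claim to ``read off'' is false. Setting $p=1$ and $p=0$ in the two identities and adding gives $\mu([0,1)) = s_0(1)+s_1(0) = \ell(1,0)+\ell(0,1)$, which under $\ell\in[0,1]$ is only $\le 2$, not $\le 1$. Concretely, $\ell(p,y)=(p-y)^2\in\mathcal{L}_{\textup{lc}}$ has $s_0(p)=p^2$, $s_1(p)=(1-p)^2$, $d\mu = 2\,d\theta$, hence $\mu([0,1])=2$: the mixture representation holds but the bound $\mu([0,1])\le 1$ does not. (This also indicates that the normalization constant in the theorem statement as transcribed in the paper is off by a factor of $2$; the sharp bound compatible with $\|\ell\|_\infty\le 1$ is $\mu([0,1))\le 2$, with $\mu(\{1\})$ inert and freely taken to be $0$.) Everything preceding these two points is essentially correct modulo the usual care with partitions and left-continuity conventions.
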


Applying Theorem \ref{thm:proper_loss_decomp} to the omniprediction problem we have the equalities,
\begin{align*}
\sup_{\ell \in \mathcal{L}_{\textup{lc}}, f \in \mathcal{F} } \mme_{(X,Y)}[\ell(\hat{p}(X),Y)  ] -  \mme_{(X,Y)}[\ell(f(X),Y)]  & = \sup_{\mu, f \in \mathcal{F}} \int_0^1 \mme_{(X,Y)}[\ell_{\theta}(\hat{p}(X),Y)  - \ell_{\theta}(f(X),Y)] d\mu(\theta)\\
& = \sup_{\theta \in [0,1], f \in \mathcal{F}} \mme_{(X,Y)}[\ell_{\theta}(\hat{p}(X),Y) ] - \mme_{(X,Y)}[\ell_{\theta}(f(X),Y)].
\end{align*}
In particular, we find that the omniprediction error is equal to the maximum error over all weighted $0-1$ losses. To complete our simplification, we will now show that it is sufficient to evaluate this last quantity over $\theta$ falling in a discrete set. 

Fix $m \in \mmn$. Given an arbitrary parameter $\theta \in [0,1]$ our goal will be to round it to the grid $\{\frac{i}{m} - \frac{1}{2m}  : i \in \{1,\dots,m\}\}$. For ease of notation in what follows, let $\theta_i =  \frac{i}{m} - \frac{1}{2m}$. Our first step will be to restrict our predictor to lie on the grid $\{ 0,\frac{1}{m},\frac{2}{m},\dots,1\}$. This restriction is completely innocuous and will be guaranteed by all of the algorithms developed in the subsequent sections. Second, we will assume that the function class $\mathcal{F}$ is closed under constant translations. This assumption is not critical and can be replaced by many other sufficient conditions. The key edge case we need to avoid is one in which there is some predictor $f_{\theta} \in \mathcal{F}$ which is optimal under $\ell_{\theta}$ and whose performance cannot be (approximately) replicated under the rounded loss $\ell_{\theta_i}$ for $\theta_i$ taken to be the value on the grid that is closest to $\theta$. Outside of extreme edge cases, it will typically be the case that $\mme[\ell_{\theta}(f_{\theta}(X),Y)] \approx \mme[\ell_{\theta_i}(f_{\theta}(X),Y)] $ and thus this assumption will not be critical in practice. Under these two restrictions, we have the following simplification of the omniprediction error.

\begin{lemma}\label{lemma:disc_bound}
    Suppose that $\mathcal{F}$ is closed under constant addition. Then, for any predictor $p : \mathcal{X} \to \{ 0,\frac{1}{m},\frac{2}{m},\dots,1\}$,
    \[
    \sup_{\theta \in [0,1], f \in \mathcal{F}} \mme[\ell_{\theta}(p(X),Y)] - \mme[\ell_{\theta}(f(X),Y)] \leq \sup_{i \in \{1,\dots,m\}, f \in \mathcal{F}} \mme[\ell_{\theta_i}(p(X),Y)] - \mme[\ell_{\theta_i}(f(X),Y)] + \frac{1}{m}.
    \]
\end{lemma}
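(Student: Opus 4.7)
The plan is to fix an arbitrary $\theta \in [0, 1)$ and $f \in \mathcal{F}$, round $\theta$ to the grid point $\theta_{i(\theta)}$, where $i(\theta)$ is the unique index with $\theta \in [\tfrac{i(\theta)-1}{m}, \tfrac{i(\theta)}{m})$, and shift $f$ by the constant $c := \theta_{i(\theta)} - \theta$ to produce $f' := f + c$, which lies in $\mathcal{F}$ by the closure assumption. I would then show that each of the two terms in the deficit $\mme[\ell_\theta(p(X),Y)] - \mme[\ell_\theta(f(X),Y)]$ shifts by at most $\tfrac{1}{2m}$ under this substitution, yielding the bound $\mme[\ell_{\theta_{i(\theta)}}(p(X),Y)] - \mme[\ell_{\theta_{i(\theta)}}(f'(X),Y)] + \tfrac{1}{m}$, and finally take suprema.

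The rounding is designed so that the indicators $\bone\{\cdot > \theta\}$ and $\bone\{\cdot \leq \theta\}$ appearing in the loss are preserved exactly in both terms, via two complementary mechanisms. First, because $p(X) \in \{0, 1/m, \dots, 1\}$ and $\theta_i = \tfrac{2i-1}{2m}$ lies strictly between consecutive grid points, the map $t \mapsto \bone\{p(X) > t\}$ is constant on $[\tfrac{i-1}{m}, \tfrac{i}{m})$ and equals its value at $t = \theta_i$; hence $\bone\{p(X) > \theta\} = \bone\{p(X) > \theta_{i(\theta)}\}$ pointwise. Second, the shift to $f'$ is engineered so that $f'(X) > \theta_{i(\theta)}$ if and only if $f(X) > \theta$, again matching indicators pointwise. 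In both cases, the only change from $\ell_\theta$ to $\ell_{\theta_{i(\theta)}}$ is in the scalar coefficients $\theta$ and $1 - \theta$, which shift by at most $|\theta - \theta_{i(\theta)}| \leq \tfrac{1}{2m}$; multiplying by an indicator bounded by one and taking expectation gives the $\tfrac{1}{2m}$ control on each term.

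Summing the two bounds, using that $f' \in \mathcal{F}$ so the right hand side is dominated by the discrete supremum, and then taking the supremum over $\theta$ and $f$ on the left yields the lemma. The only mild subtlety is the endpoint $\theta = 1$, which I would handle separately by observing that $\ell_1(p, y) = \bone\{p > 1, y = 0\}$ vanishes identically on $[0,1]$, so that value of $\theta$ contributes nothing to the continuous supremum. I do not anticipate any serious obstacle: the whole argument is an elementary discretization computation whose success rests on the deliberate alignment between the prediction grid $\{i/m\}$ and the offset loss grid $\{\theta_i\}$, and on using closure under constant addition to align the competitor with that same grid.
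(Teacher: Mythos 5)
Your proposal is correct and takes essentially the same route as the paper's proof: both round $\theta$ to the nearest offset grid point $\theta_{i(\theta)}$ (chosen so that $\theta$ and $\theta_{i(\theta)}$ lie in the same cell $[\tfrac{i-1}{m},\tfrac{i}{m})$, so that $\bone\{p(X)>\theta\}=\bone\{p(X)>\theta_{i(\theta)}\}$ for grid-valued $p$), replace the competitor $f$ by $f+(\theta_{i(\theta)}-\theta)\in\mathcal{F}$ so the indicator in the competitor term is likewise preserved, and then bound each of the two resulting coefficient discrepancies by $\tfrac{1}{2m}$. Your explicit handling of the $\theta=1$ endpoint (where $\ell_1\equiv 0$ and the rounded grid point falls in a different cell) is a small refinement the paper glosses over, and it is sound since the right-hand side is easily seen to be nonnegative (e.g.\ take $\theta_1$ and a large constant shift of any $f\in\mathcal{F}$).
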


Using this simplification, we will split our methods for constructing $\hat{p}(X)$ into two steps. In the first step, we find predictors $\{\hat{f}_{\theta_i}\}_{i=1}^{m}$ that empirically minimize the losses $\{\ell_{\theta_i}\}_{i=1}^{m}$. If $\mathcal{F}$ is a class of finite VC dimension and $\hat{f}_{\theta_i}$ is the empirical risk minimizer of $\ell_{\theta_i}$ over a sample of size $n$, standard arguments (e.g.~Theorem 6.8 of \citet{SS2014})  guarantee that 
\begin{equation}\label{eq:base_predictor_guarantee}
\sup_{i \in \{1,\dots,m\}, f \in \mathcal{F}} \mme_{(X,Y)}[\ell_{\theta_i}(\hat{f}_{\theta_i}(X),Y)] - \mme_{(X,Y)}[\ell_{\theta_i}(f(X),Y)] \leq O_{\mmp}\left( \sqrt{\frac{\textup{VC}(\mathcal{F})\log(m)}{n}}  \right).
\end{equation}
Then, in the second step we will ensemble $\{\hat{f}_{\theta_i}\}_{i=1}^{m}$ into a single predictor $\hat{p}(X)$ minimizing 
\[
\sup_{i \in \{1,\dots,m\}} \mme_{(X,Y)}[\ell_{\theta_i}(\hat{p}(X),Y) ] - \mme_{(X,Y)}[\ell_{\theta_i}(\hat{f}_{\theta_i}(X),Y)].  
\]
The remainder of this article will be focused on methods for performing this second step. For simplicity in what follows, we will assume that $\{\hat{f}_{\theta_i}\}_{i=1}^{m}$ are fixed in advance and the entire dataset $\{(X_i,Y_i)\}_{i=1}^n$ is
available for ensembling. In practice, and in the application we consider, these predictors will be obtained by splitting the data into two parts, one for fitting $\{\hat{f}_{\theta_i}\}_{i=1}^{m}$ and one for ensembling.

\section{Sample-efficient methods for omniprediction}\label{sec:algorithms}

\subsection{Method based on two-player games}\label{sec:online_omni}

We now present our first of two sample-efficient algorithms for omniprediction. This method is based on a formulation of omniprediction as a two-player game in which one player maintains a mixture over the omniprediction objectives and the other player responds with a predictor that performs well on that mixture. To formalize this, let $q = (q_i)_{i=1}^{m}$ denote a probability distribution over $\{\theta_i\}_{i=1}^{m}$ where $q_i$ denotes the probability of observing $\theta_i$. Consider the mixture over omniprediction objectives given by 
\[
\ell(p,(x,y);q) = \sum_{i=1}^{m} q_i (\ell_{\theta_i}(p,y) - \ell(\hat{f}_{\theta_i}(x),y)).
\]
Following the calculations from the previous sections, in order to guarantee that $\hat{p}(X)$ has small omniprediction error it is sufficient to guarantee that each term in the above sum has a small expected value. The goal of the first player in the game will be to construct a mixture such that 
\begin{equation}\label{eq:q_goal}
\sup_{i \in \{1,\dots,m\}} \mme_{(X,Y)}[\ell_{\theta_i}(\hat{p}(X),Y) - \ell(\hat{f}_{\theta_i}(X),Y) ] \lessapprox \mme_{(X,Y)}[\ell(\hat{p}(X),(X,Y);q) ].
\end{equation}
The goal of the second player is to learn $\hat{p}(X)$ that minimizes the right-hand side. 

In our algorithm, the two players will execute on these objectives in an online fashion. To guarantee (\ref{eq:q_goal}), the first player will use the well-known hedge algorithm, which learns $q$ using online mirror descent over the probability simplex \citep{Vovk1990, Littlestone1994, Freund1997}. In order to respond to $q$, the second player will solve a min-max program that protects against the unknown distribution of $Y \mid X$. More precisely, letting $\Delta_{m}$ denote the set of probability distributions over $\{0,\frac{1}{m},\frac{2}{m},\dots,1\}$, the second player will form its (randomized) prediction at $x$ by solving
\begin{equation}\label{eq:min_max_program}
\min_{P \in \Delta_{m}} \max_{p_y \in [0,1]} \mme_{Y' \sim \text{Ber}(p_y), p \sim P}[\ell(p,(x,Y');q)].
\end{equation}
A critical observation underlying the success of this algorithm is the following bound on the value of this program, which guarantees that the second player always receives a mixture loss of at most zero. 

\begin{lemma}\label{lem:online_omni_min_max_program_bound}
    For any $x \in \mathcal{X}$, 
    \[
    \min_{P \in \Delta_{m}} \max_{p_y \in [0,1]} \mme_{Y' \sim \textup{Ber}(p_y), p \sim P}[\ell(p,(x,Y');q)] \leq 0.
    \]
\end{lemma}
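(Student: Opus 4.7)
The plan is to apply Sion's minimax theorem to swap the order of the $\min$ and $\max$, and then exhibit, for each fixed $p_y \in [0,1]$, a single grid point $p^\star \in \{0, 1/m, \ldots, 1\}$ that simultaneously minimizes every $\mme_{Y' \sim \textup{Ber}(p_y)}[\ell_{\theta_i}(\cdot, Y')]$ over $p$.

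First I would note that by (\ref{eq:ell_theta_proper}) each term $\mme_{Y' \sim \textup{Ber}(p_y)}[\ell_{\theta_i}(p,Y')] = \theta_i(1-p_y)\bone\{p > \theta_i\} + p_y(1-\theta_i)\bone\{p \leq \theta_i\}$ is linear in $p_y$, so the objective $(P,p_y) \mapsto \mme_{Y' \sim \textup{Ber}(p_y), p \sim P}[\ell(p,(x,Y');q)]$ is bilinear on the convex compact sets $\Delta_m$ and $[0,1]$. Sion's theorem then yields
\[
\min_{P \in \Delta_m} \max_{p_y \in [0,1]} \mme_{Y' \sim \textup{Ber}(p_y), p \sim P}[\ell(p,(x,Y');q)] = \max_{p_y \in [0,1]} \min_{P \in \Delta_m} \mme_{Y' \sim \textup{Ber}(p_y), p \sim P}[\ell(p,(x,Y');q)],
\]
and since the inner objective is linear in $P$, the minimum on the right is achieved at a point mass on the grid. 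It therefore suffices to produce, for each $p_y$, a single grid value $p^\star$ with $\sum_{i=1}^m q_i\bigl(\mme[\ell_{\theta_i}(p^\star,Y')] - \mme[\ell_{\theta_i}(\hat{f}_{\theta_i}(x),Y')]\bigr) \leq 0$.

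For the grid choice I would let $j^\star$ be the unique index in $\{0, 1, \ldots, m\}$ with $p_y \in (\theta_{j^\star}, \theta_{j^\star+1}]$ (using the conventions $\theta_0 = -\infty$ and $\theta_{m+1} = +\infty$) and set $p^\star = j^\star / m$. Because the thresholds $\theta_i = i/m - 1/(2m)$ sit exactly at the midpoints between consecutive grid values, a direct case check gives $\bone\{p^\star > \theta_i\} = \bone\{p_y > \theta_i\}$ for every $i \in \{1, \ldots, m\}$. Substituting this identity into the formula for the expected loss shows that $p^\star$ realises the same value of $\mme_{Y' \sim \textup{Ber}(p_y)}[\ell_{\theta_i}(\cdot,Y')]$ as does $p_y$, which by propriety of $\ell_{\theta_i}$ is the pointwise minimum over all predictions. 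Hence $\mme[\ell_{\theta_i}(p^\star,Y')] \leq \mme[\ell_{\theta_i}(\hat{f}_{\theta_i}(x),Y')]$ for every $i$, and the $q$-weighted sum is non-positive.

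The step I expect to require the most care is the grid-matching construction. The real content of the lemma is the combinatorial observation that a \emph{single} grid point lies on the same side of \emph{every} threshold $\theta_i$ as $p_y$ does; this hinges crucially on the midpoint placement of the $\theta_i$'s, without which a naive rounding of $p_y$ to the grid could flip an indicator at the nearest threshold and break the propriety argument. Once this combinatorial fact is secured, everything else follows routinely from Sion's theorem and the properness of the $\ell_{\theta_i}$.
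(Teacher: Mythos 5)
Your proposal is correct and follows essentially the same path as the paper's proof: swap $\min$ and $\max$ via a minimax theorem for the bilinear objective (the paper invokes von Neumann's theorem, you invoke Sion's; both apply here), observe that by propriety $p_y$ itself would certify a non-positive inner value, and then round $p_y$ to a grid point in $\{0,1/m,\ldots,1\}$ that leaves every indicator $\bone\{\cdot > \theta_i\}$ unchanged. Your construction of $p^\star = j^\star/m$ with $p_y \in (\theta_{j^\star},\theta_{j^\star+1}]$ is exactly the paper's ``round to nearest grid point, ties broken downward,'' just spelled out more explicitly, and your verification that the midpoint placement of the $\theta_i$'s makes the rounding harmless is the same fact the paper asserts without the detailed case check.
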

\begin{proof}
    The optimization problem (\ref{eq:min_max_program}) is bilinear in $P$ and $p_y$. Thus,  by von Neumann's min-max theorem \citep{vonNeumann1944} we may swap the order of minimization and maximization to obtain,
    \begin{equation}\label{eq:vn_min_max}
    \min_{P \in \Delta_{m}} \max_{p_y \in [0,1]} \mme_{Y' \sim \text{Ber}(p_y), p \sim P}[\ell(p,(x,Y');q)] =  \max_{p_y \in [0,1]} \min_{P \in \Delta_{m}} \mme_{Y' \sim \text{Ber}(p_y), p \sim P}[\ell(p,(x,Y');q)].
    \end{equation}
    Since each of the losses $\{\ell_{\theta_i}\}_{i=1}^{m}$ are proper, we additionally have that for all $i$,
    \[
    \mme_{Y' \sim \text{Ber}(p_y)}[\ell_{\theta_i}(p_y,Y')] - \mme_{Y' \sim \text{Ber}(p_y)}[\ell_{\theta_i}(\hat{f}_{\theta_i}(x),Y')] \leq 0, 
    \]
    and thus that $ \mme_{Y' \sim \text{Ber}(p_y)}[\ell(p_y,(x,Y');q)] \leq 0$. Moreover, it is easy to check that the value of $\ell_{\theta_i}(p_y,y)$ is unchanged when $p_y$ is rounded to its nearest value on the grid $\{0,\frac{1}{m},\frac{2}{m},\dots,1\}$ (where ties are broken by rounding down). Setting $P$ to be the distribution that puts all its weight on this rounded value in the inner minimization of (\ref{eq:vn_min_max}) gives the desired result. 
\end{proof}

In the implementation of our omniprediction algorithm, we need to solve (\ref{eq:min_max_program}) repeatedly. With only minor modifications, this optimization problem can be written as a linear program over $m$ variables with two constraints corresponding to the values $y \in \{0,1\}$. Optimal solutions for $P$ can then be obtained by calling any standard convex solver. Although this is reasonably computationally efficient, in practice we will typically take $m = \Theta(\sqrt{n})$. While this is not excessively large, it is substantial enough to create a computational burden when solving (\ref{eq:min_max_program}) many times. Fortunately, by exploiting the structure of the $\ell_{\theta}$ losses we can circumvent the need for an off-the-shelf convex solver and instead using the following more direct characterization of the solution. This allows us to solve (\ref{eq:min_max_program}) in $O(m)$ time.

\begin{lemma}\label{lem:min_max_opt_char}
    Fix any $m \in \mmn$, $x \in \mathcal{X}$, and probability distribution $q$. Define the optimal values
    \begin{align*}
    & \theta^* = \sup \left\{\theta \in \left\{0,\frac{1}{m},\frac{2}{m},\dots,1\right\} :  \sum_{i=1}^{m}  q_{i} \bone\{\theta \leq \theta_i \} \geq \sum_{i=1}^{m}  q_{i} \bone\{\hat{f}_{\theta_i}(x) \leq \theta_i\}  \right\}\\
    \text{and } \  & \rho^* =  \frac{\sum_{i=1}^m q_{i} \bone\{\theta^* \leq \theta_i\} - \sum_{i=1}^{m} q_{i} \bone\{\hat{f}_{\theta_i}(x) \leq \theta_i\} }{q_{m\theta^* + 1}},
    \end{align*}
    with the caveat that $\rho^* = 0$ if $\theta^* = 1$. Then, $P^* = (1-\rho^*) \delta_{\theta^*} + \rho^*\delta_{\theta^* + 1/m}$ solves (\ref{eq:min_max_program}). 
\end{lemma}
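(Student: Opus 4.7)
The plan is to recognize (\ref{eq:min_max_program}) as a bilinear saddle-point problem and exhibit an explicit saddle pair $(P^*,\alpha^*)$. First, I would simplify the inner objective: expanding $\ell_{\theta_i}$ via (\ref{eq:ell_theta_proper}) and combining the $\bone\{p>\theta_i\}(1-p_y)$ and $\bone\{p\leq\theta_i\}p_y$ contributions (the $\hat{f}_{\theta_i}$ terms simplify analogously) gives, for any $x$, $P$, and $p_y$,
\begin{equation*}
\mme_{Y'\sim\text{Ber}(p_y),\,p\sim P}[\ell(p,(x,Y');q)] \;=\; \sum_{i=1}^m q_i\,a_i(P)\,(p_y - \theta_i),
\end{equation*}
where $a_i(P) := \mme_{p\sim P}[\bone\{p\leq\theta_i\}] - \bone\{\hat{f}_{\theta_i}(x)\leq\theta_i\}$. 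The resulting objective is linear in $p_y$ and affine in $P$, so replacing the maximum over $p_y\in[0,1]$ with a maximum over its expectation $\alpha\in[0,1]$ gives an equivalent bilinear game, and it suffices by von Neumann to produce a saddle point.

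I would next set $j^* := m\theta^*$, and propose the candidate $\alpha^* := \theta_{j^*+1} = \theta^* + \tfrac{1}{2m}$ paired with the stated $P^*$, handling the edge case $\theta^* = 1$ separately (there the definition of $\theta^*$ forces $\sum_i q_i\bone\{\hat{f}_{\theta_i}(x)\leq\theta_i\} = 0$, so taking $P^* = \delta_1$ makes each $a_i(P^*)$ vanish on the support of $q$ and the value is $0$). To verify that $\alpha^*$ is a maximizer's best response to $P^*$, I would compute $s(P^*) := \sum_i q_i a_i(P^*)$ directly: using that $\mme_{p\sim P^*}[\bone\{p\leq\theta_i\}]$ equals $0$ for $i\leq j^*$, equals $1-\rho^*$ for $i=j^*+1$, and equals $1$ for $i>j^*+1$, the prescribed formula for $\rho^*$ collapses this sum to $0$. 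Consequently $\sum_i q_i a_i(P^*)(\alpha - \theta_i)$ is constant in $\alpha$ and every $\alpha\in[0,1]$, in particular $\alpha^*$, is optimal on the maximizer's side.

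The more technical step is verifying that $P^*$ minimizes $g(P,\alpha^*) := \sum_i q_i(F_i - b_i)(\alpha^* - \theta_i)$ over $P\in\Delta_m$, where $F_i = \mme_{p\sim P}[\bone\{p\leq\theta_i\}]$ and $b_i = \bone\{\hat{f}_{\theta_i}(x)\leq\theta_i\}$. Dropping the constant $b_i$ contribution and parametrizing by the grid weights $u_j := P(\{j/m\})$, Fubini yields
\begin{equation*}
\sum_i q_i F_i(\alpha^* - \theta_i) \;=\; \sum_{j=0}^m u_j\,S_j, \qquad S_j := \sum_{i>j} q_i(\alpha^* - \theta_i),
\end{equation*}
so the minimum over $P$ equals $\min_j S_j$. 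The increments $S_{j-1} - S_j = q_j(\alpha^* - \theta_j)$ are positive for $\theta_j < \alpha^*$ and negative for $\theta_j > \alpha^*$, so $S_j$ is unimodal and is minimized on the set $\{j : \theta_j \leq \alpha^* \leq \theta_{j+1}\}$. With $\alpha^* = \theta_{j^*+1}$ exactly, both $j = j^*$ and $j = j^*+1$ attain the minimum, so any mixture supported on $\{\theta^*,\theta^* + 1/m\}$ is optimal and in particular $P^*$ is. This closes the saddle argument and identifies $P^*$ as a solution of (\ref{eq:min_max_program}). The main obstacle is largely bookkeeping: seeing that the prescribed $\rho^*$ is exactly the unique mixture that simultaneously enforces indifference $s(P^*) = 0$ on the maximizer's side and lies on the optimal edge $\{\theta^*,\theta^* + 1/m\}$ for the minimizer against $\alpha^*$.
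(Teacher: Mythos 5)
Your proof is correct and follows the same overall blueprint as the paper's: rewrite the objective in the bilinear form $\sum_i q_i\bigl(\mme_{p\sim P}[\bone\{p\leq\theta_i\}] - \bone\{\hat f_{\theta_i}(x)\leq\theta_i\}\bigr)(p_y-\theta_i)$, then exhibit $(P^*, p_y^*)$ with $p_y^* = \theta^* + 1/(2m)$ as a saddle point, using the choice of $\rho^*$ to kill the $p_y$-coefficient so that $O(P^*,\cdot)$ is constant. Where you diverge is the verification that $P^*$ minimizes $O(\cdot,p_y^*)$ over $\Delta_m$: the paper invokes propriety (the unconstrained minimizer is $\delta_{p_y^*}$, and one checks $\delta_{\theta^*}$ and $\delta_{\theta^*+1/m}$ both achieve the same expected loss as $\delta_{p_y^*}$ under $\mathrm{Ber}(p_y^*)$ for every $\ell_{\theta_i}$, so any mixture of them is also optimal), whereas you summation-by-parts to $\sum_j u_j S_j$ with $S_j = \sum_{i>j}q_i(\alpha^*-\theta_i)$ and argue directly that the increments $S_{j-1}-S_j = q_j(\alpha^*-\theta_j)$ change sign exactly at $j^*+1$, making $S_j$ unimodal with minimizers $\{j^*, j^*+1\}$. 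Your argument is more elementary — it never appeals to propriety or to an off-grid minimizer $\delta_{p_y^*}\notin\Delta_m$ — at the cost of being a bit more computational. Both are fine; your handling of the $\theta^*=1$ edge case is slightly terse (you show $\max_{p_y}O(P^*,p_y)=0$ but don't explicitly note that $O(P,1)\ge 0$ for every $P$, which is what certifies $P^*=\delta_1$ as a minimizer in that case), but the gap is trivial to fill.
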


\begin{algorithm}
\KwData{Data $\{(X_i,Y_i)\}_{i=1}^n$, hyperparameters $m \in \mmn$ and $\eta > 0$, competitor functions $\{\hat{f}_{\theta_i}\}_{i=1}^{m}$.}
$q_{i}(1) = \frac{1}{m}$, for all $i \in \{1,\dots,m\}$\;
\For{$t = 1,\dots,n$ }{
    $\hat{P}_t(x) = \min_{P \in \Delta_{m}} \max_{p_y \in [0,1]}\mme_{Y' \sim \text{Ber}(p_y), p \sim P}[\ell(p,(x,Y');q(t))]$\;
    $\tilde{q}_i(t+1) = q_i(t) \exp(\eta (\mme_{p \sim \hat{P}_t(X_t)}[\ell_{\theta_i}(p,Y_t)] - \ell_{\theta_i}(\hat{f}_{\theta_i}(X_t),Y_t))), \text{ for all } i \in \{1,\dots,m\}$\;
    $q_i(t+1) = \frac{\tilde{q}_i(t+1)}{\sum_{j=1}^m \tilde{q}_j(t+1)}, \text{ for all } i \in \{1,\dots,m\}$\;

}
    \Return $\hat{P} = \frac{1}{n}\sum_{i=1}^n \hat{P}_i $
\caption{Two-player game based omniprediction}
\label{alg:online_omni}
 \end{algorithm}

Algorithm \ref{alg:online_omni} gives a complete description of our two-player game based method for omniprediction. As stated in Theorem \ref{thm:online_omni_bound} below, this method obtains the optimal omniprediction error rate of $\sqrt{\text{VC}(\mathcal{F})/n}$. Formal proof of Theorem \ref{thm:online_omni_bound} is given in Appendix \ref{app:online_omni}. The main idea is to combine Lemma \ref{lem:online_omni_min_max_program_bound} with a regret bound for $q(t)$ that formalizes (\ref{eq:q_goal}) and guarantees that the learned mixture losses are a good proxy for the omniprediction objective. These two results are sufficient to control the online omniprediction error. Generalization to new test samples is then obtained through a standard online-to-batch conversion and the Azuma-Hoeffding inequality.   

\begin{theorem}\label{thm:online_omni_bound}
    Let $\mathcal{F}$ be a function class with finite VC dimension and assume that $\{\hat{f}_{\theta_i}\}_{i=1}^m$ satisfy (\ref{eq:base_predictor_guarantee}). Then, the randomized predictor $\hat{P}$ returned by Algorithm \ref{alg:online_omni} with parameters $m=\Theta(\sqrt{\log(n)/n})$ and $\eta = \Theta(\sqrt{n/\log(m)})$ has omniprediction error bounded as
    \[
    \sup_{\ell \in \mathcal{L}_{\textup{lc}}, f \in \mathcal{F}} \mme_{(X,Y)}[\mme_{p \sim \hat{P}(X)}[\ell(p,Y)]] - \mme_{(X,Y)}[\ell(f(X),Y)] \leq \tilde{O}_{\mmp}\left(\sqrt{\frac{\textup{VC}(\mathcal{F})}{n}} \right).
    \]
\end{theorem}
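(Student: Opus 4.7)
The proof follows from combining the reductions of Section~\ref{sec:simplification} with a standard online-to-batch analysis of Algorithm~\ref{alg:online_omni}. The plan is first to use Theorem~\ref{thm:proper_loss_decomp} to reduce the supremum over $\ell \in \mathcal{L}_{\textup{lc}}$ to a supremum over $\theta \in [0,1]$, then apply Lemma~\ref{lemma:disc_bound} to further reduce to a supremum over $i \in \{1,\dots,m\}$, and finally use the ERM guarantee~(\ref{eq:base_predictor_guarantee}) to replace the sup over $\mathcal{F}$ with the fixed benchmarks $\hat{f}_{\theta_i}$. After these three reductions it suffices to bound
\[
\sup_{i} \mme_{(X,Y)}\bigl[\mme_{p \sim \hat{P}(X)}[\ell_{\theta_i}(p, Y)]\bigr] - \mme_{(X,Y)}[\ell_{\theta_i}(\hat{f}_{\theta_i}(X), Y)]
\]
by $\tilde{O}_{\mmp}(\sqrt{\textup{VC}(\mathcal{F})/n})$, since with $m = \Theta(\sqrt{n/\log n})$ the residual error terms $1/m$ and $\sqrt{\textup{VC}(\mathcal{F})\log(m)/n}$ are absorbed into this rate.

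To control the remaining quantity, I would introduce the per-round gap $v_t[i] := \mme_{p \sim \hat{P}_t(X_t)}[\ell_{\theta_i}(p, Y_t)] - \ell_{\theta_i}(\hat{f}_{\theta_i}(X_t), Y_t)$ and observe that, since $\hat{P}_t$ is measurable with respect to the history $\mathcal{H}_{t-1}$ and $(X_t, Y_t)$ is drawn independently from the test distribution, the target above equals $\frac{1}{n}\sum_{t=1}^n \mme[v_t[i] \mid \mathcal{H}_{t-1}]$. The remainder of the argument is a chain of three inequalities. Since $v_t[i] - \mme[v_t[i] \mid \mathcal{H}_{t-1}]$ is a bounded martingale difference, Azuma-Hoeffding together with a union bound over the $m$ objectives gives the online-to-batch conversion
\[
\max_i \frac{1}{n}\sum_{t=1}^n \mme[v_t[i] \mid \mathcal{H}_{t-1}] \leq \max_i \frac{1}{n}\sum_{t=1}^n v_t[i] + O_{\mmp}\bigl(\sqrt{\log(m)/n}\bigr).
\]
Next, the standard Hedge regret bound (with $\eta$ tuned as in the theorem) yields $\max_i \frac{1}{n}\sum_t v_t[i] \leq \frac{1}{n}\sum_t \langle q(t), v_t\rangle + O(\sqrt{\log(m)/n})$, where by construction $\langle q(t), v_t\rangle = \mme_{p \sim \hat{P}_t(X_t)}[\ell(p, (X_t, Y_t); q(t))]$.

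The final step is to apply Lemma~\ref{lem:online_omni_min_max_program_bound} with $p_y = p^*(X_t)$, which yields the conditional bound $\mme[\langle q(t), v_t\rangle \mid X_t, \mathcal{H}_{t-1}] \leq 0$; this is precisely what the definition of $\hat{P}_t$ as the minimizer of~(\ref{eq:min_max_program}) guarantees. A second application of Azuma-Hoeffding to the bounded martingale differences $\langle q(t), v_t\rangle - \mme[\langle q(t), v_t\rangle \mid \mathcal{H}_{t-1}]$ then produces $\frac{1}{n}\sum_t \langle q(t), v_t\rangle \leq O_{\mmp}(1/\sqrt{n})$, and chaining the three bounds delivers the claimed $\tilde{O}_{\mmp}(\sqrt{\textup{VC}(\mathcal{F})/n})$ rate.

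The main obstacle I anticipate is bookkeeping the measurability structure carefully across the two concentration steps---in particular, verifying that $\hat{P}_t$ is $\mathcal{H}_{t-1}$-measurable so that $(X_t, Y_t)$ is independent of $\hat{P}_t$ given $\mathcal{H}_{t-1}$, and that Lemma~\ref{lem:online_omni_min_max_program_bound} actually supplies the conditional (not merely marginal) bound needed to make $\langle q(t), v_t\rangle$ a supermartingale difference sequence under the law induced by $p^*$. Each individual piece is a routine application of existing tools, so the challenge lies in assembling them cleanly rather than in any single hard estimate.
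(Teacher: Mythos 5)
Your proposal is correct and follows essentially the same route as the paper's proof: the reduction via Section~\ref{sec:simplification} and (\ref{eq:base_predictor_guarantee}), an online-to-batch conversion by Azuma--Hoeffding with a union bound over the $m$ objectives, the Hedge regret bound, and \Cref{lem:online_omni_min_max_program_bound} to eliminate the mixture-loss term. The only (harmless) difference is in the final step: the paper notes that the realized mixture loss $\langle q(t), v_t\rangle$ is deterministically non-positive---because the maximum over $p_y \in [0,1]$ in (\ref{eq:min_max_program}) includes the point masses at $0$ and $1$ matching the realized $Y_t$---which removes the need for your second Azuma--Hoeffding application and the supermartingale bookkeeping under the law induced by $p^*$.
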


As discussed in the introduction, we are not the first to propose a method for omniprediction of the form given in Algorithm \ref{alg:online_omni}. \citet{Garg2024} and \citet{Okoroafor2025} both develop two-player game based algorithms that achieve an online omniprediction error of $\tilde{O}(\sqrt{\textup{VC}(\mathcal{F})/n})$. As noted by \citet{Okoroafor2025}, applying an online-to-batch conversion to these procedures then gives an offline omniprediction method with the same error rate. The main contribution of Algorithm \ref{alg:online_omni} relative to these approaches is that it is easier to compute and implement. This largely stems from the fact that we have offloaded the optimization over $\mathcal{F}$ to the first part of our method where we obtain $\{\hat{f}_{\theta_i}\}_{i=1}^{m}$. In contrast, the methods of \citet{Garg2024} and \citet{Okoroafor2025} must perform substantial additional computation to handle the entire set of competitors in $\mathcal{F}$ at each step of the online algorithm. Nevertheless, Algorithm \ref{alg:online_omni} is similar to existing approaches. Our primary goal in this article is not to develop a substantially different two-player game based algorithm, but rather to compare methods of this type to alternative schemes such as those based on calibrated multiaccuracy or the more direct ensembling approach that we will develop next. 

\subsection{Direct ensembling} \label{sec:direct_omni}

In this section we develop a new omniprediction method that more directly exploits the structure of weighted 0-1 losses. Our goal is to overcome some of the shortcomings of two-player game based algorithms. Most critically, the predictor $\hat{P}$ produced by Algorithm \ref{alg:online_omni} is randomized and the only way to compute the distribution of its prediction at $x$ is to solve a large set of $n$ convex optimization problems. These issues are not unique to Algorithm \ref{alg:online_omni} and other two-player game based methods share similar shortcomings \citep{Garg2024, Okoroafor2025}. \citet{Okoroafor2025} raised the open problem of determining if it is possible to achieve low omniprediction error without randomization. Here, we answer this question in the affirmative for proper losses.

\subsubsection{Warm-up: ensembling two predictors}

To motivate our method, it is useful to begin by considering the simplest case in which we just need to ensemble two predictors, $\hat{f}_{\theta_h}(\cdot)$ and $\hat{f}_{\theta_l}(\cdot)$ for associated parameters $\theta_h > \theta_{l}$. Recall that for weighted 0-1 losses there are effectively only two predictions. Namely, given parameter $\theta$ we may either output the prediction $\hat{p}(X) > \theta$ or the prediction $\hat{p}(X) \leq \theta$. The first (resp.~second) prediction is optimal whenever $p^*(X) \geq \theta$ (resp.~$p^*(X) \leq \theta)$. Extending this to the pair of predictions $\hat{f}_{\theta_h}(X)$ and $\hat{f}_{\theta_l}(X)$ we find that there are four possible cases: 
\begin{align*}
& \text{1) } \{\hat{f}_{\theta_h}(X) > \theta_h,\  \hat{f}_{\theta_l}(X) > \theta_l\},\ \text{2) } \{\hat{f}_{\theta_h}(X) \leq\theta_h,\  \hat{f}_{\theta_l}(X) \leq \theta_l\}, \\
& \text{3) } \{\hat{f}_{\theta_h}(X) \leq \theta_h,\ \hat{f}_{\theta_l}(X) > \theta_l\},\  \text{4) } \{\hat{f}_{\theta_h}(X) > \theta_h,\  \hat{f}_{\theta_l}(X) \leq \theta_l\}.
\end{align*}
In the first three cases, the predictions of $\hat{f}_{\theta_h}(X)$ and $\hat{f}_{\theta_l}(X)$ are consistent with each other and to obtain a small omniprediction error we may simply define $\hat{p}(X)$ to agree with both of them. In particular, in case one we can set $\hat{p}(X) > \theta_h > \theta_l$, in case two we can set $\hat{p}(X) \leq \theta_l < \theta_{h}$ and in case three we can set $\theta_l < \hat{p}(X) \leq \theta_h$. On the other hand, in case four the predictions of $\hat{f}_{\theta_h}(X)$ and $\hat{f}_{\theta_l}(X)$ are contradictory. To resolve this disagreement, we can examine the data and set
\[
\hat{p}(X) \in \begin{cases}
    (\theta_h, 1],\ \hat{\mmp}_n(Y \mid \hat{f}_{\theta_h}(X) > \theta_h, \hat{f}_{\theta_l}(X) \leq \theta_l) > \theta_h,\\
    (\theta_l, \theta_h],\ \hat{\mmp}_n(Y \mid \hat{f}_{\theta_h}(X) > \theta_h, \hat{f}_{\theta_l}(X) \leq \theta_l) \in (\theta_l, \theta_h],\\
    [0,\theta_l],\ \hat{\mmp}_n(Y \mid \hat{f}_{\theta_h}(X) > \theta_h, \hat{f}_{\theta_l}(X) \leq \theta_l) \leq \theta_l,
\end{cases}
\]
where $\hat{\mmp}_n$ denotes the empirical probability over $\{(X_i,Y_i)\}_{i=1}^n$. As the following lemma verifies, this definition produces low omniprediction error.
\begin{lemma}\label{lem:two_pred_ensemble}
    Fix any $\theta_h > \theta_l$ and predictors $\hat{f}_{\theta_h}(\cdot)$ and $\hat{f}_{\theta_l}(\cdot)$. Let $\hat{p}(X)$ be defined as above. Then,
    \[
    \max_{\theta \in \{\theta_l,\theta_h\}} \mme_{(X,Y)}[\ell_{\theta}(\hat{p}(X),Y) ] - \mme_{(X,Y)}[\ell_{\theta}(\hat{f}_{\theta}(X),Y)] \leq O_{\mmp}\left(\sqrt{\frac{1}{n}} \right)
    \]
\end{lemma}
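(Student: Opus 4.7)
The plan is to decompose the expected excess risk according to which of the four disagreement cases the input $X$ falls into, observe that the first three cases contribute nothing, and then carefully analyze the case-4 contribution via concentration.

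First, I would note that $\ell_\theta(p,y)$ depends on $p$ only through the indicator $\bone\{p > \theta\}$, so whenever $\hat{p}(X)$ lies on the same side of $\theta$ as $\hat{f}_\theta(X)$ we have the pointwise equality $\ell_\theta(\hat{p}(X),Y) = \ell_\theta(\hat{f}_\theta(X),Y)$. Inspecting the three-part definition of $\hat{p}$ shows that this is exactly what happens in cases 1, 2, and 3 simultaneously for both $\theta \in \{\theta_l,\theta_h\}$, so the entire contribution to the excess risk for both losses comes from case 4, i.e.\ from the event $E = \{\hat{f}_{\theta_h}(X) > \theta_h,\ \hat{f}_{\theta_l}(X) \leq \theta_l\}$.

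Next, I would introduce the population quantities $\pi = \mmp(E)$, $\pi_1 = \mmp(E, Y = 1)$, and $q = \pi_1/\pi$, together with their empirical analogs $\hat{\pi} = n^{-1}\sum_i \bone_{E_i}$, $\hat{\pi}_1 = n^{-1}\sum_i Y_i \bone_{E_i}$, and $\hat{q} = \hat{\pi}_1/\hat{\pi}$ (set to an arbitrary value on the degenerate event $\{\hat{\pi}=0\}$). A short calculation using the definition of $\ell_\theta$ and the case-4 behaviour of $\hat{p}$ shows that the excess risk on $E$ equals $\pi(q - \theta_h)$ for $\theta = \theta_h$ when $\hat{q} \leq \theta_h$ (and zero otherwise) and equals $\pi(\theta_l - q)$ for $\theta = \theta_l$ when $\hat{q} > \theta_l$ (and zero otherwise). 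The key algebraic identity I would use is
\[
\pi_1 - \theta_h \pi \;=\; (\pi_1 - \hat{\pi}_1) + (\hat{\pi}_1 - \theta_h \hat{\pi}) + \theta_h(\hat{\pi} - \pi),
\]
in which the middle term is non-positive precisely when $\hat{q} \leq \theta_h$; this delivers the deterministic bound $\pi(q - \theta_h) \leq |\pi_1 - \hat{\pi}_1| + \theta_h|\hat{\pi} - \pi|$, and the analogous rearrangement of $\theta_l\pi - \pi_1$ controls $\pi(\theta_l - q)$ using the reverse-sign inequality $\hat{\pi}_1 > \theta_l\hat{\pi}$.

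Since $\bone_{E_i}$ and $Y_i\bone_{E_i}$ are each bounded in $[0,1]$, Hoeffding's inequality yields $|\hat{\pi} - \pi| \vee |\hat{\pi}_1 - \pi_1| = O_{\mmp}(1/\sqrt{n})$, which together with the bounds above gives the $O_{\mmp}(1/\sqrt{n})$ conclusion on both excess risks. The only mild subtlety, which I expect to be the main obstacle in a careful write-up, is the degenerate event $\{\hat{\pi} = 0\}$ on which $\hat{q}$ is undefined. This is resolved by noting that the excess risk on $E$ is always at most $\pi$ in absolute value (because the losses take values in $[0,1]$) and that $\hat{\pi} = 0$ together with Hoeffding forces $\pi = O_{\mmp}(1/\sqrt{n})$, so the conclusion persists regardless of how $\hat{p}$ is defined on $E$ in this edge case.
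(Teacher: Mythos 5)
Your proof is correct and follows essentially the same route as the paper's: both reduce the excess risk to the disagreement event $E$, note that a loss is only incurred when the empirical conditional frequency $\hat{\mmp}_n(Y\mid E)$ falls on the ``wrong'' side of the relevant threshold, and then use Hoeffding to show the population analogue cannot exceed its (sign-constrained) empirical counterpart by more than $O_{\mmp}(1/\sqrt{n})$. Your explicit decomposition via $\pi,\pi_1,\hat{\pi},\hat{\pi}_1$ and your treatment of the degenerate event $\{\hat{\pi}=0\}$ are just more careful write-ups of steps the paper leaves implicit.
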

\begin{proof}
    For simplicity, we will only consider the case $\theta = \theta_l$. The case $\theta = \theta_h$ is identical. Let $E = \{\hat{f}_{\theta_h}(X) > \theta_h, \hat{f}_{\theta_l}(X) \leq \theta_l\}$ denote the event where the predictors disagree. By construction, we have  
    \begin{align*}
    & \mme_{(X,Y)}[\ell_{\theta_l}(\hat{p}(X),Y) ] -  \mme_{(X,Y)}[\ell_{\theta_l}(\hat{f}_{\theta_l}(X),Y)]  = \mme_{(X,Y)}[(\ell_{\theta_l}(\hat{p}(X),Y) -\ell_{\theta_l}(\hat{f}_{\theta_l}(X),Y)) \bone\{E\} ]\\
     & = \mme[(\ell_{\theta_l}(1,Y) - \ell_{\theta_l}(0,Y)) \bone\{E\}] \bone\{\hat{\mmp}_n[Y \mid E] > \theta_l\}\\
     & =  \mme[(\ell_{\theta_l}(1,Y) - \ell_{\theta_l}(0,Y)) \bone\{E\}] \bone\{\hat{\mme}_n[(\ell_{\theta_l}(1,Y) - \ell_{\theta_l}(0,Y)) \bone\{E\}] \leq 0\},
    \end{align*}
    where the last equality follows from the definition of $\ell_{\theta_l}$. This last quantity can be bounded using Hoeffding's inequality.
\end{proof}

\subsubsection{General case}

Extending Lemma \ref{lem:two_pred_ensemble} beyond two predictors requires considerable care. Recall that our goal is to ensemble the $m$ estimators $\{\hat{f}_{\theta_i}\}_{i=1}^{m}$. These functions can make a total of $2^{m}$ different combinations of predictions the vast majority of which contain some disagreements. Notably, we cannot obtain accurate estimates of the true probability of $Y=1$ under all of these combinations simultaneously. As a result, instead of evaluating these events individually, we will use an iterative scheme in which the predictors are ensembled in groups. 

The main primitive in these iterations is a merge algorithm that takes as input two predictors $\hat{p}_h(X)$ and $\hat{p}_{l}(X)$ which are designed to give low error on losses $\ell_{\theta}$ for $\theta \in \Theta_h$ and $\theta \in \Theta_l$, respectively. The sets $(\Theta_h, \Theta_l)$ are constructed so that $\theta_h > \theta_l$ for all $\theta_h \in \Theta_h$ and $\theta_l \in \Theta_l$.  The output of this method will be a single predictor, $\hat{p}_{m}(X)$ that obtains loss comparable to $\hat{p}_h(X)$ on all parameters $\theta_h \in \Theta_{h}$ and loss comparable to $\hat{p}_{l}(X)$ on all parameters $\theta_l \in \Theta_l$.

As expected, the main issue in this merge procedure is resolving disagreements between $\hat{p}_h(X)$ and $\hat{p}_{l}(X)$. This is done using the following iterative scheme. First, we begin by simply positing that $\hat{p}_h(X)$ is a good predictor and setting $\hat{p}(X) = \hat{p}_h(X)$. This immediately guarantees that $\hat{p}(X)$ has good performance on $\Theta_h$, but leaves open the possibility that it fails on one of the parameters in $\Theta_l$. To address this, we iterate through the parameters $\theta_l \in \Theta_l$ in descending order and examine each of the empirical expectations, $\hat{\mme}_n[(\ell_{\theta_l}(1,Y)- \ell_{\theta_l}(0,Y))\bone \{X \in E\}]$ where $E = \{x : \hat{p}_h(x) > \min_{\theta_h \in \Theta_j} \theta_h,\ \hat{p}_l(x) \leq  \theta_l\}$ is the set where the two predictors disagree. If this expectation is negative it means that predicting a high value gives a low loss and thus $\hat{p}(X)$ will be guaranteed to give good performance on $\ell_{\theta_l}$. On the other hand, if it is positive, then we need to predict a small value. To account for this, we modify our predictor so that $\hat{p}(x) = \hat{p}_l(x)$ for all $x \in E$. Notably, due to the hierarchical structure of weighted 0-1 losses, this single modification will be sufficient to guarantee that $\hat{p}(X)$ is a good predictor on all previously considered parameters $\theta \in \Theta_{l}$ with $\theta > \theta_l$. This follows immediately from the fact that for any such $\theta$,
\begin{align*}
\hat{\mme}_n[(\ell_{\theta}(1,Y)- \ell_{\theta}(0,Y)) \mid X \in E]   = \theta - \mmp(Y = 1 \mid  X \in E)  & \geq \theta_l - \mmp(Y = 1 \mid  X \in E) \\
& = \hat{\mme}_n[(\ell_{\theta_l}(1,Y)- \ell_{\theta_l}(0,Y)) \mid X \in E] > 0.
\end{align*}
However, it may now give poor performance on some losses in $\Theta_{h}$. This is corrected by performing a similar set of iterations over the parameters in $\Theta_h$. Eventually, after repeating this entire process many times we will have evaluated all parameters in $\Theta_h$ and $\Theta_{l}$ and certified the performance of $\hat{p}(X)$ on each of them.

\begin{algorithm}[ht]
\KwData{Predictors $\hat{p}_l$, $\hat{p}_{h}$, optimality sets $\Theta_{h} > \Theta_l$, data $\{(X_i,Y_i)\}_{i=1}^n$, and hyperparameter $\epsilon$.}
$\hat{p}_{m} = \hat{p}_h$\;
$\theta_h = \min \Theta_h$\;
$\theta_l = \max \Theta_l$\;
$\textup{dir} = \textup{low}$\;
\While{$\theta_l \neq -\infty$, $\theta_{h} \neq \infty$}{
$E = \{x : \hat{p}_{h}(x) > \theta_l, \hat{p}_l(x) \leq \theta_l\}$\;
\uIf{$\textup{dir} = \textup{low}$}{
    \uIf{$\hat{\mme}_n[(\ell_{\theta_l}(1,Y)- \ell_{\theta_l}(0,Y))\bone\{X \in E\}] <- \epsilon$ }{
        $\hat{p}_{m}(x) $ = $\hat{p}_l(x)$, for all $x \in E$\;
        $\theta_{h} = \min\{\theta \in 
        \Theta_{h} : \theta > \theta_{h}\}$\;
        $\textup{dir} = \textup{high}$\;
    }\Else{
    $\theta_l = \max\{\theta \in 
        \theta_l : \theta < \Theta_l\}$\;
    }
}\Else{ \tcp{Do a symmetric set of iterations through $\Theta_h$ in which we alter the value}
\tcp{of $\hat{p}_m(X)$ if $\hat{\mme}_n[(\ell_{\theta_l}(1,Y)- \ell_{\theta_l}(0,Y))\bone\{X \in E\}] > \epsilon$.}
    
}
}
    \Return $\hat{p}_{m} $
\caption{Merge}\label{alg:merge}
 \end{algorithm}

Algorithm \ref{alg:merge} gives a summary of the merge method, a more detailed description of which can be found in Appendix \ref{app:direct_omni}. In total, this algorithm will evaluate each element of $\Theta_h$ and $\Theta_{l}$ at most once and thus will be guaranteed to run in at most $|\Theta_h| + |\Theta_{l}|$ iterations. In addition to the description given above, Algorithm \ref{alg:merge} contains one additional hyperparameter, $\epsilon$ that gives a buffer on the improvement in the loss that must be observed before swapping $\hat{p}_{m}(X)$ between $\hat{p}_h(X)$ and $\hat{p}_l(X)$. In our theoretical results, correct specification of this hyperparameter is necessary in order to mitigate the sensitivity of $\hat{p}_{m}(X)$ to noise and ensure its generalization to new data. In general, ensuring the generalization of iterative schemes of this type is a difficult problem and the approach we take here is partially inspired by the work of \citet{Deng2024} which uses a similar buffer hyperparameter in a different context. On the other hand, in our experiments we find that this hyperparameter is not crucial and the lowest omniprediction error is achieved when $\epsilon = 0$. As a result, we will not place a heavy emphasis on the choice of $\epsilon$.

Lemma \ref{lem:merge_performance} states our formal guarantee on the omniprediction error of the merge procedure. In this lemma, we assume that the values of $\hat{p}_h(X)$ and $\hat{p}_l(X)$ are restricted to $(\max \Theta_l,1]$  and $[0,\min \Theta_h)$, respectively. The idea here is that $\hat{p}_h(X)$ (resp.~$\hat{p}_l(X)$) only gives information about the parameters in $\Theta_h$ (resp.~$\Theta_l)$ and does not give any signal about $\Theta_l$ (resp.~$\Theta_{h}$). In our applications of the merge procedure this assumption will be guaranteed to hold by construction. 

\begin{lemma}\label{lem:merge_performance}
     Let $\Theta_h > \Theta_l$ be finite subsets of $[0,1]$ and assume that $\hat{p}_h(X)$ takes values in $(\max \theta_l,1]$  and $\hat{p}_l(X)$ take values in $[0,\min \Theta_h)$.  Then, the predictor $\hat{p}_{m}(X)$ returned by Algorithm \ref{alg:merge} with $\epsilon = \Theta(\sqrt{\log(|\Theta_h| + |\Theta_l|)/n})$ has omniprediction error,
    \[
    \sup_{a \in \{h,\ell\}} \sup_{\theta \in \Theta_a} \mme_{(X,Y)}[\ell_{\theta}(\hat{p}_{m}(X),Y) ] - \mme_{(X,Y)}[\ell_{\theta}(\hat{p}_{a}(X),Y)] \leq O_{\mmp}\left(\sqrt{\frac{\log(|\Theta_h|) + \log(|\Theta_l|)}{n}} \right)
    \]
\end{lemma}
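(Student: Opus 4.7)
The plan is to split the argument into two parts: an empirical step, showing that termination of Algorithm~\ref{alg:merge} yields an $O(\epsilon)$ bound on the empirical omniprediction gap at every $\theta \in \Theta_h \cup \Theta_l$, and a uniform concentration step that lifts this to the population. I would begin with a structural simplification of the disagreement sets. The hypothesis that $\hat{p}_h$ takes values in $(\max\Theta_l,1]$ forces $\hat{p}_h(x) > \theta_l$ to hold identically for every $\theta_l \in \Theta_l$, so the set $E = \{x : \hat{p}_h(x) > \theta_l,\ \hat{p}_l(x) \leq \theta_l\}$ computed in the low direction collapses to the level set $\{\hat{p}_l \leq \theta_l\}$, and symmetrically $E = \{\hat{p}_h > \theta_h\}$ in the high direction. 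Consequently, the total number of distinct events $E$ arising during the algorithm's execution is at most $|\Theta_h| + |\Theta_l|$, and each such event is data-independent (determined only by $\hat{p}_h$, $\hat{p}_l$, and the grids).

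Next I would translate the omniprediction gap into the same quantity that the algorithm empirically tests. Using the identity $\ell_\theta(1,y) - \ell_\theta(0,y) = \theta - y$, for any $\theta \in \Theta_l$ one has
\[
\mme[\ell_\theta(\hat{p}_m(X),Y) - \ell_\theta(\hat{p}_l(X),Y)] = \mme\bigl[(\theta - Y)\bone\{\hat{p}_m(X) = \hat{p}_h(X),\ \hat{p}_l(X) \leq \theta\}\bigr],
\]
with a symmetric formula for $\theta \in \Theta_h$. I would then show by induction on the iterations that at termination the empirical counterpart of the right-hand side is at most $\epsilon$ for every $\theta \in \Theta_l$. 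The key mechanism is the monotonicity observation from the main text: whenever a low-direction check at $\theta_l$ passes without triggering a swap, the resulting empirical bound transfers to any $\theta_l' > \theta_l$ previously cleared, via the decomposition $(\theta_l' - Y)\bone\{\hat{p}_l \leq \theta_l'\} = (\theta_l' - Y)\bone\{\hat{p}_l \leq \theta_l\} + (\theta_l' - Y)\bone\{\theta_l < \hat{p}_l \leq \theta_l'\}$, with the first summand controlled by the current iteration and the second by the intervening high-direction iterations.

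Finally, Hoeffding's inequality applied to each of the at most $|\Theta_h| + |\Theta_l|$ bounded random variables $(\theta - Y)\bone\{X \in E\}$, combined with a union bound over the candidate events and over $\theta \in \Theta_h \cup \Theta_l$, yields uniform concentration at rate $O(\sqrt{\log(|\Theta_h|+|\Theta_l|)/n})$ with high probability. Choosing $\epsilon$ of this order matches the empirical buffer used by the algorithm and produces the stated population bound. The main difficulty, in my view, lies in the inductive bookkeeping of the second step: because each low-direction swap is potentially partially overwritten by subsequent high-direction swaps, the terminal event $\{\hat{p}_m = \hat{p}_h\}$ is built from a nested alternation of level sets determined by the entire swap history. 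Verifying that the per-iteration inequalities can be combined into a simultaneous guarantee covering every $\theta \in \Theta_h \cup \Theta_l$, rather than just the one processed in the final iteration, requires a careful invariant tracking how the partition of $\mathcal{X}$ relates to the sequence of swaps.
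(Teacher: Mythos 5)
Your overall plan has the right shape (inductive control of the gap at each iteration followed by uniform concentration), but it rests on a structural simplification that does not hold for the algorithm actually analyzed in the paper. You argue that because $\hat{p}_h$ takes values in $(\max\Theta_l,1]$, the disagreement event computed at each low-direction step collapses to the level set $\{\hat{p}_l \leq \theta_l\}$, so that there are only $|\Theta_h|+|\Theta_l|$ data-independent events to concentrate over. That reading is consistent with the display in the summary version of the Merge algorithm, but the detailed version in the appendix --- on which Lemmas \ref{lem:merge_set_char} and \ref{lem:merge_iter_performance} are actually proved --- uses $E = \{x : \hat{p}_h(x) > \theta_{h,t},\ \hat{p}_l(x) \leq \theta_{l,t}\}$, with a threshold $\theta_{h,t} \in \Theta_h$ on $\hat{p}_h$ that advances across iterations. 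Since $\hat{p}_h$ is only required to lie in $(\max\Theta_l,1]$ and $\min\Theta_h > \max\Theta_l$, the constraint $\hat{p}_h(x) > \theta_{h,t}$ is \emph{not} vacuous, so $E$ is genuinely a rectangle in $(\hat{p}_h,\hat{p}_l)$-space rather than a level set of $\hat{p}_l$ alone. As a consequence, the class of events that the concentration argument must cover uniformly is the product family indexed by $\Theta_h \times \Theta_l$ (two losses per pair, hence the $\sqrt{\log(|\Theta_h|\cdot|\Theta_l|)/n}$ Hoeffding bound in the paper's proof), not a family of size $|\Theta_h|+|\Theta_l|$; the orders coincide up to constants, but your justification of the cardinality is not correct.

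More importantly, this rectangular structure of $E$ is exactly what makes the inductive bookkeeping nontrivial, and your proposal does not engage with it. Once the threshold on $\hat{p}_h$ is allowed to move, a low-direction swap only reassigns the region where $\hat{p}_h$ is still ``above $\theta_{h,t}$'' --- it cannot touch regions that earlier high-direction swaps have already claimed --- and symmetrically for high-direction swaps. This is what produces the nested alternating decomposition of $A_{h,t}$ and $A_{l,t}$ that the paper establishes separately (Lemma \ref{lem:merge_set_char}) and then feeds into the inductive control of the population gap (Lemma \ref{lem:merge_iter_performance}). You correctly sense, in your final paragraph, that the terminal event $\{\hat{p}_m = \hat{p}_h\}$ is built from a ``nested alternation of level sets determined by the entire swap history,'' but this observation contradicts your opening simplification: if $E$ really collapsed to $\{\hat{p}_l \leq \theta_l\}$, the sets $A_{h,t}, A_{l,t}$ would just be monotone level sets and no careful invariant would be needed. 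The decomposition identity $(\theta_l'-Y)\bone\{\hat{p}_l \leq \theta_l'\} = (\theta_l'-Y)\bone\{\hat{p}_l \leq \theta_l\} + (\theta_l'-Y)\bone\{\theta_l < \hat{p}_l \leq \theta_l'\}$ that you propose for the induction is of the wrong shape once $E$ involves both predictors. A correct version of this step, done in the paper, proves by induction that (conditionally on the uniform concentration event) the population gap for every $\theta$ already ``cleared'' is at most $2\epsilon$, using the explicit characterization of $A_{h,t}, A_{l,t}$ to express $\mme[\ell_\theta(\hat{p}_{m,t+1}(X),Y) - \ell_\theta(\hat{p}_{m,t}(X),Y)]$ as the empirical check quantity plus a controlled concentration term, and exploits $\ell_\theta(1,y) - \ell_\theta(0,y) = \theta - y$ being monotone in $\theta$ to transfer a passing check at $\theta_{l,t}$ to all larger $\theta$ in $\Theta_l$. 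Your sketch needs to be rebuilt around the rectangular disagreement sets and the resulting nested characterization before the induction can go through.
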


With this merge procedure in hand, ensembling the larger collection of predictors $\{\hat{f}_{\theta_i}\}_{i=0}^m$ is relatively straightforward. Namely, we simply apply the merge procedure repeatedly by joining together predictors with adjacent parameters until we are left with only a single function. Concretely, assume that $m = 2^k$ is a power of $2$. Then, we will proceed in $k$ rounds, where in each round adjacent predictors are paired up and then merged (e.g. in round 1 we merge the pairs $(\hat{f}_{\theta_1},\hat{f}_{\theta_2}),\dots,(\hat{f}_{\theta_{m-1}}, \hat{f}_{\theta_{m}})$). In order to guarantee the generalization of this method theoretically, each of these $k$ rounds will use fresh data. This is specified on line 3 of Algorithm \ref{alg:ensemb_main}, where we use the notation $\text{Split}(\{(X_i,Y_i)\}_{i=1}^n)$ to denote a division of the training data into $\log_2(m)$ (approximately) equally sized folds. Here, data splitting ensures that the empirical expectations appearing in the merge procedure stay uniformly close to their population counterparts. In practice, we find that this is unnecessary and all of the data can be used at every round without issue.

 \begin{algorithm}
 \KwData{Predictors $(\hat{f}_{\theta_i})_{i=1}^{m}$, data $\{(X_i,Y_i)\}_{i=1}^n$, hyperparameter $\epsilon$.}
 $(\hat{p}_{1,i})_{i=1}^{m} = (\hat{f}_{\theta_i})_{i=1}^{m}$\;
$(\Theta_{1,i})_{i=1}^{m} = (\{\theta_i\})_{i=1}^{m}$ \tcp*{$\hat{p}_{t,i}$ is designed to be "optimal" on $\Theta_{t,i}$}
$D_1,\dots,D_{\log_2(m)} = \text{Split}(\{(X_i,Y_i)\}_{i=1}^n)$ \tcp*{\hspace{-0.2cm}Split the data into (approximately) equal parts}
      \For{$t =1,\dots,\log_2(m)$}{
      \For{$i = 1,\dots,\frac{m}{2^{t}}$}
      {
        $\hat{p}_{t+1,i} = \textup{Merge}(\hat{p}_{t,2i-1},\hat{p}_{t,2i}, \Theta_{t,2i-1},\Theta_{t,2i}, D_t,\epsilon)$\;
        $\Theta_{t+1,i} = \Theta_{t,2i-1} \cup \Theta_{t,2i}$.
     }
      }
    \Return $\hat{p} = \hat{p}_{\log_2(m),1}$
\caption{Ensembling Scheme}\label{alg:ensemb_main}
 \end{algorithm}

Algorithm \ref{alg:ensemb_main} states our method formally. In this algorithm, and in what follows, we will assume that $\hat{f}_{\theta_i}$ takes values in $\{\theta_i - 1/(2m),\theta_i + 1/(2m)\}$. This is always possible since given an arbitrary predictor $\tilde{f}_{\theta_i}$ with good performance under $\ell_{\theta_i}$ we may always recode its predictions as 
\[
\hat{f}_{\theta_i}(X) = (\theta_i - 1/(2m))\bone\{\tilde{f}_{\theta_i}(X) \leq \theta_i\} +  (\theta_i + 1/(2m))\bone\{\tilde{f}_{\theta_i}(X) > \theta_i\}.
\]
As above, the idea is that $\hat{f}_{\theta_i}(\cdot)$ only provides information on whether $p^*(X)$ lies above or below $\theta_i$. The following theorem shows that this method achieves the optimal omniprediction error rate. 

\begin{theorem}\label{thm:direct_omni_bound}
    Let $\mathcal{F}$ be a function class with finite VC dimension and assume that $\{\hat{f}_{\theta_i}\}_{i=1}^m$ satisfy (\ref{eq:base_predictor_guarantee}). Then, the predictor $\hat{p}(\cdot)$ returned by Algorithm \ref{alg:ensemb_main} with $m = \Theta(2^{\lfloor \log_2(\sqrt{n}) \rfloor})$ and $\epsilon = \Theta(\sqrt{\log(n)/n})$ has omniprediction error bounded as
    \[
    \sup_{\ell \in \mathcal{L}_{\textup{lc}}, f \in \mathcal{F}} \mme_{(X,Y)}[\ell(\hat{p}(X),Y) ] - \mme_{(X,Y)}[\ell(f(X),Y)] \leq \tilde{O}_{\mmp}\left( \sqrt{\frac{\textup{VC}(\mathcal{F})}{n}} \right).
    \]
\end{theorem}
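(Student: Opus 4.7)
The plan is a three-stage decomposition that isolates the contributions of discretization, of the base predictors $\{\hat{f}_{\theta_i}\}_{i=1}^m$, and of the ensembling step.

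\emph{Stage 1 (reduction to a discrete grid).} First I would invoke the machinery of Section \ref{sec:simplification}. Theorem \ref{thm:proper_loss_decomp} reduces the supremum over $\mathcal{L}_{\textup{lc}}$ to the supremum over the weighted 0-1 losses $\{\ell_\theta : \theta \in [0,1]\}$, and Lemma \ref{lemma:disc_bound} further restricts this to the grid $\{\theta_i\}_{i=1}^m$ at the cost of an additive $1/m$. Adding and subtracting $\mme[\ell_{\theta_i}(\hat{f}_{\theta_i}(X),Y)]$ then gives the bound $\textup{(I)} + \textup{(II)} + 1/m$ on the omniprediction error, where
\begin{align*}
\textup{(I)} &:= \sup_{i \in \{1,\dots,m\}} \bigl(\mme[\ell_{\theta_i}(\hat{p}(X),Y)] - \mme[\ell_{\theta_i}(\hat{f}_{\theta_i}(X),Y)]\bigr),\\
\textup{(II)} &:= \sup_{i \in \{1,\dots,m\},\, f \in \mathcal{F}} \bigl(\mme[\ell_{\theta_i}(\hat{f}_{\theta_i}(X),Y)] - \mme[\ell_{\theta_i}(f(X),Y)]\bigr).
\end{align*}
The assumption that $\{\hat{f}_{\theta_i}\}_{i=1}^m$ satisfies (\ref{eq:base_predictor_guarantee}) immediately gives $\textup{(II)} \leq \tilde{O}_{\mmp}\bigl(\sqrt{\textup{VC}(\mathcal{F})\log(m)/n}\bigr)$.

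\emph{Stage 2 (control the ensembling term).} Next I would bound term (I) by induction on the merge rounds of Algorithm \ref{alg:ensemb_main}. After round $t$, for every output index $i$ and every $\theta_j \in \Theta_{t+1,i}$ there is a chain of length $t$ of intermediate predictors linking $\hat{f}_{\theta_j} = \hat{p}_{1,j}$ to $\hat{p}_{t+1, i}$, with each link produced by a single call to Algorithm \ref{alg:merge} on a distinct fold $D_s$. The data-splitting on line 3 of Algorithm \ref{alg:ensemb_main} is crucial: because $\hat{p}_{s, \cdot}$ depends only on $D_1, \dots, D_{s-1}$, it is independent of $D_s$, so Lemma \ref{lem:merge_performance} can be applied conditionally on the predictors with the effective sample size $|D_s| = \Theta(n / \log_2 m)$. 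Each link thereby contributes $O_{\mmp}(\sqrt{s/|D_s|})$; a union bound across the $m/2^s \leq m$ merges in round $s$ and across all $\log_2 m$ rounds absorbs only a polylogarithmic factor, and telescoping along the chain gives $\textup{(I)} \leq \tilde{O}_{\mmp}(1/\sqrt{n})$.

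\emph{Stage 3 (balance and identify the hard part).} Choosing $m = \Theta(\sqrt{n})$ balances $1/m$ against both (I) and (II), so summing the three contributions yields the claimed bound $\tilde{O}_{\mmp}(\sqrt{\textup{VC}(\mathcal{F})/n})$. The main obstacle is not this outer decomposition but Lemma \ref{lem:merge_performance} itself: Algorithm \ref{alg:merge} inspects empirical expectations over the data-dependent events $E = \{x : \hat{p}_h(x) > \theta_l,\ \hat{p}_l(x) \leq \theta_l\}$, and correctness requires that every swap made by the algorithm correspond to a genuine population-level improvement. The buffer $\epsilon = \Theta(\sqrt{\log(n)/n})$ is tuned precisely so that a uniform Hoeffding bound over the finite collection of such events (finite thanks to independence of $\hat{p}_h, \hat{p}_l$ from the active fold and the grid structure of the thresholds) absorbs all empirical fluctuations, ensuring that no spurious swap is ever performed and that every retained swap certifies a true $\Omega(\epsilon)$ improvement on the relevant weighted 0-1 loss.
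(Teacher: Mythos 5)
Your proposal is correct and follows essentially the same route as the paper's (very terse) proof: reduce to the discrete grid of weighted 0-1 losses via Theorem \ref{thm:proper_loss_decomp} and Lemma \ref{lemma:disc_bound}, split off the base-predictor term via (\ref{eq:base_predictor_guarantee}), and then invoke Lemma \ref{lem:merge_performance} round by round, exploiting the fold independence from data splitting and summing the per-round errors over the $\log_2 m$ rounds before balancing with $m = \Theta(\sqrt n)$. Your added bookkeeping about effective sample size $|D_s| = \Theta(n/\log_2 m)$, the union bound over merges, and the role of the $\epsilon$ buffer is exactly what is implicitly meant by the paper's "adding up the cumulative error over all $\log_2(m)$ rounds."
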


\section{Empirical Comparisons}\label{sec:empirics}

We now turn our attention to a set of empirical comparisons. Following the previous sections, we will evaluate three methods for omniprediction:
\begin{itemize}
    \item \textbf{CalMA:} Our first method is the calibrated multiaccuracy procedure proposed in Algorithm 2 of \citet{Goplan2023_OI}. This method uses a boosting approach that iteratively updates $\hat{p}(X)$ by alternating between improving its multiaccuracy error and improving its calibration error. We will implement this algorithm so that it targets multiaccuracy with respect to the function class $\{\ell(\hat{f}_{\theta_i}(\cdot),1) - \ell(\hat{f}_{\theta_i}(\cdot),0) : i \in \{1,\dots,m\}\}$. A straightforward extension of Theorem \ref{thm:cma_implies_omni} shows that this (combined with calibration) is sufficient to give low omniprediction error.
    
    The calibrated multiaccuracy procedure of \citet{Goplan2023_OI} contains a hyperparameter, $\alpha$ that specifies the target omniprediction error. The theory presented in that work suggests that this parameter should be chosen to be of order $ \alpha = \Theta((\log(m)/n)^{-1/4} + n^{-1/10})$. We find that this is needlessly pessimistic and will prefer to take $\alpha = c\sqrt{\log(m)/n}$ for some constant $c$ that we vary. 
    In addition, the theory for this method requires extensive data splitting in order to ensure that fresh samples are available for each of up to $O(1/\alpha^2)$ iterations of the algorithm. For the sample sizes we consider, this would give us only a handful of data points at each iteration with which to correct the multiaccuracy and calibration error. As this is clearly impractical, we do not perform any data splitting and simply use all available data at every step. As we will see shortly, this does not appear to be an issue and the algorithm gives reasonable empirical performance.
    \item \textbf{Two-player:} Our second algorithm is the two-player game based procedure given in Algorithm \ref{alg:online_omni}. We implement this method with hyperparameter $\eta = c\sqrt{\log(m)/n}$ for varying levels of $c$. 
    \item \textbf{Direct ensembling:} Our final method is the direct ensembling procedure proposed in Algorithm \ref{alg:ensemb_main}. Similar to the previous methods, we implement this procedure with parameter $\epsilon = c\sqrt{\log(m)/n}$ for varying levels of $c$. Additionally, as above, we do not utilize data splitting. We find that although our theoretical results require fresh data for every round of merging, in practice this method offers robust performance when all the available data is used at each step. 
\end{itemize}

All methods are implemented with the same value of $m$ and the same set of initial predictors $\{\hat{f}_{\theta_i}\}_{i=1}^{m}$. The exact procedure for obtaining these quantities varies for each experiment and is specified in the relevant sections. 

\subsection{Simulated example}

For our first example, we consider a simple simulated dataset that illustrates the core ensembling problem. Let $\mathcal{F} = \{x \mapsto \beta_0 + \beta_1 x : \beta_0, \beta_1 \in \mmr\}$ be the class of linear predictors on $\mmr$. Let $X$ be supported on $\{0.05,0.45,0.85\}$ with distribution $\mmp(X = 0.05) = 0.1$, $\mmp(X = 0.45) = 0.6$, $\mmp(X = 0.85) = 0.3$ and let $Y \in \{0,1\}$ be sampled according to $\mmp(Y \mid X = 0.05) = 0.3$, $\mmp(Y \mid X = 0.45) = 0.9$, and $\mmp(Y \mid X = 0.85) = 0.4$. By design, this distribution for $(X,Y)$ has the property that the optimal linear predictor $f^*_{\theta} \in \mathcal{F}$ under loss $\ell_{\theta}$ gives inconsistent predictions as $\theta$ varies. For example, at $\theta = 0.35$ and $X = 0.05$ the optimal predictor outputs $f^*_{0.35}(0.05) \leq 0.35$, while at $\theta = 0.75$ it predicts $f^*_{0.75}(0.05) > 0.75$. This inconsistency in the optimal predictions is illustrated in the left panel of Figure \ref{fig:simulated_example}, which plots the conditional distribution of $Y$ given $X$ alongside these optima. 

\begin{figure}
    \centering
    \includegraphics[width=\linewidth]{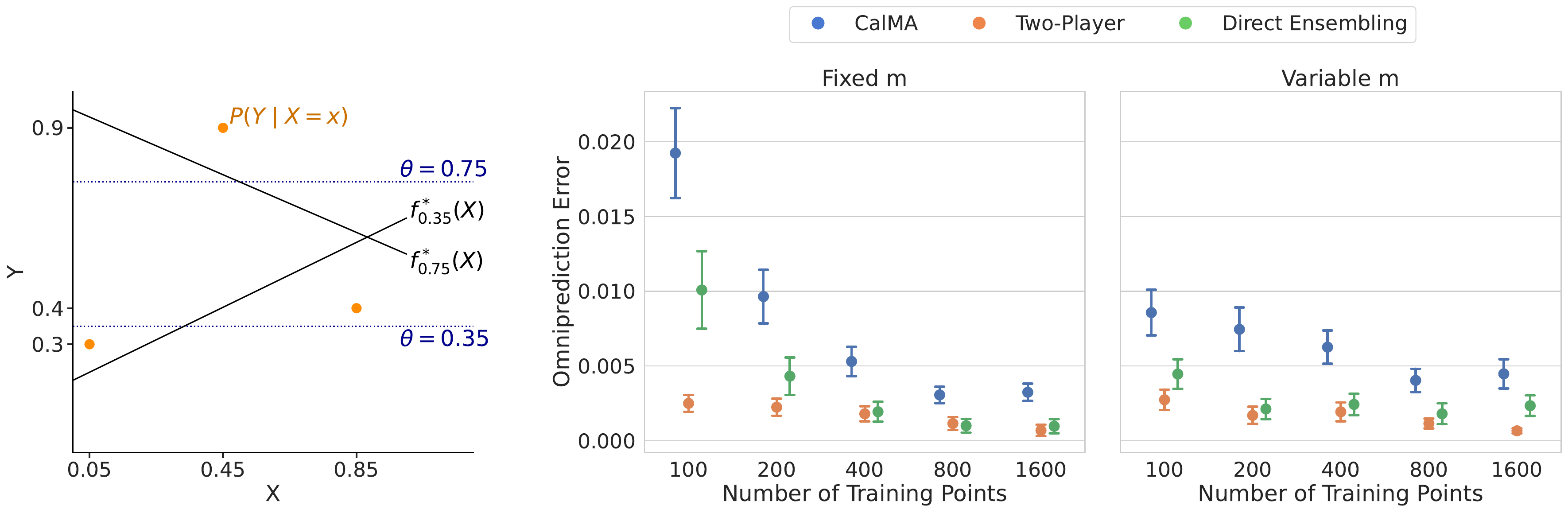}
    \caption{Illustration of the core ensembling problem for our simulated example (left panel) and realized average omniprediction error of the calibrated multiaccuracy (blue), two-player game based (orange), and direct ensembling (green) methods across various sample sizes with $m = 16$ fixed (center panel) or chosen variably as $m = 2^{\lfloor \log_2(\sqrt{n}) \rfloor}$ (right panel). Dots and error bars show means and standard errors obtained by evaluating the omniprediction error over 2000 test points for each of 40 draws of the training data. Hyperparameters for the calibrated multiaccuracy, two-player, and direct ensembling methods are set as $c = 0.5$, $c=32$, and $c=0$, respectively.}
    \label{fig:simulated_example}
\end{figure}

The rightmost two panels of Figure \ref{fig:simulated_example} compare the performance of the three main omniprediction methods over various sample sizes and settings of $m$. To simplify our initial comparisons, results in this figure show only a single hyperparameter setting for each method which was found to give good performance. Dots and error bars display empirical estimates of the average omniprediction error,
\[
\mme\left[ \sup_{i \in \{1,\dots,m\}} \mme_{(X,Y)}[\ell_{\theta_i}(\hat{p}(X),Y)] - \mme_{(X,Y)}[\ell_{\theta_i}(\hat{f}_{\theta_i}(X),Y) ] \right],
\]
over multiple draws of the training dataset. The center panel shows results for a fixed value of $m=16$ while the right panel gives results for $m = 2^{\lfloor \log_2(\sqrt{n}) \rfloor}$ increasing with the sample size. In both cases, the initial predictors $\{\hat{f}_{\theta_i}\}_{i=0}^m$ are obtained by solving the mixed integer programs, 
\[
\min_{f \in \mathcal{F}} \frac{1}{k} \sum_{j=1}^k \ell_{\theta_i}(f(\tilde{X}_j),\tilde{Y}_j),
\]
over an independent dataset $\{(\tilde{X}_j,\tilde{Y}_j)\}_{j=1}^{k}$ of size $k = 500$. 

\begin{figure}[ht]
    \centering
    \includegraphics[width = \textwidth]{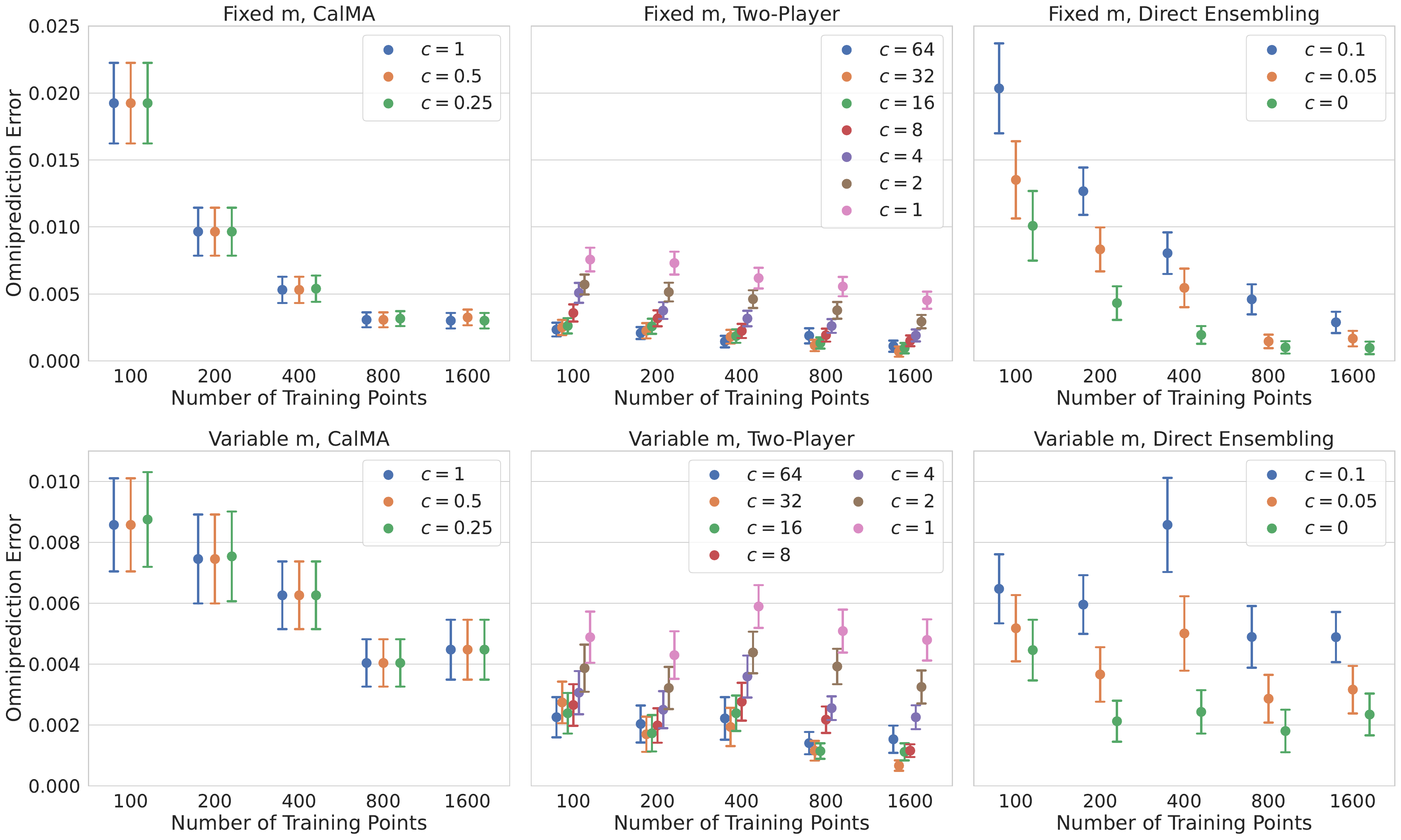}
    \caption{Omniprediction error of the calibrated multiaccuracy (left panels), two-player game based (center panels), and direct ensembling (right panels) methods across various sample sizes with $m = 16$ fixed (top row) or chosen variably as $m = 2^{\lfloor \log_2(\sqrt{n}) \rfloor}$ (bottom row) as the scaling constant $c$ varies on a simulated dataset. Dots and error bars show means and standard errors obtained by evaluating the omniprediction error over 2000 test points for each of 40 draws of the training data.}
    \label{fig:one_d_param_vary}
\end{figure}

Overall, we find that, as expected, the method based on calibrated multiaccuracy realizes the highest omniprediction error across all sample sizes and settings of $m$. On the other hand, the two-player game based method performs better than the direct ensembling procedure at smaller sample sizes, while the two methods obtain nearly identical performance at larger values of $n$. An advantage of the direct ensembling approach is that it offers simplified hyperparameter tuning. Figure \ref{fig:one_d_param_vary} displays results for the three methods as the scaling constant $c$ varies. We find that the direct ensembling method always performs best with parameter $\epsilon = 0$. On the other hand, to obtain good performance with the two-player game based approach we must choose an intermediate value of $\eta$. In practice, selecting such a value may be challenging and could require additional data splitting.

\subsection{Sales forecasting}\label{sec:sales}

Our second experiment compares the three omniprediction methods on a retail sales forecasting dataset taken from the M5 forecasting challenge \citep{Makridakis2022}. In this challenge, competitors were tasked with constructing quantile forecasts of the daily sales of various items at ten different Walmart stores over a 28-day period. Here, we transform this task to a binary prediction problem in which the goal is to estimate the probability that at least one unit of an item is sold at a given store on a given day. To do this, we use linear interpolation to convert the quantile forecasts given by the competitors into estimates of the full cumulative distribution function of the sales. We then set our function class $\mathcal{F}$ to be corresponding forecasts of the probability that at least one sale is made. Details of this procedure are given in Appendix \ref{sec:app_sales_details}. In total, the M5 dataset contains quantile forecasts from the top 50 participants in the competition. To obtain a sufficient sample size for our experiments, here we restrict to the 43 forecasters who issued predictions for at least 10000 product-store pairs on day 7.

We evaluate the omniprediction methods in three steps. First, to obtain $\{\hat{f}_{\theta_i}\}_{i=1}^m$ we randomly select 500 product-store pairs from the day 7 data. Then, for each $i \in \{1,\dots,m\}$ we set $\hat{f}_{\theta_i}$ to be the element of $\mathcal{F}$ that minimizes the empirical loss, $\ell_{\theta_i}$ over these 500 samples. With these initial predictors in hand, we run the three omniprediction methods on a randomly chosen subset of the data from day 14. Finally, all methods are evaluated on the data from day 21.

\begin{figure}[ht]
    \centering
    \includegraphics[width=\linewidth]{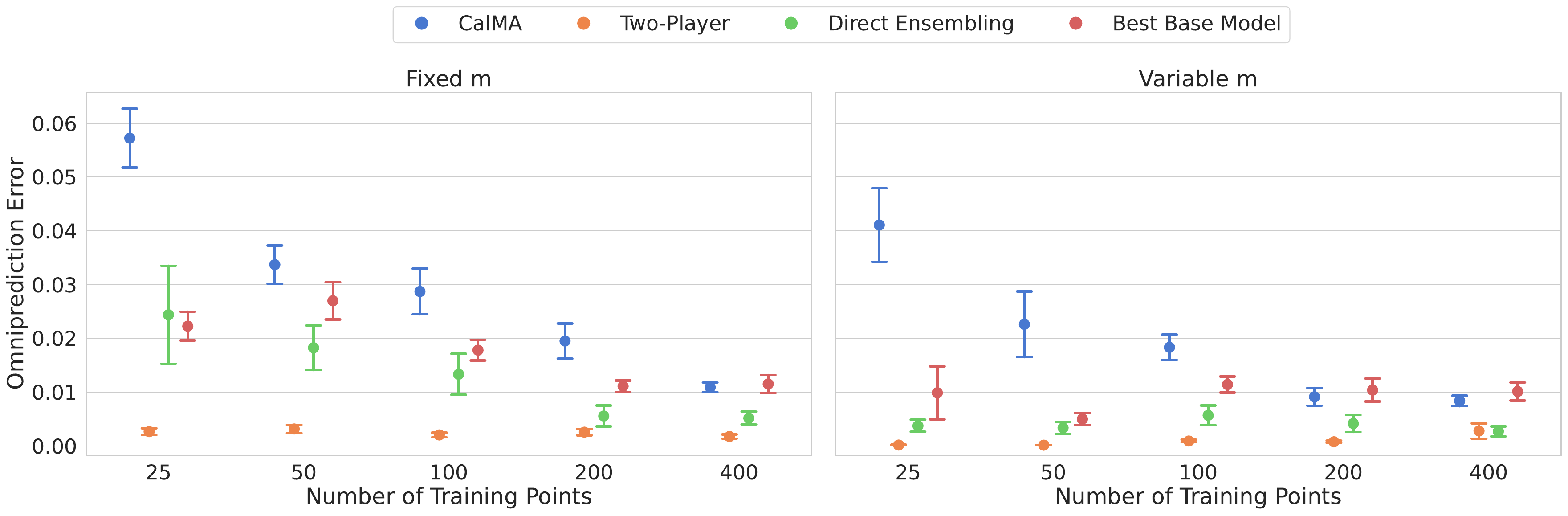}
    \caption{Realized average omniprediction error of the calibrated multiaccuracy (blue), two-player game based (orange), and  direct ensembling  (green) methods as well as the error of the best base model (red) across various sample sizes with $m=16$ fixed (left panel) or chosen variably as $m = 2^{\lfloor \log_2(\sqrt{n}) \rfloor}$ (right panel) on the M5 sales forecasting dataset. Dots and error bars show estimated means and standard errors obtained by evaluating the omniprediction error over 2000 test points for each of 20 draws of the training data. Hyperparameters for the calibrated multiaccuracy, two-player, and direct ensembling methods are set as $c = 0.5$, $c=32$, and $c=0$, respectively.}
    \label{fig:retail_example}
\end{figure}

Figure \ref{fig:retail_example} displays the results of this experiment over various sample sizes and settings of $m$. Similar to the previous section, we display the best performing hyperparameter for each method. Corresponding results for other parameter choices are shown in Figure \ref{fig:sales_var_hyper} in the appendix. In addition to the three omniprediction methods discussed above, this figure also shows results for the best performing base model, i.e., the predictor
\[
\hat{f} \in \underset{f \in \mathcal{F}}{\text{argmin}} \max_{i \in \{1,\dots,m\}} \frac{1}{n} \sum_{i=1}^n \ell_{\theta_i}(f_{\theta_i}(X_i),Y_i) - \frac{1}{n} \sum_{i=1}^n \ell_{\theta_i}(\hat{f}_{\theta_i}(X_i),Y_i),
\]
that minimizes the empirical omniprediction error on the day 14 data. 

As in the simulated example, the calibrated multiaccuracy method once again realizes the largest errors. Notably, this method is even outperformed by the best base model which offers no omniprediction guarantee. Once again, the two-player game based and direct ensembling approaches perform similarly when $m$ varies. When $m$ is fixed, the two-player game based method offers surprisingly strong performance, obtaining an omniprediction error of nearly zero for $n=25$. This is likely due to the fact that even before observing any training data the two-player game based approach forms an initial baseline ensemble of the available predictors (recall Lemma \ref{lem:min_max_opt_char}). In this example, this baseline performs well and thus the method does not require significant training data. On the other hand, the direct ensembling procedure requires additional training samples and only matches the two-player game based method for larger sample sizes.

\section{Discussion}\label{sec:other_targets}

This article studied three algorithmic frameworks for constructing predictors with low omniprediction error over the class of proper losses. Overall, our theoretical and empirical results show that methods based on calibrated multiaccuracy incur larger error rates than those based on two-player games and our direct ensembling approach. On the other hand, the latter two methods provide similar theoretical guarantees, with the two-player game based methods offering better empirical performance at small sample sizes.

\subsection{Extensions to other prediction targets}

In the previous sections we have chosen to focus on binary prediction problems in which the goal is to estimate the conditional probability function, $\mmp(Y=1 \mid X)$. Perhaps surprisingly, the algorithms and theory we have developed are not unique to this problem and can be extended to handle a large variety of estimation targets. To formalize this, let $T(P) \in \mmr$ denote a function that takes in a distribution $P$ on $\mathcal{Y}$ and returns the estimation target of interest. In the previous sections, we had $\mathcal{Y} = \{0,1\}$ and $T(P) = \mmp_P(Y = 1)$, but more generally one may consider common prediction tasks such as estimating the mean, $T(P) = \mme_P[Y]$ or $\tau$-quantile, $T(P) = \inf\{t : \mmp_P(Y \leq t) \geq \tau\}$ with $\mathcal{Y} = \mmr$. We say that $T$ is an elicitable property of $P$ if there exists at least one loss function which is minimized at $T(P)$, i.e., there exists $\ell$ such that for all $P$,
\[
T(P) \in \text{argmin}_t \mme_P[\ell(t,Y)].
\]
It is worth noting that while common prediction targets such as means and quantiles are elicitable, not every property of a distribution can be obtained this way. A notable example is the conditional value-at-risk which is well-known to be non-elicitable \citep{Gneiting2011}. 

Now, restricting to elicitable properties, the goal is to design predictors that estimate $T(P_{Y\mid X})$ well under all possible losses for $T$. As above, we say that $\ell$ a proper loss for $T$ if $T(P) \in \text{argmin}_t \mme_P[\ell(t,Y)]$ for all $P$ and strictly proper if $T(P)$ is the unique minimizer. The key technical tool that allowed us to handle arbitrary proper losses in binary prediction was Theorem \ref{thm:proper_loss_decomp}, which gave a decomposition of proper losses as mixtures of a one-dimensional family of weighted 0-1 losses. To extend our results beyond binary prediction, we will leverage the following result of \citet{Steinwart2014}, which demonstrates the existence of similar decompositions for other targets. This result introduces the technical requirement that $T$ is strictly locally non-constant. Informally, this means that slight changes in $P$ can shift $T(P)$ up or down. A more precise definition of this property is given as Definition 4 in \citet{Steinwart2014}.

\begin{proposition}[Variant of Corollary 9 of \citet{Steinwart2014}]\label{prop:general_loss_decomp}
    Let $(\mathcal{Y}, \mathcal{A}, \mu)$ be a separable, finite measure space, $\mathcal{P}$ be a set of $\mu$-absolutely continuous distributions on $\mathcal{Y}$ and
     $T : \mathcal{P} \to \mmr$ be continuous, elicitable, strictly locally non-constant, and such that $\textup{Image}(T)$ is an interval. Then, there exists a measurable function $V: \textup{Image}(T) \times \mathcal{Y} \to \mmr$ that identifies $T$, i.e., a function $V$ with the property that for all $t \in \textup{Interior}(\textup{Image}(T))$,
    \[
    \mme_{Y \sim P}[V(t,Y)] = 0 \iff t = T(P) \ \ \  \text{ and } \ \ \ \mme_{Y \sim P}[V(t,Y)] > 0 \iff t > T(P).
    \]
    Moreover, all strictly proper losses $\ell$ for $T$ that are locally-Lipschitz in their first argument can be decomposed as
   \begin{equation}\label{eq:general_proper_loss_decomp}
    \ell(t,y) = \int_{-\infty}^{\infty} V(\theta,y) \bone\{t \leq \theta\} w(\theta) d\theta + \kappa(y),\ \textup{for all $t \in \mmr$ and $\mu$-almost all $y \in \mathcal{Y}$.}
    \end{equation}
Here, $\kappa : \mathcal{Y} \to \mmr$ and $w : \mmr \to [0,\infty)$ are functions that depend on $\ell$.    \
\end{proposition}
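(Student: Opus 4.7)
The plan is to derive this as a specialization of the broader framework developed in \citet{Steinwart2014}, whose Corollary 9 gives essentially the same decomposition for general locally-Lipschitz strictly proper losses. The argument divides into two stages: constructing the identification function $V$, and then using it to decompose an arbitrary strictly proper loss $\ell$.

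Stage 1 is the construction of $V$. Elicitability of $T$ guarantees at least one proper loss $\tilde\ell$ for $T$; choosing one that is sufficiently smooth in its first argument, we set $V(\theta,y) = \partial_t \tilde\ell(\theta,y)$ up to a normalization. Strict propriety of $\tilde\ell$ implies that $\mme_{Y \sim P}[V(\theta,Y)]$ vanishes precisely at $\theta = T(P)$, and the monotonicity of $t \mapsto \mme_P[\tilde\ell(t,Y)]$ around its minimum gives the required sign condition. Continuity of $T$ together with the interval assumption on $\textup{Image}(T)$ ensure that $V$ is defined and measurable on the whole of $\textup{Image}(T) \times \mathcal{Y}$. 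Strict local non-constancy makes $V$ essentially unique: any other identification function must equal $V$ times a $y$-independent scalar function of $\theta$.

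Stage 2 is the decomposition. Fix a strictly proper $\ell$ that is locally Lipschitz in its first argument. For $\mu$-a.e.\ $y$, the map $t \mapsto \ell(t,y)$ is then absolutely continuous, hence differentiable Lebesgue-a.e.\ in $t$ with derivative $g(t,y) := \partial_t \ell(t,y)$ satisfying the fundamental theorem of calculus on compact subintervals. Strict propriety forces $\mme_P[g(t,Y)]$ to vanish exactly at $t = T(P)$ with the same sign pattern as $V$, so $g(\cdot,\cdot)$ is itself an identification function for $T$. Applying the uniqueness from Stage 1 pointwise in $t$ yields a measurable function $w$ with $g(t,y) = -V(t,y) w(t)$ for Lebesgue-a.e.\ $t$ and $\mu$-a.e.\ $y$, with $w(t) \geq 0$ pinned down by the sign conventions. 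Integrating this identity from $t$ outward and collecting the constant of integration then gives
\begin{equation*}
\ell(t,y) = \int_{-\infty}^{\infty} V(\theta,y) \bone\{t \leq \theta\} w(\theta) d\theta + \kappa(y),
\end{equation*}
with $\kappa$ measurable in $y$ because $\ell$ is, and integrability of the expression guaranteed by local Lipschitz continuity.

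The main obstacle is the uniqueness-up-to-scaling claim for identification functions that underpins Stage 2: one must show that any two functions vanishing in expectation precisely on the graph of $T$ with the matching sign pattern differ only by a $y$-independent multiplicative factor of $\theta$. This is exactly where strict local non-constancy of $T$ is indispensable, as it supplies enough perturbations of any given $P$ that move $T(P)$ in either direction to force this rigidity. A secondary technical subtlety is carefully propagating the a.e.\ derivative statements into a pointwise integral identity valid for all $t \in \textup{Interior}(\textup{Image}(T))$, which is handled by local Lipschitz continuity together with Fubini.
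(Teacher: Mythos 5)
Your proposal attempts something much more ambitious than the paper's proof: rather than reducing to the cited result, you try to re-derive the entirety of Corollary 9 of Steinwart et al.\ from scratch. The paper's own proof is a short reduction — it observes that the proposition is precisely Corollary 9 of \citet{Steinwart2014} except that the latter assumes the losses are \emph{order sensitive} (i.e., $\mme_P[\ell^T(t_1,Y)] < \mme_P[\ell^T(t_2,Y)]$ whenever $t_2 < t_1 < T(P)$ or $T(P) < t_1 < t_2$), and then the only thing that needs to be proved is that strict propriety implies order sensitivity. That reduction is established via a mixture argument: for $t_2 < t_1 < T(P)$, take $P_2$ with $T(P_2)=t_2$, use continuity of $T$ to find $\lambda$ with $T(\lambda P_2 + (1-\lambda)P) = t_1$, apply strict propriety of $\ell^T$ at the mixture, and separate the two sides.

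The gap in your proposal is buried in exactly the spot the paper's argument occupies. In Stage 2 you write that ``strict propriety forces $\mme_P[g(t,Y)]$ to vanish exactly at $t = T(P)$ with the same sign pattern as $V$.'' This does not follow from strict propriety alone. Strict propriety says $T(P)$ is the \emph{unique global} minimizer of $t \mapsto \mme_P[\ell(t,Y)]$, but a function with a unique global minimum can be non-monotone on either side of it, in which case its derivative does not keep a fixed sign; the derivative sign pattern you need is precisely the assertion that the expected loss is strictly decreasing on $(-\infty, T(P))$ and strictly increasing on $(T(P),\infty)$, i.e., order sensitivity. Proving that is the substantive content the paper supplies, and your proposal assumes it. Your Stage 1 has a secondary gap: elicitability only guarantees the existence of \emph{some} proper loss, not a differentiable one, so defining $V$ as $\partial_t \tilde\ell$ requires a justification you don't give. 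And the ``uniqueness up to a $\theta$-dependent scalar'' claim for identification functions, which you correctly flag as the crux of Stage 2, is asserted rather than argued. All of these are handled internally by \citet{Steinwart2014}; the economical route — and the one the paper takes — is to quote their theorem and close only the strict-propriety-to-order-sensitivity gap.
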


A key component of Proposition \ref{prop:general_loss_decomp} is the identification function, $V$. Common examples include $V(t,y) = t-y$, which identifies mean, and $V(t,y) =  \bone\{y \leq t\} -\tau$, which identifies the $\tau$ quantile. The perhaps surprising insight of this proposition is that any (appropriately smooth) proper loss for the mean or $\tau$ quantile can be written as a mixture over these identification functions.

With Proposition \ref{prop:general_loss_decomp} in hand, algorithms for other point prediction targets can be obtained directly by replacing the weighted 0-1 losses appearing in our methods with the threshold loss $\ell^T_{\theta}(t,y) := V(\theta,y)\bone\{t \leq \theta\}$. In particular, the decomposition given in (\ref{eq:general_proper_loss_decomp}) is essentially identical to the decomposition for binary prediction that we gave previously in Theorem \ref{thm:proper_loss_decomp}. Moreover, similar to the binary case, the loss $\ell^T_{\theta}(t,y)$ is proper and can be interpreted as evaluating whether $T$ falls above or below $\theta$. By replacing all instances of $\ell_{\theta}$ with $\ell_{\theta}^T$ in the previous sections, one may adapt Algorithms \ref{alg:online_omni} and \ref{alg:ensemb_main} to construct predictors $\hat{t}(\cdot)$ satisfying the corresponding omniprediction guarantee
\[
\sup_{\ell^T, f \in \mathcal{F}} \mme_{(X,Y)}[\ell^T(\hat{t}(X),Y) ] - \mme_{(X,Y)}[\ell^T(f(X),Y)] \leq \tilde{O}_{\mmp}\left( \sqrt{\frac{\textup{VC}(\mathcal{F})}{n}} \right),
\]
where the supremum is over all proper losses for $T$ satisfying appropriate regularity conditions. Making this statement precise requires some minor additional technical assumptions to ensure that the weight function, $w(\theta)$ is appropriately bounded and the parameters, $\theta$ can be discretized. As this is not the main focus of this work, we do not pursue this here.

A more challenging task is to extend our results beyond point prediction problems. For instance, given a multiclass outcome $Y \in \{1,\dots,k\}$ we may attempt to derive estimates of the entire vector of conditional probabilities $(\mmp(Y = 1 \mid X),\dots, \mmp(Y=k \mid X))$. Unfortunately, characterizing the class of proper losses in this instance is significantly more challenging. While previously we could decompose proper losses in terms of a one-dimensional family,   \citet{Kleinberg2023} shows that the space of multiclass proper losses is fundamentally more complex and it is impossible to construct a finite dimensional class of loss functions that admit a similar decomposition. Determining whether efficient omniprediction algorithms exist in this setting is an interesting open problem for future work.

\section*{Acknowledgments}

This work was supported by the Office of Naval Research, ONR grant N00014-20-1-2787. The authors thank Sivaraman Balakrishnan for helpful discussions.

\newpage

\bibliographystyle{plainnat}
\bibliography{omni_references}

\newpage 

\appendix

\section{Proofs for Section \ref{sec:non-param}}

In this section we prove Proposition \ref{prop:non_param_connection}.

\begin{proof}[Proof of Proposition \ref{prop:non_param_connection}]
To get the upper bound, fix any bounded, proper loss $\ell \in \mathcal{L}_0$. Then,
\begin{align*}
\mme[\ell(p(X),Y)] - \mme[\ell(p^*(X),Y)] & = \mme[\ell(p(X),Y) - \ell(p^*(X),Y)] - \mme_{X,Y' \mid X \sim p(X)}[\ell(p(X),Y') - \ell(p^*(X),Y')]\\
& \ \ + \mme_{X,Y' \mid X \sim p(X)}[\ell(p(X),Y') - \ell(p^*(X),Y')]\\
 & \leq \mme[\ell(p(X),Y) - \ell(p^*(X),Y)] - \mme_{X,Y' \mid X \sim p(X)}[\ell(p(X),Y') - \ell(p^*(X),Y')] \\
 & = \mme_X[(p^*(X) -p(X))(\ell(p(X),1) - \ell(p(X),0) - \ell(p^*(X),1) + \ell(p^*(X),0))]\\
 & \leq 2 \mme_X[|p(X) -p^*(X)|],
\end{align*}
where the first inequality uses the fact that $\ell$ is proper to bound the second term by $0$.

For the lower bound, let $m \in \mmn$ be an positive integer to be specified shortly. Then,
\begin{align*}
&  \mme[|p(X) - p^*(X)|]  = 2\mme\left[\left| p^*(X) -   \frac{p(X) + p^*(X)}{2} \right| \right]\\
 & \leq \frac{2}{m}  + 2\mme\left[\left| p^*(X) -   \frac{p(X) + p^*(X)}{2} \right| \bone\left\{ |p^*(X) - p(X)| > \frac{2}{m} \right\}\right]\\
& = \frac{2}{m} + \sum_{i=0}^m 2\mme\left[ \left| p^*(X) -   \frac{p(X) + p^*(X)}{2}  \right|   \bone\left\{ |p^*(X) - p(X)| > \frac{2}{m} \right\} \bone\left\{ \left\lfloor m\frac{p(X) + p^*(X)}{2} \right\rfloor = i \right\} \right]\\
& \leq \frac{4}{m} + \sum_{i=0}^m 2\mme\left[ \left| p^*(X) -  \frac{i}{m} \right|   \bone\left\{ |p^*(X) - p(X)| > \frac{2}{m} \right\} \bone\left\{ \left\lfloor m\frac{p(X) + p^*(X)}{2} \right\rfloor = i \right\} \right] \\
& \leq \frac{4}{m} + \sum_{i=0}^m 2\mme\left[ \left| p^*(X) -  \frac{i}{m} \right|  \bone\left \{p(X) \leq \frac{i}{m} < p^*(X) \text{ or } p^*(X) \leq \frac{i}{m} < p(X) \right\} \right]\\
& = \frac{4}{m} + \sum_{i=0}^m 2(\mme[\ell_{i/m}(p(X),Y)] - \mme[\ell_{i/m}(p^*(X),Y)]),
\end{align*}
where we recall that $\ell_{i/m}$ denotes the proper loss function given by
\[
\ell_{i/m}(p,y) = \frac{i}{m} \bone\left\{p > \frac{i}{m}, y = 0\right\} + \left(1- \frac{i}{m}\right) \bone\left\{p \leq \frac{i}{m}, y = 1\right\}.
\] 
So, rearranging we find that
\[
\sup_{\ell \in \mathcal{L}_0} \mme[\ell(p(X),Y)] - \mme[\ell(p^*(X),Y)] \geq \frac{\mme[|p(X) - p^*(X)|] }{2(m+1)} - \frac{4}{2m(m+1)}. 
\]
Finally, setting $m = \lfloor 7\mme[|p(X) - p^*(X)|]^{-1} \rfloor - 1$ gives 
\begin{align*}
 & \frac{\mme[|p(X) - p^*(X)|] }{m+1} - \frac{4}{m(m+1)}\\
 & \geq \frac{\mme[|p(X) - p^*(X)|]^2}{7} - \frac{4}{(7\mme[|p(X) - p^*(X)|]^{-1} - 2)(7\mme[|p(X) - p^*(X)|]^{-1}  - 1)}\\
 & \geq \frac{\mme[|p(X) - p^*(X)|]^2}{7} - \frac{4\mme[|p(X) - p^*(X)|]^2}{30}\\
 & = \frac{\mme[|p(X) - p^*(X)|]^2}{105},
\end{align*}
where to get the second inequality we have used the fact that $\mme[|p(X) - p^*(X)|] \leq 1$.    
\end{proof}

\section{Proofs for Section \ref{sec:cma_omni}}\label{sec:app_ca_ma}

In this section we prove Propositions \ref{prop:ca_ma_lower_bound}, \ref{prop:ma_lower_bound} and \ref{prop:ca_upper_bound}, which give lower and upper bounds on the minimax rate of calibrated multiaccuracy. We begin by proving the lower bound for calibrated multiaccuracy appearing in Proposition \ref{prop:ca_ma_lower_bound}.

\begin{proof}[Proof of Proposition \ref{prop:ca_ma_lower_bound}]

We will prove this result using Fano's method \citep{Yu1997}. Let $k \in \mmn$ be a large value that we will specify shortly and set $X_i$ to be uniformly distributed on $\{\frac{1}{k},\frac{2}{k},\dots,1\}$. By the Varshamov–Gilbert lemma (see, e.g., Lemma 4.7 of \citet{Massart2007}) we may find a collection of vectors $V \subseteq \{0,1\}^k$ such that $|V| \geq \exp(k/4)$ and for all $v, v' \in V$ with $v \neq v'$, $\|v - v'\|_1 \geq k/8$. Our goal will be to apply Fano's inequality to the set of distributions given by $p^*(X) = p_v(X) = \frac{1}{4} + \frac{X}{2} + \delta v_X$ for $v \in V$ and some appropriately small value $\delta>0$. The idea here is that in order to be multiaccurate the predictor $\hat{p}(X)$ must correctly capture the linear component of $p_v(X)$ given by the term $\frac{X}{2}$. Then, the only way for $\hat{p}(X)$ to additionally be calibrated is if it accurately determines the value of $v_x$ for most values of $x \in \{\frac{1}{k},\frac{2}{k},\dots,1\}$. This latter problem is difficult and suffers a worst-case estimation rate of $\Omega(n^{-2/5})$.

To formalize this, we begin by lower bounding the ability of the predictor to hedge between two sign vectors. In particular, fix $v, v' \in V$ with $v \neq v'$. Then, we will lower bound
\[
\inf_{p} \max_{p^* \in \{p_v, p_{v'}\}}  \max\{\mme_{p^*}[X(Y - p(X))], \mme[|p(X) - \mme[p^*(X) \mid p(X)]|]\},
\]
where the infimum is taken over all functions $p : \{\frac{1}{k},\frac{2}{k},\dots,1\} \to [0,1]$ and the notation $\mme_{p^*}$ is used to denote the distribution in which $X \sim \text{Unif}(\{\frac{1}{k},\frac{2}{k},\dots,1\})$ and $Y \mid X \sim \text{Ber}(p^*(X))$.

Fix any $p : \{\frac{1}{k},\frac{2}{k},\dots,1\} \to [0,1]$. Let $p_1,\dots,p_r$ denote the distinct values in the support of $p(X)$ and for $i \in \{1,\dots,r\}$ let $G_i = \{x \in \{\frac{1}{k},\frac{2}{k},\dots,1\} : p(x) = p_i\}$. For ease of notation, define the maximum calibration error as
\[
 \text{ECE}_{\textup{max}}(p;v,v')  = \max_{p^* \in \{p_v, p_{v'}\}} \left| \mme[|p(X) - \mme[p^*(X) \mid p(X)]|] \right| = \max_{\tilde{v} \in \{v,v'\}} \sum_{i=1}^r \frac{|G_i|}{k} \left| \frac{1}{|G_i|} \sum_{x \in G_i} \frac{1}{4} + \frac{x}{2} + \delta \tilde{v}_x - p_i \right|,
\]
and note that
\begin{align*}
\sum_{i=1}^r \frac{|G_i|}{k}  \left| \frac{1}{|G_i|}\sum_{x \in G_i}  (v_x - v'_x )\right| & \leq   \frac{1}{\delta} \sum_{i=1}^r \frac{|G_i|}{k} \left( \left| \frac{1}{|G_i|}\sum_{x \in G_i} \frac{1}{4} + \frac{x}{2} + \delta v_x - p_i \right| + \left| \frac{1}{|G_i|}\sum_{x \in G_i} \frac{1}{4} + \frac{x}{2} + \delta v'_x - p_i \right| \right) \\
& \leq \frac{2\text{ECE}_{\textup{max}}(p;v,v') }{\delta}.
\end{align*}
In particular, applying this bound alongside our assumptions on $V$ gives
\begin{align*}
     \frac{k}{8} \leq \|v-v'\|_1 & \leq \sum_{i=1}^r \bone\{G_i = 1\} |v_i - v'_i|  + \sum_{i=1}^r|G_i| \bone\{G_i > 1\}\\
    & \leq \sum_{i=1}^r |G_i|  \left| \frac{1}{|G_i|}\sum_{x \in G_i}  (v_x - v'_x )\right| + \sum_{i=1}^r |G_i|\bone\{G_i > 1\}\\
    & \leq  \frac{2\text{ECE}_{\textup{max}}(p;v,v') }{\delta} + \sum_{i=1}^r |G_i|\bone\{G_i > 1\},
\end{align*}
and rearranging we have that 
\[
\sum_{i=1}^r|G_i| \bone\{G_i > 1\} \geq \frac{k}{8} - \frac{2\text{ECE}_{\textup{max}}(p;v,v') }{\delta}.
\]
On the other hand, by considering the multiaccuracy error with $g(x) = x$ we find that 
\begin{align*}
& \mme_{p_v}\left[X(Y - p(X))   \right] \\
& \geq  \sum_{i=1}^r \frac{|G_i|}{k} \left( \frac{1}{|G_i|} \sum_{x \in G_i} x \left(\frac{1}{4} + \frac{x}{2} + \delta v_x \right) - \frac{1}{|G_i|}\sum_{x \in G_i} x \left( \frac{1}{4} + \frac{1}{|G_i|}\sum_{x \in G_i} \frac{x}{2} + \delta v_x \right) \right) - \text{ECE}_{\textup{max}}(p;v,v')  \\
& \geq\sum_{i=1}^r \frac{|G_i|}{2k}  \left(  \frac{1}{|G_i|} \sum_{x \in G_i} x^2 -  \left(\frac{1}{|G_i|} \sum_{x \in G_i} x \right)^2 \right)   - \delta - \text{ECE}_{\textup{max}}(p;v,v')\\
& = \sum_{i=1}^r \frac{|G_i|}{4k}   \frac{1}{|G_i|^2} \sum_{x, x' \in G_i} \left( x-x' \right)^2  - \delta - \text{ECE}_{\textup{max}}(p;v,v') \\
& \geq \sum_{i=1}^r \frac{|G_i|}{4k}   \left(1-\frac{1}{|G_i|}\right) \frac{1}{k^2} - \delta - \text{ECE}_{\textup{max}}(p;v,v') \\
& \geq \frac{1}{8k^3} \sum_{i=1}^r |G_i|   \bone\{|G_i| > 1\} - \delta - \text{ECE}_{\textup{max}}(p;v,v')\\
& \geq \frac{1}{8k^3}\left(\frac{k}{8} - \frac{2\text{ECE}_{\textup{max}}(p;v,v') }{\delta}\right) - \delta - \text{ECE}_{\textup{max}}(p;v,v'),
\end{align*}
and rearranging the first and last inequalities gives
\[
\mme_{ p_v}\left[X(Y - p(X))   \right] + \text{ECE}_{\textup{max}}(p;v,v') + \frac{1}{4k^3}\frac{\text{ECE}_{\textup{max}}(p;v,v') }{\delta} \geq \frac{1}{64k^2} - \delta.
\]
Finally, setting $\delta = \frac{1}{128 k^2}$ we find that 
\[
\inf_p \max_{p^* \in \{p_v, p_{v'}\}}  \max\{\mme_{p^*}[X(Y - p(X))], \mme[|p(X) - \mme[p^*(X) \mid p(X)]|]\} \geq \frac{k}{k + 32} \frac{1}{128k^2}.
\]

With this inequality in hand, the proof of our desired result now follows from the following straightforward application of Fano's inequality (e.g, Lemma 3 of \citet{Yu1997}). Let $\hat{p} : \mathcal{X} \to [0,1]$ denote any estimator. Define an associated classifier by 
\[
\hat{v} \in \underset{v \in V}{\text{argmin}}   \max\{|\mme_{p_v}[X(Y - \hat{p}(X)) ]|, \mme[|p(X) - \mme[p_v(X) \mid \hat{p}(X)]|]\},
\]
where both here and in what follows the expectations are taken with respect to $(X,Y)$ with the estimator $\hat{p}(\cdot)$ (which is a random function of the training data) held fixed.
By our previous calculations, we have that for any $v^* \in V$, 
\[
  \max\left\{ \mme_{p_{v^*}}[X(Y - \hat{p}(X))], \mme[|p(X) - \mme[p_{v^*}(X) \mid \hat{p}(X)]|]\right\} \geq \frac{k}{k + 32} \frac{1}{128k^2} \bone\{\hat{v} \neq v^*\},
\]
and thus,
\begin{align*}
& \sup_{v^* \in V} \mme_{\{(X_i,Y_i)\}_{i=1}^n \stackrel{i.i.d.}{\sim} p_{v^*}}\left[ \max\left\{\mme_{p_{v^*}}[X(Y - \hat{p}(X))], \mme[|\hat{p}(X) - \mme[p_{v^*}(X) \mid \hat{p}(X)]|]\right\} \right]\\
& \geq \mme_{v^* \sim \text{Unif}(V)}\left[ \mme_{\{(X_i,Y_i)\}_{i=1}^n \stackrel{i.i.d.}{\sim} p_{v^*}}\left[ \max\left\{ \mme_{p_{v^*}}[X(Y - \hat{p}(X))], \mme[|\hat{p}(X) - \mme[p_{v^*}(X) \mid \hat{p}(X)]|]\right\} \right] \right]\\
& \geq \frac{k}{k + 16} \frac{1}{128k^2} \mmp_{v^* \sim \text{Unif}(V), \{(X_i,Y_i)\}_{i=1}^n \stackrel{i.i.d.}{\sim} p_v }(\hat{v} \neq v^*)\\
& \geq \frac{k}{k + 16} \frac{1}{128k^2} \left(1 - \frac{\frac{1}{|V|^2} \sum_{v,v' \in V} n D_{KL}(p_v || p_{v'})  + \log(2)}{\log(|V|)} \right)\\
\end{align*}
where $D_{KL}(p_v || p_{v'}) $ denotes the KL-divergence between the distribution of $(X,Y)$ under $p_v$ and $p_{v'}$. By a direct calculation,
\begin{align*}
D_{KL}(p_v || p_{v'}) & = \mme_X\left[p_v(X) \log\left(\frac{p_v(X)}{p_{v'}(X)} \right) + (1-p_v(X)) \log\left(\frac{1-p_v(X)}{1-p_{v'}(X)} \right)\right]\\
& \leq \mme_X\left[p_v(X)\left(\frac{p_v(X)}{p_{v'}(X)}  - 1 \right)  +  (1-p_v(X))\left(\frac{1-p_v(X)}{1-p_{v'}(X)}  - 1\right) \right]\\
& = \mme_X\left[\frac{(p_v(X) - p_{v'})^2}{p_{v'}(X)(1-p_{v'}(X))}\right]\\
& \leq \frac{64}{7} \delta^2,
\end{align*}
where the last inequality holds for $\delta \leq 1/8$. Plugging this into the previous expression gives a lower bound of 
\[
 \frac{k}{k + 32} \frac{1}{128k^2}\left(1-  \frac{n\frac{64}{7} \delta^2 + \log(2)}{k/4}\right) = \frac{k}{k + 32} \frac{1}{128k^2}\left(1-  \frac{n\frac{64}{7} 128^{-2}k^{-4} + \log(2)}{k/4}\right).
\]
The desired result follows immediately by taking $k = C\lceil n^{-1/5}\rceil$ for an appropriately chosen constant $C$.

\end{proof}

We next give a proof of our lower bound for multiaccuracy given in Proposition \ref{prop:ma_lower_bound}.

\begin{proof}[Proof of Proposition \ref{prop:ma_lower_bound}]
    For ease of notation, let $d:=\textup{VC}(\mathcal{G})$.
    We once again proceed using Fano's Method. By definition of the VC dimension, we may find a set of points $x_1,\dots,x_d$ such that for all $v \in \{-1,1\}^d$ there exists $g_v \in \mathcal{G}$ with $g_v(x_i) = v_i$ for all $i \in \{1,\dots,d\}$. Let $V$ be as in the proof of Proposition \ref{prop:ca_ma_lower_bound} and consider the set of distributions given by $X \sim \text{Unif}(x_1,\dots,x_d)$ and $Y \mid X \sim \text{Ber}(\frac{1+\delta g_v(X)}{2})$ for some small value $\delta > 0$ that we will specify shortly. Let $\mme_{v}$ denote the expectation over this distribution on $(X,Y)$. For any $v \neq v'$ with $v,v' \in V$ and $p : \{x_1,\dots,x_d\} \to [0,1]$ we have that 
    \begin{align*}
    & \max_{v^* \in \{v,v'\}} \sup_{g \in \mathcal{G}}\mme_{v^*}[g(X)(Y-p(X))]  = \max_{v^* \in \{v,v'\}} \sup_{g \in \mathcal{G}}\mme_X\left[g(X)\left(\frac{1+\delta g_{v^*}(X)}{2}-p(X)\right)\right]\\
    & \geq \frac{1}{2}   \sup_{g \in \mathcal{G}}\mme_X\left[g(X)\left(\frac{1+\delta g_{v}(X)}{2}-p(X)\right)\right] + \frac{1}{2} \sup_{g \in \mathcal{G}}\mme_X\left[g(X)\left(\frac{1+\delta g_{v'}(X)}{2}-p(X)\right)\right]\\
    & \geq \frac{1}{2}   \sup_{g \in \mathcal{G}} \mme_X\left[g(X)\left(\frac{1+\delta g_{v}(X)}{2}-\frac{1+\delta g_{v'}(X)}{2}\right)\right]\\
    & = \frac{\delta}{4} \mme_X[|g_v(X) - g_{v'}(X)|] = \frac{1}{4} \delta \frac{\|v - v'\|_1}{d} \geq \frac{\delta }{32}.
    \end{align*}
    So, proceeding exactly as in the proof of Proposition \ref{prop:ca_ma_lower_bound}, we obtain the lower bound,
    \[
    \min_{\hat{p}} \sup_{P_{XY}} \sup_{g \in \mathcal{G}} \mme\left[g(X)(Y-\hat{p}(X)) \right] \geq \frac{\delta }{32} \left(1- \frac{n \frac{64}{7}\delta^2 + \log(2)}{d\log(2)} \right).
    \]
    Setting $\delta = C\sqrt{d/n}$ for a sufficiently small constant $C>0$ gives the result.
\end{proof}

We now turn to the proof of Proposition \ref{prop:ca_upper_bound}. Our algorithm for obtaining calibrated multiaccuracy will follow a similar structure to the two-player game based algorithms for omniprediction introduced in Section \ref{sec:online_omni}. Namely, we expand the calibration and multiaccuracy criteria as a set of objectives and use a multiplicative weights algorithm to obtain useful mixtures of these targets.

To state this method formally, fix a hyperparameter $m \in \mmn$. Our goal will be to learn a predictor that returns randomized outputs in $\{\frac{1}{m},\frac{2}{m},\dots,1\}$. Let $\mathcal{G}_m := \{g : \{\frac{1}{m},\frac{2}{m},\dots,1\} \to \{-1,1\}\}$ denote the set of sign functions on $\{\frac{1}{m},\frac{2}{m},\dots,1\}$.  Let $\Delta_m$ denote the space of probability distributions on $\{\frac{1}{m},\frac{2}{m},\dots,1\}$ and note that for any randomized predictor $P : \mathcal{X} \to \Delta_m$ the expected calibration error can be written as
\[
\mme_{(X,Y), p \mid X \sim P(X)}[|p - \mme[Y \mid p]|] = \sup_{g \in \mathcal{G}_m}\mme_{(X,Y), p \mid X \sim P(X)}[g(p)(Y - p)].
\]
So, to guarantee calibration it is sufficient to guarantee that our predictor gives multiaccurate predictions with respect to each $g \in \mathcal{G}_m$. Combining this with the original multiaccuracy targets specified by $\mathcal{G}$ gives us the necessary set of objectives for a two-player game based algorithm. Formal statement of this method is given in Algorithm \ref{alg:ca_ma}. As stated in Proposition \ref{prop:ca_ma_bound}, this algorithm achieves calibrated multiaccuracy at a rate of $O_{\mmp}(\sqrt{\log(|\mathcal{G}|)/n} + n^{-1/3})$. This proves Proposition \ref{prop:ca_upper_bound}.

\begin{algorithm}
\KwData{Data $\{(X_i,Y_i)\}_{i=1}^n$, finite function class $\mathcal{G}$, hyperparameters $m \in \mmn$, $\eta > 0$.}
$\mathcal{G}_{\pm} = \mathcal{G} \cup \{-g : g \in \mathcal{G}\}$\;
$q_{g}(1) = \frac{1}{|\mathcal{G}_{\pm} \cup \mathcal{G}_m|}$, for all $g \in \mathcal{G}_{\pm}  \cup \mathcal{G}_m$\;
\For{$i = 1,\dots,n$ }{
    $\hat{P}_i(X) = \min_{P \in \Delta_{m}} \max_{p_y \in [0,1]} \sum_{g \in \mathcal{G}_{\pm}} q_g(i) \mme_{p \sim P}[g(X)(p_y - p)] +  \sum_{g \in \mathcal{G}_m} q_g(i) \mme_{p \sim P}[g(p)(p_y - p)]$\;
    $\tilde{q}_g(i+1) = \tilde{q}_g(i) \exp(\eta \mme_{p \sim \hat{P}_i(X_i)}[g(X_i)(Y_i - p)])$, $\forall g \in \mathcal{G}_{\pm}$\;
    $\tilde{q}_g(i+1) = \tilde{q}_g(i) \exp(\eta \mme_{p \sim \hat{P}_i(X_i)}[g(p)(Y_i - p)])$, $\forall g \in \mathcal{G}_m$\;
    $q_g(i+1) = \frac{\tilde{q}_g(i+1)}{\sum_{g' \in \mathcal{G}_{\pm} \cup \mathcal{G}_m} \tilde{q}_{g'}(i+1)}$ , $\forall g \in \mathcal{G}_{\pm} \cup \mathcal{G}_m$\;
}
    \Return $\hat{P} = \frac{1}{n}\sum_{i=1}^n \hat{P}_i $
\caption{Calibrated Multiaccuracy}
\label{alg:ca_ma}
 \end{algorithm}

 \begin{proposition}\label{prop:ca_ma_bound}
     Let $\hat{P}$ denote the randomized predictor returned by Algorithm \ref{alg:ca_ma} with hyperparameters $\eta = \sqrt{(\log(|\mathcal{G}|) + m)/n}$ and $m = \lceil n^{1/3} \rceil$. Then,
     \[
     \max \left\{ \sup_{g \in \mathcal{G}} \left|\mme_{(X,Y),p \sim \hat{P}(X)}\left[g(X)(Y - p)\right] \right|, \mme_{(X,Y),p \sim \hat{P}(X)}\left[\left|p - \mme[Y\mid p]\right|\right] \right\} \leq O_{\mmp}\left( \sqrt{\frac{\log(|\mathcal{G}|)}{n}} + \frac{1}{n^{1/3}} \right).
     \]
 \end{proposition}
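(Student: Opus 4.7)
The plan is to follow the template of the two-player game analysis in Section \ref{sec:online_omni}, but adapted to the combined test-function class $\mathcal{G}_{\pm} \cup \mathcal{G}_m$ and with an extra discretization error because, unlike the $\ell_\theta$ losses, the multiaccuracy and calibration scores are continuous in the prediction. Step one is the analogue of Lemma \ref{lem:online_omni_min_max_program_bound}: bound the value of the inner min-max program by $O(1/m)$. By von Neumann's theorem I can swap $\min_P$ and $\max_{p_y}$; for any fixed $p_y$, choosing $P$ to be the Dirac at $\tilde p_y \in \{1/m,\dots,1\}$ closest to $p_y$ turns the mixture into $(p_y-\tilde p_y)\bigl[\sum_{g\in\mathcal{G}_\pm} q_g g(x) + \sum_{g\in\mathcal{G}_m} q_g g(\tilde p_y)\bigr]$, which is bounded in absolute value by $|p_y-\tilde p_y|\le 1/(2m)$.

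Step two is a standard hedge regret bound over the $|\mathcal{G}_\pm|+|\mathcal{G}_m| = 2|\mathcal{G}| + 2^m$ experts with bounded payoffs: for the prescribed $\eta = \sqrt{(\log|\mathcal{G}|+m)/n}$, multiplicative weights guarantees that for any fixed $g^\star \in \mathcal{G}_\pm \cup \mathcal{G}_m$,
\[
\frac{1}{n}\sum_{i=1}^n L_{g^\star}(i) \;\le\; \frac{1}{n}\sum_{i=1}^n \sum_g q_g(i) L_g(i) \;+\; O\bigl(\sqrt{(\log|\mathcal{G}|+m)/n}\bigr),
\]
where $L_g(i)$ denotes $\mathbb{E}_{p\sim \hat P_i(X_i)}[g(X_i)(Y_i-p)]$ for $g\in\mathcal{G}_\pm$ and $\mathbb{E}_{p\sim \hat P_i(X_i)}[g(p)(Y_i-p)]$ for $g\in\mathcal{G}_m$. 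Step three combines the first two. Since $\hat P_i$ solves the min-max against mixture $q(i)$, plugging $p_y = p^\star(X_i) := \mathbb{P}(Y=1\mid X_i)$ into the inner max shows that $\mathbb{E}[\sum_g q_g(i) L_g(i)\mid \mathcal{F}_{i-1}]\le 1/(2m)$, where $\mathcal{F}_{i-1}$ is the history through step $i-1$. The centered increments are bounded, so Azuma-Hoeffding converts this conditional bound into a high-probability bound on the empirical mixture average, giving uniformly in $g^\star$ that $\frac{1}{n}\sum_i L_{g^\star}(i) \le O(1/m + \sqrt{(\log|\mathcal{G}|+m)/n})$.

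Step four translates this online empirical bound into a population bound on $\hat P = \frac{1}{n}\sum_i \hat P_i$. Because $\hat P_i$ is $\mathcal{F}_{i-1}$-measurable, for each fixed $g^\star$ the sequence $L_{g^\star}(i) - \mathbb{E}_{(X,Y)}\mathbb{E}_{p\sim \hat P_i(X)}[g^\star(\cdot)(Y-p)]$ is a bounded martingale difference sequence; Azuma-Hoeffding plus a union bound over $\mathcal{G}_\pm\cup\mathcal{G}_m$ contributes another $O(\sqrt{(\log|\mathcal{G}|+m)/n})$. The linearity $\mathbb{E}_{p\sim \hat P(X)}[\cdot] = \frac{1}{n}\sum_i \mathbb{E}_{p\sim \hat P_i(X)}[\cdot]$ and the identities $\sup_{g\in\mathcal{G}_\pm} \mathbb{E}[g(X)(Y-p)] = \sup_{g\in\mathcal{G}}|\mathbb{E}[g(X)(Y-p)]|$ and $\sup_{g\in\mathcal{G}_m}\mathbb{E}[g(p)(Y-p)] = \mathbb{E}[|p-\mathbb{E}[Y\mid p]|]$ (the latter because $\mathcal{G}_m$ contains every sign function on the support of $\hat P(X)$) then yield the required bound. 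Setting $m = \lceil n^{1/3}\rceil$ balances the discretization term $1/m$ against the $\sqrt{m/n}$ piece of the hedge regret and gives the stated rate $O_\mathbb{P}(\sqrt{\log|\mathcal{G}|/n} + n^{-1/3})$. The main obstacle is the discretization/generalization trade-off: because $|\mathcal{G}_m|=2^m$ grows rapidly, the uniform-convergence step costs $\sqrt{m/n}$, and one must choose $m$ to balance this against the $1/m$ penalty from step one; the choice $m=\lceil n^{1/3}\rceil$ is precisely what makes both terms comparable.
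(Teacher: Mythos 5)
Your proof is essentially correct and follows the same approach as the paper's: bound the value of the min-max program by $O(1/m)$ via von Neumann and a Dirac at the nearest grid point, invoke the hedge regret bound over the combined expert set $\mathcal{G}_\pm \cup \mathcal{G}_m$ of size $2|\mathcal{G}| + 2^m$, pass from online empirical error to population error via Azuma--Hoeffding with a union bound, and balance $1/m$ against $\sqrt{m/n}$ by taking $m \asymp n^{1/3}$. The one structural deviation is in your Step 3: you bound $\mme\bigl[\sum_g q_g(i)L_g(i)\mid\mathcal{F}_{i-1}\bigr]$ by plugging $p_y = p^*(X_i)$ into the inner max, and then apply a second Azuma--Hoeffding to transfer this conditional bound to the empirical average. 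The paper's argument is cleaner here: since the realized $Y_i \in \{0,1\} \subset [0,1]$, one may take $p_y = Y_i$ directly, giving the deterministic bound $\sum_g q_g(i) L_g(i) \le 1/m$ for every $i$ with no additional concentration step. Your detour is correct but costs an unnecessary extra $O_\mmp(1/\sqrt{n})$ term. Also note a small constant slip: the grid $\{1/m,\dots,1\}$ omits $0$, so the worst-case distance from $p_y \in [0,1]$ to the grid is $1/m$, not $1/(2m)$; this does not affect the rate.
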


\begin{proof}
We first show that $\hat{P}$ is multiaccurate. Fix any $g \in \mathcal{G}$. By definition, we have that 
\[
\mme_{(X,Y), p \sim \hat{P}(X)}[g(X)(Y - p)] = \frac{1}{n} \sum_{i=1}^n \mme_{X,Y, p \sim \hat{P}_i(X)}[g(X)(Y - p)].
\]
Now, by the Azuma-Hoeffding inequality (Theorem \ref{thm:AH} below) we may guarantee that for any $c>0$,
\begin{align*}
& \mmp\left( \sup_{g \in \mathcal{G}} \left| \frac{1}{n} \sum_{i=1}^n \mme_{(X,Y), p \sim \hat{P}_i(X)}[g(X)(Y - p)] - \frac{1}{n} \sum_{i=1}^n \mme_{p \sim \hat{P}_i(X_i)}[g(X_i)(Y_i - p)] \right| \geq c\sqrt{\frac{\log(|\mathcal{G}|)}{n}} \right)\\ 
&  \leq  2\exp\left(-\frac{c^2}{8}\right).
\end{align*}
Applying this to the previous expression, we find that 
\begin{align*}
\mme_{(X,Y), p \sim \hat{P}(X)}[g(X)(Y - p)] \leq \frac{1}{n} \sum_{i=1}^n \mme_{p \sim \hat{P}_i(X_i)}[g(X_i)(Y_i - p)] + O_{\mmp}\left( \sqrt{\frac{\log(|\mathcal{G}|)}{n}}\right).
\end{align*}
The updates for $q_g$ given in Algorithm \ref{alg:ca_ma} are exactly the updates for the well-known Hedge method \citep{Vovk1990, Littlestone1994, Freund1997}. By standard regret bounds for this algorithm (see Theorem \ref{thm:hedge} below), we have the inequality 
\begin{align*}
\frac{1}{n} \sum_{i=1}^n \mme_{p \sim \hat{P}_i}[g(X_i)(Y_i - p)] \leq \frac{1}{n} & \sum_{i=1}^n \sum_{g' \in \mathcal{G}_{\pm}} q_{g'}(i) \mme_{p \sim \hat{P}_i(X_i)}[g'(X_i)(Y_i - p)]\\
& + \frac{1}{n} \sum_{i=1}^n\sum_{g' \in \mathcal{G}_{m}} q_{g'}(i) \mme_{p \sim \hat{P}_i(X_i)}[g'(p)(Y_i - p)] + O\left(\sqrt{\frac{\log(|\mathcal{G}|) + m}{n}} \right).
\end{align*}
Finally, by definition of $\hat{P}_i(X_i)$ and von Neumann's minimax theorem \citep{vonNeumann1944},
\begin{align*}
& \sum_{g' \in \mathcal{G}_{\pm}} q_{g'}(i) \mme_{p \sim \hat{P}_i(X_i)}[g'(X_i)(Y_i - p)] + \sum_{g' \in \mathcal{G}_{m}} q_{g'}(i) \mme_{p \sim \hat{P}_i(X_i)}[g'(p)(Y_i - p)] \\
& \leq  \min_{P \in \Delta_m} \sup_{p_y \in [0,1]} \sum_{g' \in \mathcal{G}_{\pm}} q_{g'}(i) \mme_{p \sim P}[g'(X_i)(p_y - p)] + \sum_{g' \in \mathcal{G}_{m}} q_{g'}(i) \mme_{p \sim P}[g'(p)(p_y - p)]\\
& =  \sup_{p_y \in [0,1]} \min_{P \in \Delta_m}  \sum_{g' \in \mathcal{G}_{\pm}} q_{g'}(i) \mme_{p \sim P}[g'(X_i)(p_y - p)] + \sum_{g' \in \mathcal{G}_{m}} q_{g'}(i) \mme_{p \sim P}[g'(p)(p_y - p)]\\
& \leq \frac{1}{m},
\end{align*}
where to get the last inequality one may simply set $P$ to give probability one to the element of $\{\frac{1}{m},\frac{2}{m},\dots,1\}$ that is closest to $p_y$.

Combining all of the previous steps, we arrive at the final bound
\[
\sup_{g \in \mathcal{G}}  \mme_{(X,Y), p \sim \hat{P}(X)}[g(X)(Y - p)] \leq O_{\mmp}\left( \sqrt{\frac{\log(|\mathcal{G}|)}{n}}\right) + O\left(\sqrt{\frac{\log(|\mathcal{G}|) + m}{n}} \right) + \frac{1}{m} = O_{\mmp}\left( \sqrt{\frac{\log(|\mathcal{G}|)}{n}} + \frac{1}{n^{1/3}}\right) ,
\]
by our choice of $m = \lceil n^{1/3} \rceil$. A bound on the multiaccuracy follows by applying the same argument to $-g$. 

Finally, to bound the expected calibration error we simply note that since $\hat{P}$ is supported on $\{\frac{1}{m}, \frac{2}{m},\dots, 1\}$
\[
\mme_{(X,Y),p   \sim \hat{P}(X)}[\left|p - \mme[Y \mid p] \right|] =\mme_{(X,Y),p  \sim \hat{P}(X)}[ \text{sign}(p - \mme[Y \mid p])(p-Y)] = \sup_{g \in \mathcal{G}_m} \mme_{(X,Y),p   \sim \hat{P}(X)}[g(p)(p - Y)]. 
\]
This final quantity can be bounded by following the preceding argument for the bound on the multiaccuracy.

\end{proof}

\section{Extensions of Theorem \ref{thm:proper_loss_decomp} beyond left-continuity}\label{sec:app_left_cont_relaxation}

While we will not pursue this in detail, it is possible to extend Theorem \ref{thm:proper_loss_decomp} beyond left-continuous losses. To motivate this, let us first consider the discontinuity point of $\ell_{\theta}$. From (\ref{eq:ell_theta_proper}), we see that when the true underlying probability is equal to $\theta$ all predictions have the same expected loss. As a result, one can modify the value of the loss substantially at $p=\theta$ without affecting its propriety. Indeed, with some additional calculation one can verify that the family of losses 
\[
\ell_{\theta,\beta} = \begin{cases}
    \theta, \text{ if } p > \theta \text{ and } y = 0,\\
    1-\theta, \text{ if } p < \theta \text{ and } y = 1,\\
    \theta(1-\theta) + \beta(y-\theta), \text{ if } p = \theta,
\end{cases}
\]
is proper for all $\theta \in [0,1]$ and $\beta \in [-\theta,1-\theta]$. By varying the parameter $\beta$, one can encode a variety of different jump discontinuities in $\ell_{\theta,\beta}$. While not a complete proof, the calculations in \citet{Kleinberg2023} suggest that these jumps are in fact sufficient to capture all possible discontinuities in proper losses and, in particular, to extend Theorem \ref{thm:proper_loss_decomp} to a decomposition of arbitrary proper losses in terms of mixtures over the two-parameter class $\{\ell_{\theta,\beta} : \theta \in [0,1], \beta \in [-\theta,1-\theta]\}$. As discussed in the main text, we do not believe that this extra layer of complexity has a large impact on practical results for omniprediction and thus we have chosen to omit these details and restrict ourselves to left-continuous losses.

\section{Proofs for Section \ref{sec:simplification}}

In this section we prove Lemma \ref{lemma:disc_bound}.

\begin{proof}[Proof of Lemma \ref{lemma:disc_bound}]
    Fix any $\theta \in [0,1]$ and  $\epsilon > 0$. Let $f_{\theta,\epsilon}$ be such that
    \[
    \sup_{f \in \mathcal{F}} \mme[\ell_{\theta}(p(X),Y)] - \mme[\ell_{\theta}(f(X),Y)] \leq \mme[\ell_{\theta}(p(X),Y) - \mme[\ell_{\theta}(f_{\theta,\epsilon}(X),Y)] + \epsilon.
    \]
    Let $\theta_i$ denote the value on the grid $\{\frac{i}{m} - \frac{1}{2m} : i \in \{1,\dots,m\}\}$ that is closest to $\theta$ with the extra specification that in the case of ties we always round up. By our assumption of the support of $p(\cdot)$ we have that
    \begin{align*}
    |\mme[\ell_{\theta}(p(X),Y) - \ell_{\theta_i}(p(X),Y)]| & = \left|\mme\left[\left(\theta - \theta_i\right)\bone\{Y=0,p(X) > \theta\} + \left( \theta_i - \theta\right)\bone\{Y=1,p(X) \leq \theta\} \right]\right|\\
    & \leq \frac{1}{2m}.
    \end{align*}
    Similarly, we also have 
    \begin{align*}
& \left|\mme\left[\ell_{\theta}(f_{\theta,\epsilon}(X),Y) - \ell_{\theta_i}\left(f_{\theta,\epsilon}(X) -\theta + \theta_i,Y\right)\right]\right|\\
& = \left|\mme\left[\left(\theta - \theta_i \right)\bone\{Y=0,f_{\theta,\epsilon}(X) > \theta\} + \left( \theta_i - \theta\right)\bone\{Y=1,f_{\theta,\epsilon}(X) \leq \theta\} \right]\right| \leq \frac{1}{2m}.
    \end{align*}
So, putting these two facts together we find that
    \[
    \mme[\ell_{\theta}(p(X),Y) - \ell_{\theta}(f_{\theta,\epsilon}(X),Y)] \leq  \sup_{f \in \mathcal{F}}\mme[\ell_{\theta_i}(p(X),Y) - \ell_{\theta_i}(f(X),Y)] + \frac{1}{m},
    \]
    and sending $\epsilon \to 0$ gives the desired result.
\end{proof}

\section{Proofs for Section \ref{sec:algorithms}}

\subsection{Proofs for Section \ref{sec:online_omni}}\label{app:online_omni}

In this section we prove Lemma \ref{lem:min_max_opt_char} and Theorem \ref{thm:online_omni_bound}.

\begin{proof}[Proof of Lemma \ref{lem:min_max_opt_char}]
 As stated in the main text, we consider the distribution $P^* = (1-\rho^*) \delta_{\theta^*} + \rho^* \delta_{\theta^* + i/m}$ where
\begin{align*}
    & \theta^* = \sup \left\{\theta \in \left\{0,\frac{1}{m},\frac{2}{m},\dots,1\right\} :  \sum_{i=1}^{m}  q_{i} \bone\{\theta \leq \theta_i\} \geq \sum_{i=1}^{m}  q_{i} \bone\{\hat{f}_{\theta_i}(x) \leq \theta_i\}  \right\}\\
    \text{and } \  & \rho^* = \frac{\sum_{i=1}^{m} q_{i} \bone\{\theta^*  \leq \theta_i\} - \sum_{i=1}^{m} q_{i} \bone\{\hat{f}_{\theta_i}(x) \leq \theta_i\} }{q_{m\theta^* 1}},
\end{align*}
with the caveat that for ease of notation we define $q_{m+1} = 1$ so that $\rho^* = 0$ when $\theta^* = 1$. In addition, let $p_y^* := \min\{\theta^* + \frac{1}{2m},1\}$. To prove that $P^*$ is optimal it is sufficient to prove that the pair $(P^*, p_y^*)$ is a saddle point to the min-max program. To see this, first note that for any $(P,p_y)$ the optimization objective can be written as
\begin{align*}
     O(P,p_y) & :=\mme_{p \sim P,Y' \sim \text{Ber}(p_y)}\left[ \sum_{i=1}^{m} q_{i} (\ell_{\theta_i}(p,Y') - \ell_{\theta_i}(\hat{f}_{\theta_i}(x),Y'))\right]\\
& = \mme_{p \sim P,Y' \sim \text{Ber}(p_y)}\bigg[ \sum_{i=1}^{m} q_{i}\bigg(\theta_i \bone\left\{p>\theta_i, Y' = 0\right\} + \left(1-\theta_i\right) \bone\left\{p\leq \theta_i, Y' = 1\right\}\\
& \hspace{3cm} - \theta_i \bone\left\{\hat{f}_{\theta_i}(x)>\theta_i, Y' = 0\right\}  - \left(1-\theta_i\right) \bone\left\{\hat{f}_{\theta_i}(x) \leq \theta_i, Y' = 1 \right\} \bigg) \bigg]\\
& = \mme_{p \sim P}\bigg[  \sum_{i=1}^{m} q_{i} \bigg(\theta_i (1-p_y) \bone\left\{p > \theta_i\right\} + \left(1-\theta_i\right) p_y \bone\left\{p \leq \theta_i\right\}\\
& \hspace{3cm} - \theta_i (1-p_y) \bone\left\{\hat{f}_{\theta_i}(x)>\theta_i\right\} - \left(1-\theta_i \right) p_y \bone\left \{\hat{f}_{\theta_i}(x) \leq \theta_i\right\} \bigg) \bigg]\\
& =  \mme_{p \sim P}\left[  \sum_{i=1}^{m} q_{i} \left(p_y - \theta_i\right) \left(\bone\left\{p \leq \theta_i\right\} - \bone\left\{\hat{f}_{\theta_i}(x) \leq \theta_i\right\}\right)\right]
\end{align*}

Now, plugging in our choice of $P^*$ gives an objective value of
\begin{align*}
O(P^*,p_y) & =   \sum_{i=1}^{m} q_{i} p_y \left( \bone\left\{ \theta^* \leq \theta_i \right\} - \bone\left\{\hat{f}_{\theta_i}(x) \leq \theta_i\right\} \right) - \rho^*p_y q_{m\theta^* + + 1} \\
& \ \ \ -  \mme_{p \sim P}\left[  \sum_{i=1}^{m} q_{i}  \theta_i \left(\bone\left\{p \leq \theta_i\right\} - \bone\left\{\hat{f}_{\theta_i}(x) \leq \theta_i\right\}\right)\right]\\
& = -  \mme_{p \sim P}\left[  \sum_{i=1}^{m} q_{i}  \theta_i \left(\bone\left\{p \leq \theta_i\right\} - \bone\left\{\hat{f}_{\theta_i}(x) \leq \theta_i\right\}\right)\right],
\end{align*}
where the second equality follows immediately from our choice of $\rho^*$. Since this last expression does not depend on $p_y$, we must have that $O(P^*,p^*_y) = \max_{p_y \in [0,1]} O(P^*,p_y)$.

On the other hand, since the losses $\{\ell_{\theta_i}\}_{i=1}^{m}$ are proper we must have that at $p_y = p_y^*$, $O(P,p_y^*)$ is minimized by setting $P = \delta_{p_y^*}$. Moreover, it is easy to check that for all $i \in \{1,\dots,m\}$,
\[
\mme_{Y' \sim \text{Ber}(p_y^*)}[\ell_{\theta_i}(p_y^*,Y')] = \mme_{Y' \sim \text{Ber}(p_y^*)}[\ell_{\theta_i}(\theta^*,Y')] = \mme_{Y' \sim \text{Ber}(p_y^*)}[\ell_{\theta_i}(\theta^* + 1/m,Y')].
\]
In particular, this implies that $O(P^*,p_y^*) = O(\delta_{p_y^*},p_y^*)$ and thus that  $O(P^*,p_y^*) = \min_{P \in \Delta_m} O(P,p_y^*)$, as desired.

\end{proof}

\begin{proof}[Proof of Theorem \ref{thm:online_omni_bound}]
For ease of notation, note that in what follows all expectations treat $\hat{P}(\cdot)$ as fixed and are taken only with respect to the variables appearing in the associated subscripts. By the results of Section \ref{sec:simplification}, it is sufficient to bound
\[
\sup_{i \in \{1,\dots,m\}} \mme_{(X,Y)}[\mme_{p \sim \hat{P}(X)}[\ell_{\theta_i}(p,Y)] ] - \mme_{(X,Y)}[\ell_{\theta_i}(\hat{f}_{\theta_i}(X),Y)].
\]
Fix any $i \in \{1,\dots,m\}$. By definition of $\hat{P}$, we have that 
\begin{align*}
& \mme_{(X,Y)}[\mme_{p \sim \hat{P}(X)}[\ell_{\theta_i}(p,Y)] ] - \mme_{(X,Y)}[\ell_{\theta_i}(\hat{f}_{\theta_i}(X),Y)]\\
 =  & \frac{1}{n} \sum_{t=1}^n (\mme_{(X,Y)}[\mme_{p \sim \hat{P}_t(X)}[\ell_{\theta_i}(p,Y)] ] - \mme_{(X,Y)}[\ell_{\theta_i}(\hat{f}_{\theta_i}(X),Y)]).
\end{align*}
Now, consider the martingale 
\[
M_t(i) = \sum_{s=1}^t (\mme_{p \sim \hat{P}_s(X_s)}[\ell_{\theta_i}(p,Y_s)] - \ell_{\theta_i}(\hat{f}_{\theta_i}(X_s),Y_s)]) - (\mme_{(X,Y)}[\mme_{p \sim \hat{P}_s(X)}[\ell_{\theta_i}(p,Y)] - \ell_{\theta_i}(\hat{f}_{\theta_i}(X),Y)]).
\]
By the Azuma-Hoeffding inequality (Theorem \ref{thm:AH} below), 
\[
\sup_{i \in \{1,\dots,m\}} |M_n(i)|/n \leq O_{\mmp}(\sqrt{\log(m)/n}),
\]
and so, in particular, 
\begin{align*}
& \mme_{(X,Y)}[\mme_{p \sim \hat{P}(X)}[\ell_{\theta_i}(p,Y)] ] - \mme_{(X,Y)}[\ell_{\theta_i}(\hat{f}_{\theta_i}(X),Y)]\\
 \leq & \frac{1}{n} \sum_{s=1}^n (\mme_{p \sim \hat{P}_s(X_s)}[\ell_{\theta_i}(p,Y_s)] - \ell_{\theta_i}(\hat{f}_{\theta_i}(X_s),Y_s))  + O_{\mmp}(\sqrt{\log(m)/n}).
\end{align*}
Now, by standard regret bounds for the hedge algorithm (Theorem \ref{thm:hedge} below) the first term above is itself bounded by 
\[
\frac{1}{n} \sum_{s=1}^n \sum_{j=1}^{m} q_{j}(s) (\mme_{p \sim \hat{P}_s(X_s)}[\ell_{\theta_j}(p,Y_s)] - \ell_{\theta_j}(\hat{f}_{\theta_j}(X_s),Y_s)) + 4 \eta  + \frac{\log(m)}{n\eta},
\]
and by Lemma \ref{lem:online_omni_min_max_program_bound} we know that the first term above is non-positive. Putting all of the above inequalities together, we find that 
\[
\sup_{i \in \{1, \dots,m\}} \mme_{(X,Y)}[\mme_{p \sim \hat{P}(X)}[\ell_{\theta_i}(p,Y)] ] - \mme_{(X,Y)}[\ell_{\theta_i}(\hat{f}_{\theta_i}(X),Y)] \leq O_{\mmp}(\sqrt{\log(m)/n})+   \eta  + \frac{\log(m)}{n\eta},
\]
and plugging in our choices of $\eta$ and $m$ gives the desired result.
\end{proof}

\subsection{Proofs for Section \ref{sec:direct_omni}}\label{app:direct_omni}

In this section we prove Lemma \ref{lem:merge_performance} and Theorem \ref{thm:direct_omni_bound}. We begin by stating a more detailed version of our merge algorithm which defines a number of additional quantities that will be useful in the proof. Most crucially, we use $A_{h,\cdot}$ and $A_{l,\cdot}$ to denote the sets on which $\hat{p}_{m}(x) = \hat{p}_{h}(x)$ and  $\hat{p}_{m}(x) = \hat{p}_l(x)$, and we use $ \{\theta_{h,0}^s,\dots,\theta_{h,k_h}^s\}$ and $\{\theta_{l,0}^s,\dots,\theta_{l,k_l}^s\}$ to denote the sets of parameters where the algorithm switches direction (i.e.~swaps from examining parameters in $\Theta_h$ to examining parameters in $\Theta_l$ and vice versa).

\begin{algorithm}[H]
\KwData{Predictors $\hat{p}_h$ and $\hat{p}_l$, sets $\Theta_{h}> \Theta_l$, hyperparameter $\epsilon$.}
$\theta_{l,0}^s = \theta_{h,0}^s = \theta_{h,0} =  -1$\;
$\theta_{l,0} = 1$\;
$k_l = k_h = 0$\;
$t = 1$\;
$A_{l,1} = \emptyset$\;
$A_{h,1} = \mathcal{X}$\;
$\theta_{l,1} = \max\Theta_l $\;
$\theta_{h,1} = \min\Theta_h $\;
$\textup{dir}(1) = \textup{low}$\;
\While{$\theta_l \neq -\infty$, $\theta_{h} \neq \infty$}{
$E = \bone\{x : \hat{p}_h(x) > \theta_{h,t},\ \hat{p}_l(x) \leq \theta_{l,t}\}$\;
\uIf{$\textup{dir}(t) =  \textup{low}$}{
    \uIf{$\hat{\mme}_n[(\ell_{\theta_{l,t}}(0,Y)- \ell_{\theta_{l,t}}(1,Y))\bone\{X \in E\}] < -\epsilon$}{
            $A_{l,t+1} = A_{l,t} \cup  E$\;
        $A_{h,t+1} = A_{h,t} \setminus E$\; 
        $\theta_{h,t+1} = \min\{\theta \in 
        \Theta_{h} : \theta > \theta_{h,t}\}$\;
        $\textup{dir}(t+1) = \textup{high}$\;
        $k_l = k_l + 1$\;
        $\theta_{l,k_l}^s = \theta_{l,t}$\;
    }\Else{
            $\theta_{l,t+1} = \max\{\theta \in 
        \Theta_l : \theta < \theta_{l,t}\}$\;
        $\textup{dir}(t+1) = \textup{low}$\;
    }
    
}\Else{
    \uIf{$\hat{\mme}_n[(\ell_{\theta_{h,t}}(1,Y)- \ell_{\theta_{h,t}}(0,Y))\bone\{X \in E\}] < -\epsilon$ }{
        $A_{h,t+1} = A_{h,t} \cup E$\;
        $A_{l, t+1} = A_{l,t} \setminus  E$\; 
        $\theta_{l,t+1} = \max\{\theta \in 
        \Theta_l : \theta < \theta_{l,t}\}$\;
        $\textup{dir}(t+1) = \textup{low}$\;
        $k_{h} = k_{h} + 1$\;
        $\theta_{h,k_{h}}^s = \theta_{h,t}$\;
    }\Else{
        $\theta_{h,t+1} = \min\{\theta \in \Theta_{h} : \theta > \theta_{h,t}\}$\;
        $\textup{dir}(t+1) = \textup{high}$\;
    }
}
$t = t+1$\;
}
    \Return $\hat{p}_{m}(X) =   \hat{p}_l(X) \bone\{X \in A_{l}\} + \hat{p}_{h}(X) \bone\{X \in A_{h}\} $
\caption{Detailed merge procedure}\label{alg:merge_detailed}
 \end{algorithm}

We will now prove Lemma \ref{lem:merge_performance} using a sequence of sublemmas. As a final piece of notation, we let $c_{h,t} = |\{s < t : \text{dir}(s) = \text{high},\ \text{dir}(s+1) = \text{low}\}|$ and $c_{\ell,t} = |\{s < t : \text{dir}(s) = \text{low},\ \text{dir}(s+1) = \text{high}\}|$ denote the number of times the direction switches from high (resp. low) to low (resp. high) before timestep $t$. Our first lemma characterizes the structure of the sets $A_{h,t}$ and $A_{l,t}$.

\begin{lemma}\label{lem:merge_set_char}
    Let $\Theta_h > \Theta_l$ be finite sets and assume that $\hat{p}_h$ and $\hat{p}_l$ take values in $[0,1]$. For each timestep $t$ on which $\textup{dir}(t) = \textup{high}$,
    \begin{equation}
    \begin{gathered}
    A_{h,t} = \bigcup_{i=1}^{c_{h,t}} \{x : \theta^s_{h,i-1} < \hat{p}_{h}(x) \leq \theta^s_{h,i},\ \hat{p}_l(x) > \theta^s_{\ell,i}  \} \cup \{x : \hat{p}_{h}(x) > \theta^s_{h,c_{h,t}},\ \hat{p}_l(x) > \theta^s_{\ell,c_{\ell,t}}\},\\
    A_{l,t} = \bigcup_{i=1}^{c_{\ell,t}} \{x : \theta^s_{\ell,i} < \hat{p}_l(x) \leq \theta^s_{\ell,i-1},\ \hat{p}_{h}(x) \leq \theta^s_{h,i-1}  \} \cup \{x : \hat{p}_l(x) \leq \theta^s_{\ell,c_{\ell,t}}\}.
    \end{gathered}
    \end{equation}
Moreover, for each timestep $t$ on which $\textup{dir}(t) = \textup{low}$,
 \begin{equation}
    \begin{gathered}
    A_{h,t} = \bigcup_{i=1}^{c_{h,t}} \{x : \theta^s_{h,i-1} < \hat{p}_{h}(x) \leq \theta^s_{h,i},\  \hat{p}_l(x) > \theta^s_{\ell,i}  \} \cup \{x : \hat{p}_{h}(x) > \theta^s_{h,c_{h,t}}\},\\
    A_{l,t} = \bigcup_{i=1}^{c_{\ell,t}} \{x : \theta^s_{\ell,i} < \hat{p}_l(x) \leq \theta^s_{\ell,i-1},\ \hat{p}_{h}(x) \leq \theta^s_{h,i-1}  \} \cup \{x : \hat{p}_{h}(x) \leq \theta^s_{h,c_{h,t}},\ \hat{p}_l(x) \leq \theta^s_{\ell,c_{\ell,t}}\}.
    \end{gathered}
    \end{equation}
\end{lemma}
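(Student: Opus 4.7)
My plan is to prove Lemma \ref{lem:merge_set_char} by induction on $t$, branching on the four transitions in Algorithm \ref{alg:merge_detailed}: $\textup{dir}(t)\in\{\textup{low},\textup{high}\}$ combined with whether the empirical sign condition is met. Two bookkeeping invariants make the casework manageable. First, the scalar pointers track the switch sequences: when $\textup{dir}(t)=\textup{low}$ one has $\theta_{h,t}=\theta^s_{h,c_{h,t}}$, and when $\textup{dir}(t)=\textup{high}$ one has $\theta_{l,t}=\theta^s_{\ell,c_{\ell,t}}$; both hold because the relevant pointer is frozen at the previous direction switch and recorded at that moment as the most recent element of the associated $\theta^s$ sequence. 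Second, because the algorithm starts low and alternates, $c_{\ell,t}=c_{h,t}$ when $\textup{dir}(t)=\textup{low}$ and $c_{\ell,t}=c_{h,t}+1$ when $\textup{dir}(t)=\textup{high}$. The base case $t=1$ is immediate from the initialization $c_{h,1}=c_{\ell,1}=0$ and the sentinel values $\theta^s_{h,0}=-1$, $\theta^s_{\ell,0}=1$: the unions are empty and the terminal sets collapse to $\mathcal{X}$ and $\emptyset$, matching $A_{h,1}=\mathcal{X}$ and $A_{l,1}=\emptyset$.

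For the inductive step, the two branches where the sign condition is not met leave $A_h$, $A_l$, the counts, the switch thresholds, and the direction unchanged, so the claim propagates trivially. The two condition-met branches are symmetric under $h\leftrightarrow\ell$, so it suffices to handle the low-to-high transition in detail. Using the invariant at $t$, I would first rewrite $E$ in terms of the switch sequences as $E=\{x:\hat{p}_h(x)>\theta^s_{h,c_{h,t}},\hat{p}_l(x)\leq \theta^s_{\ell,c_{\ell,t+1}}\}$, using $\theta^s_{\ell,c_{\ell,t+1}}=\theta_{l,t}$. The verification of $A_{l,t+1}=A_{l,t}\cup E$ then reduces to splitting the old terminal set of $A_{l,t}$ along $\hat{p}_l$ at the new threshold $\theta^s_{\ell,c_{\ell,t+1}}$: the lower piece merges with $E$ to form the new terminal $\{x:\hat{p}_l\leq\theta^s_{\ell,c_{\ell,t+1}}\}$ predicted by the dir-high formula, while the upper slice $\{x:\theta^s_{\ell,c_{\ell,t+1}}<\hat{p}_l\leq\theta^s_{\ell,c_{\ell,t}},\hat{p}_h\leq\theta^s_{h,c_{h,t}}\}$ becomes the new $i=c_{\ell,t+1}$ term of the union. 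Symmetrically, $A_{h,t+1}=A_{h,t}\setminus E$ is verified by intersecting the old terminal of $A_{h,t}$ with the complementary half-plane $\{\hat{p}_l>\theta^s_{\ell,c_{\ell,t+1}}\}$, which yields the dir-high terminal $\{x:\hat{p}_h>\theta^s_{h,c_{h,t+1}},\hat{p}_l>\theta^s_{\ell,c_{\ell,t+1}}\}$ without altering any union term (since $c_{h,t+1}=c_{h,t}$).

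The main obstacle is the index arithmetic. The dir-high and dir-low formulas pair $\theta^s_h$ and $\theta^s_\ell$ indices through slightly offset subscripts (the $i$-th term of the $A_l$ union references $\theta^s_{h,i-1}$, while the $i$-th term of the $A_h$ union references $\theta^s_{\ell,i}$), so translating between expressions such as $\theta^s_{h,c_{\ell,t+1}-1}$ and $\theta^s_{h,c_{h,t+1}}$ requires repeated appeals to the parity relation. Once that matching is in place, the remaining work is a sequence of disjoint-union rewrites and the induction closes.
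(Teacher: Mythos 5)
Your proposal is correct and takes essentially the same route as the paper: both proceed by induction on $t$, observe that the condition-not-met branches leave the sets, counts, and switch thresholds untouched (so there is nothing to prove), and handle the low-to-high switch (the other case being symmetric) by unfolding $A_{h,t+1}=A_{h,t}\setminus E$ and $A_{l,t+1}=A_{l,t}\cup E$ and then re-indexing using precisely the book-keeping identities you isolate, namely $\theta^s_{h,c_{h,t}}=\theta_{h,t}$, $\theta^s_{\ell,c_{\ell,t+1}}=\theta_{l,t}$, $c_{h,t+1}=c_{h,t}$, $c_{\ell,t+1}=c_{\ell,t}+1$, and the parity relation $c_{\ell,t+1}=c_{h,t}+1$. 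The only cosmetic difference is that you state the pointer-tracking and parity facts up front as standing invariants, whereas the paper invokes them inline ``by definition'' inside the inductive step; the underlying argument is the same.
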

\begin{proof}
    We proceed by induction on $t$. The base case of $t=0$ is immediate. For the induction step, suppose for simplicity that the result holds at timestep $t$ and $\textup{dir}(t) = \text{low}$ (the case where $\textup{dir}(t) = \text{high}$ is identical). If $\textup{dir}(t+1) = \textup{dir}(t) = \text{low}$ there is nothing to prove. So, suppose $\textup{dir}(t+1) = \text{high}$. Then,
    \begin{align*}
    & A_{h,t+1}  = A_{h,t} \setminus \{x : \hat{p}_{h}(x) > \theta_{h,t} ,\ \hat{p}_l(x) \leq \theta_{l,t}\} \\
    & = \bigcup_{i=1}^{c_{h,t}} \{x : \theta^s_{h,i-1} < \hat{p}_{h}(x) \leq \theta^s_{h,i},\ \hat{p}_l(x) > \theta^s_{\ell,i}  \} \\
    & \hspace{5cm} \cup \{x : \hat{p}_{h}(x) > \theta^s_{h,c_{h,t}}\} \setminus \{x : \hat{p}_{h}(x) > \theta_{h,t} ,\ \hat{p}_l(x) \leq \theta_{l,t}\}.
    \end{align*}
    Now, by definition $c_{h,t+1} = c_{h,t}$, $\theta^s_{c_{h,t}} = \theta_{h,t}$, $c_{\ell,t+1} = c_{\ell,t} + 1$, and $\theta^s_{\ell,c_{\ell,t+1}} = \theta_{l,t}$. So, the above can immediately be re-written as
    \[
    \bigcup_{i=1}^{c_{h,t+1}} \{x : \theta^s_{h,i-1} < \hat{p}_{h}(x) \leq \theta^s_{h,i},\ \hat{p}_l(x) > \theta^s_{\ell,i}  \} \cup \{x : \hat{p}_{h}(x) > \theta^s_{h,c_{h,t+1}},\ \hat{p}_l (x)> \theta^s_{\ell,c_{\ell,t+1}}\},
    \]
    as desired. Moreover, note that by construction $c_{\ell,t+1} = c_{h,t} + 1$. So, we also have that
    \begin{align*}
        & A_{l,t+1} = A_{l,t} \cup \{x : \hat{p}_{h}(x) > \theta_{h,t} ,\ \hat{p}_l(x) \leq \theta_{l,t}\} \\
        & = \bigcup_{i=1}^{c_{\ell,t}} \{x : \theta^s_{\ell,i} < \hat{p}_l(x) \leq \theta^s_{\ell,i-1},\ \hat{p}_{h}(x) \leq \theta^s_{h,i-1}  \}\\
        & \hspace{3cm} \cup \{x: \hat{p}_{h}(x) \leq \theta^s_{h,c_{h,t}},\ \hat{p}_l(x) \leq \theta^s_{\ell,c_{\ell,t}}\} \cup \{x : \hat{p}_{h}(x) > \theta_{h,t} ,\ \hat{p}_l(x) \leq \theta_{l,t}\} \\
                & = \bigcup_{i=1}^{c_{\ell,t}} \{x : \theta^s_{\ell,i} < \hat{p}_l(x) \leq \theta^s_{\ell,i-1},\ \hat{p}_{h}(x) \leq \theta^s_{h,i-1}  \}\\
        & \hspace{3cm} \cup \{x: \hat{p}_{h}(x) \leq \theta^s_{h,c_{h,t}},\ \hat{p}_l(x) \leq \theta^s_{\ell,c_{\ell,t}}\} \cup \{x : \hat{p}_{h}(x) > \theta^s_{h,c_{h,t}} ,\ \hat{p}_l(x) \leq \theta^s_{\ell,c_{\ell,t+1}}\} \\
        & = \bigcup_{i=1}^{c_{\ell,t+1}} \{x : \theta^s_{\ell,i} < \hat{p}_l(x) \leq \theta^s_{\ell,i-1},\ \hat{p}_{h}(x) \leq \theta^s_{h,i-1}  \} \cup \{x : \hat{p}_l(x) \leq \theta^s_{\ell,c_{\ell,t+1}}\}.
    \end{align*}
\end{proof}

 Our next lemma upperbounds the loss of the ensembled predictor computed by the Merge procedure at each iteration of the algorithm.

\begin{lemma}\label{lem:merge_iter_performance}
   Let $\Theta_h > \Theta_l$ be finite subsets of $[0,1]$ and assume that $\hat{p}_h$ takes values in $(\max \Theta_l,1]$  and $\hat{p}_l$ take values in $[0,\min \Theta_h)$. For all $t$ let
    \[
    \hat{p}_{m,t}(x) =   \hat{p}_l(x)\bone\{x \in A_{l,t}\}+ \hat{p}_{h}(x)\bone\{x \in A_{h,t}\}.
    \]
    Fix $\epsilon > 0$ and suppose that,
    \[
    \sup_{\theta_{h} \in \Theta_{h}, \theta_l \in \Theta_l, \theta \in \{\theta_{h}, \theta_l\}} \left|(\hat{\mme}_n - \mme)[(\ell_{\theta}(1,Y) - \ell_{\theta}(0,Y))\bone\{\hat{p}_{h}(X) > \theta_{h},\ \hat{p}_l(X) \leq \theta_l\}]\right| \leq \epsilon.
    \]
    Then, for all $t$ such that $\textup{dir}(t) = \textup{high}$ we have
    \[
    \sup_{\theta \in \Theta_{h} : \theta < \theta_{h,t}} \mme[\ell_{\theta}(\hat{p}_{m,t}(X),Y) - \ell_{\theta}(\hat{p}_{h}(X),Y)] \leq 2\epsilon \  \text{ and } \  \sup_{\theta \in \Theta_l} \mme[\ell_{\theta}(\hat{p}_{m,t}(X),Y) - \ell_{\theta}(\hat{p}_l(X),Y)] \leq 2\epsilon.
    \]
    Similarly, for all $t$ such that $\textup{dir}(t) = \textup{low}$ we have
    \[
        \sup_{\theta \in \Theta_{h}} \mme[\ell_{\theta}(\hat{p}_{m,t}(X),Y) - \ell_{\theta}(\hat{p}_{h}(X),Y)] \leq 2\epsilon \  \text{ and } \  \sup_{\theta \in \Theta_l: \theta > \theta_{l,t}} \mme[\ell_{\theta}(\hat{p}_{m,t}(X),Y) - \ell_{\theta}(\hat{p}_l(X),Y)] \leq 2\epsilon,
    \]
    where all of the expectations above are taken only over the randomness in $(X,Y)$ with $\hat{p}_{m,t}$ held fixed.
\end{lemma}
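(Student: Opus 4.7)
The plan is to induct on $t$, using as the workhorse the algebraic identity $\ell_\theta(1,y) - \ell_\theta(0,y) = \theta - y$ valid for $y \in \{0,1\}$. Under the assumed ranges $\hat{p}_h \in (\max\Theta_l, 1]$ and $\hat{p}_l \in [0, \min\Theta_h)$, both omniprediction gaps rewrite cleanly because the extraneous indicators simplify: for $\theta \in \Theta_l$ we have $\hat{p}_h > \theta$ everywhere, so
\[
\mme[\ell_\theta(\hat{p}_{m,t},Y) - \ell_\theta(\hat{p}_l,Y)] = \mme[(\theta - Y)\bone\{X \in A_{h,t},\ \hat{p}_l(X) \leq \theta\}],
\]
and symmetrically for $\theta \in \Theta_h$ we have $\hat{p}_l \leq \theta$ everywhere, yielding $\mme[(Y-\theta)\bone\{X \in A_{l,t},\ \hat{p}_h(X) > \theta\}]$.

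The base case $t=1$ is immediate: $\hat{p}_{m,1} = \hat{p}_h$ and $\textup{dir}(1) = \textup{low}$, so the $\Theta_h$ bound is zero and the $\Theta_l$ bound is taken over the empty set $\{\theta \in \Theta_l : \theta > \max \Theta_l\}$. For the inductive step I split into four subcases based on the pair $(\textup{dir}(t), \textup{dir}(t+1))$; the two $\textup{low}$-start cases are representative and the $\textup{high}$ cases are handled by an entirely symmetric argument. A set-theoretic fact derived from Lemma \ref{lem:merge_set_char} that drives the argument is the identity
\[
A_{h,t} \cap \{\hat{p}_l \leq \theta_{l,t}\} \;=\; E_t \;:=\; \{\hat{p}_h > \theta_{h,t},\ \hat{p}_l \leq \theta_{l,t}\}
\]
during a low phase: every previous low swap occurred at $\theta^s_{\ell,i} > \theta_{l,t}$ (ruling out the union terms of $A_{h,t}$), while the base term $\{\hat{p}_h > \theta^s_{h,c_{h,t}}\}$ agrees with $\{\hat{p}_h > \theta_{h,t}\}$ because no high iterations have occurred since the last high swap.

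In the no-swap subcase with $\textup{dir}(t) = \textup{low}$, we have $\hat{p}_{m,t+1} = \hat{p}_{m,t}$, so the $\Theta_h$ bound transfers verbatim from the induction hypothesis and only the new parameter $\theta_{l,t}$ needs to be checked in the $\Theta_l$ bound. The set identity reduces this new gap to $\mme[(\theta_{l,t}-Y)\bone\{E_t\}]$, and the no-swap empirical condition $\hat{\mme}_n[(\theta_{l,t}-Y)\bone\{E_t\}] \leq \epsilon$ combined with the stated uniform concentration assumption gives the required bound of $2\epsilon$. In the swap subcase, $A_{l,t+1} = A_{l,t} \cup E_t$ and $A_{h,t+1} = A_{h,t} \setminus E_t$; the $\Theta_l$ bound at $\theta \leq \theta_{l,t}$ is preserved by a monotonicity argument showing $A_{h,t+1} \cap \{\hat{p}_l \leq \theta\} \subseteq E_t \setminus E_t = \emptyset$, and at $\theta > \theta_{l,t}$ it is inherited from the induction hypothesis.

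The main technical obstacle, and the step in which the hierarchy $\ell_\theta(1,y) - \ell_\theta(0,y) = \theta - y$ plays its decisive role, is verifying the $\Theta_h$ bound after a low swap. For $\theta \in \Theta_h$ with $\theta \leq \theta_{h,t}$, the disjointness $A_{l,t} \cap E_t = \emptyset$ (which follows because $E_t \subseteq A_{h,t}$) and the inclusion $E_t \subseteq \{\hat{p}_h > \theta\}$ produce the clean decomposition
\[
\mme[(Y-\theta)\bone\{A_{l,t+1},\ \hat{p}_h > \theta\}] = \mme[(Y-\theta)\bone\{A_{l,t},\ \hat{p}_h > \theta\}] + \mme[(Y-\theta)\bone\{E_t\}].
\]
The first summand is bounded by $2\epsilon$ by the inductive hypothesis at step $t$. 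For the second, the swap condition $\hat{\mme}_n[(Y-\theta_{l,t})\bone\{E_t\}] < -\epsilon$ combined with concentration forces $\mme[(Y-\theta_{l,t})\bone\{E_t\}] < 0$; the pointwise inequality $Y - \theta \leq Y - \theta_{l,t}$ (valid because $\theta \geq \min\Theta_h > \max\Theta_l \geq \theta_{l,t}$) then yields $\mme[(Y-\theta)\bone\{E_t\}] \leq 0$, so the total remains at most $2\epsilon$. The $\textup{high}$ cases are handled by an identical argument with signs reversed and the roles of $\hat{p}_h$ and $\hat{p}_l$ exchanged.
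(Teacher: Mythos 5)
Your proof tracks the paper's argument closely in structure: induct on $t$, rewrite the weighted 0-1 gap via $\ell_\theta(1,y)-\ell_\theta(0,y)=\theta-y$, telescope the post-swap gap as an induction-hypothesis term plus a new term supported on $E_t$, and close that new term with the pointwise monotonicity of $\theta-Y$ in $\theta$ together with the swap condition and the concentration hypothesis. All of that mirrors what the paper does for its spelled-out (high-phase) cases, and the monotonicity step is exactly the hierarchy argument the paper uses.

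The gap is the set identity you place at the center of the low-phase argument,
\[
A_{h,t}\cap\{\hat{p}_l\le\theta_{l,t}\}\;=\;E_t:=\{\hat{p}_h>\theta_{h,t},\ \hat{p}_l\le\theta_{l,t}\},
\]
which is not true in general. From Lemma \ref{lem:merge_set_char}, the base term of $A_{h,t}$ during a low phase is $\{\hat{p}_h>\theta^s_{h,c_{h,t}}\}$. Your justification, that ``the base term agrees with $\{\hat{p}_h>\theta_{h,t}\}$ because no high iterations have occurred since the last high swap,'' presupposes that some high swap has already occurred; but the algorithm initializes in a low phase with $A_{h,1}=\mathcal{X}$, $\theta^s_{h,0}=-1$, and $\theta_{h,1}=\min\Theta_h$, so before the first high swap the base term is $\{\hat{p}_h>-1\}=\mathcal{X}$ while $\{\hat{p}_h>\theta_{h,t}\}$ is a proper subset whenever $\hat{p}_h$ takes values in $(\max\Theta_l,\min\Theta_h]$. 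The lemma's hypotheses allow exactly that (they only restrict $\hat{p}_h$ to $(\max\Theta_l,1]$), and the application in Algorithm \ref{alg:ensemb_main} produces predictors with values straddling $\min\Theta_h$. Concretely, take $\Theta_h=\{\theta_h\}$, $\Theta_l=\{\theta_l\}$, $\hat{p}_h(x)\in(\theta_l,\theta_h]$, $\hat{p}_l(x)\le\theta_l$, and $p^*(x)=0$: then $E_1=\emptyset$, the no-swap test passes vacuously, and yet the gap $\mme[\ell_{\theta_l}(\hat{p}_h,Y)-\ell_{\theta_l}(\hat{p}_l,Y)]=\theta_l$ is $\Omega(1)$, so the reduction to $\mme[(\theta_{l,t}-Y)\mathbbm{1}\{E_t\}]$ fails. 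I should note this blind spot is not unique to you: the paper only writes out the $\textup{dir}(t)=\textup{high}$ cases, where the mirror identity $A_{l,t}\cap\{\hat{p}_h>\theta_{h,t}\}=E_t$ \emph{does} hold (a low swap always precedes a high phase, so $\theta^s_{\ell,c_{\ell,t}}=\theta_{l,t}$), and then declares the low case ``identical'' --- but the asymmetric initialization breaks that symmetry. Any fix must either strengthen the assumption to $\hat{p}_h>\min\Theta_h$ everywhere, or redefine $E$ (and correspondingly extend the concentration hypothesis) so that the set the algorithm tests is exactly $A_{h,t}\cap\{\hat{p}_l\le\theta_{l,t}\}$.
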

\begin{proof}
    We prove this by induction. The base case of $t=0$ is immediate. For the inductive step, suppose the result holds at timestep $t$. Assume for simplicity that $\textup{dir}(t) = \textup{high}$ (the case $\textup{dir}(t) = \textup{low}$ is identical). There are two cases.

    \textbf{Case 1, $\textup{dir}(t+1) = \textup{high}$:} In this case the predictor does not change. Thus, to obtain the desired result we just need to show that
    \[
    \mme[\ell_{\theta_{h,t}}(\hat{p}_{m,t}(X),Y) - \ell_{\theta_{h,t}}(\hat{p}_{h}(X),Y)] \leq 2\epsilon.
    \]
    By Lemma \ref{lem:merge_set_char}, we have
    \begin{align*}
        & \mme[\ell_{\theta_{h,t}}(\hat{p}_{m,t}(X),Y) - \ell_{\theta_{h,t}}(\hat{p}_{h}(X),Y)]\\
        & = \mme[(\ell_{\theta_{h,t}}(0,Y) - \ell_{\theta_{h,t}}(1,Y))\bone\{X \in A_{l,t},\ \hat{p}_{h}(X) > \theta_{h,t}, \ \hat{p}_l(X) \leq \theta_{h,t}\}]\\
        & = \mme[(\ell_{\theta_{h,t}}(0,Y) - \ell_{\theta_{h,t}}(1,Y))\bone\{\hat{p}_l(X) \leq \theta^s_{c_{\ell,t}},\ \hat{p}_{h}(X) > \theta_{h,t}\}].
    \end{align*}
    Now, by construction, $\theta^s_{c_{\ell,t}} = \theta_{l,t}$. So, the above is quantity is exactly equal to 
    \begin{align*}
        & \mme[(\ell_{\theta_{h,t}}(0,Y) - \ell_{\theta_{h,t}}(1,Y))\bone\{\hat{p}_l(X) \leq \theta_{l,t},\ \hat{p}_{h}(X) > \theta_{h,t}\}]\\
        & = ({\mme} -\hat{\mme}_n)[(\ell_{\theta_{h,t}}(0,Y) - \ell_{\theta_{h,t}}(1,Y))\bone\{\hat{p}_l(X) \leq \theta_{l,t},\ \hat{p}_{h}(X) > \theta_{h,t}\}]\\
        & \ \ \ \ \ \ + \hat{\mme}_n[(\ell_{\theta_{h,t}}(0,Y) - \ell_{\theta_{h,t}}(1,Y))\bone\{\hat{p}_l(X) \leq \theta_{l,t},\ \hat{p}_{h}(X) > \theta_{h,t}\}]\\
        & \leq 2\epsilon,
    \end{align*} 
    where to obtain the last line we recall that $\text{dir}(t) = \text{dir}(t+1) = \text{high}$ and thus the empirical expectation in the second term must be at most $\epsilon$.

    \textbf{Case 2, $\textup{dir}(t+1) = \textup{low}$:} Now, by construction, in order to have $\textup{dir}(t) = \textup{high}$ and $\textup{dir}(t+1) = \textup{low}$ we must have that
    \begin{align*}
\hat{\mme}_n[(\ell_{\theta_{h,t}}(1,Y) - \ell_{\theta_{h,t}}(0,Y))\bone\{\hat{p}_l(X) \leq \theta_{l,t},\ \hat{p}_{h}(X) > \theta_{h,t}\}] < -\epsilon.
    \end{align*}
Notably, it follows immediately that 
\[
\hat{\mme}_n[(\ell_{\theta}(1,Y) - \ell_{\theta}(0,Y))\bone\{\hat{p}_l(X) \leq \theta_{l,t},\ \hat{p}_{h}(X) > \theta_{h,t}\}] < -\epsilon, \ \forall \theta \leq \theta_{h,t}.
\]
We will use this fact multiple times in the calculations that follow.

We consider a series of sub-cases. First, consider the case where $\theta \in  \{\theta' \in \Theta_l: \theta' \geq \theta_{l,t}\}$. By the induction hypothesis, 
    \begin{align*}
        & \mme[\ell_{\theta}(\hat{p}_{m,t+1}(X),Y) - \ell_{\theta}(\hat{p}_l(X),Y)] \leq \mme[\ell_{\theta}(\hat{p}_{m,t+1}(X),Y) - \ell_{\theta}(\hat{p}_{m,t}(X),Y)] + 2\epsilon\\
        & = \mme[(\ell_{\theta}(1,Y) - \ell_{\theta}(0,Y))\bone\{\hat{p}_{h}(X) > {\theta}_{h,t},\ \hat{p}_l(X) \leq {\theta}_{\ell,t}\}] + 2\epsilon\\
        & \leq (\mme-\hat{\mme}_n)[(\ell_{\theta}(1,Y) - \ell_{\theta}(0,Y))\bone\{\hat{p}_{h}(X) > {\theta}_{h,t},\ \hat{p}_l(X) \leq {\theta}_{\ell,t}\}]\\
        & \hspace{2cm} + \hat{\mme}_n[(\ell_{\theta}(1,Y) - \ell_{\theta}(0,Y))\bone\{\hat{p}_{h}(X) > {\theta}_{h,t},\ \hat{p}_l(X) \leq {\theta}_{\ell,t}\}] + 2\epsilon \\
        & \leq \epsilon - \epsilon + 2\epsilon = 2\epsilon.
    \end{align*}
    On the other hand, for $\theta \geq \theta_{h,t}$ we have that $\hat{p}_{m,t+1}(x) > \theta \iff \hat{p}_{h}(x) > \theta$ (recall Lemma \ref{lem:merge_set_char} and that $\theta_{h,c_{h,t}}^s = \theta_{h,t}$) and thus,
    \[
    \mme[\ell_{\theta}(\hat{p}_{m,t+1}(X),Y) - \ell_{\theta}(\hat{p}_{h}(X),Y)] = 0.
    \]
    Finally, for $\theta \in \{ \theta' \in \Theta_{h}:  \theta' < \theta_{h,t}\}$ we have
    \begin{align*}
         \mme[\ell_{\theta}(\hat{p}_{m,t+1}(X),Y) - \ell_{\theta}(\hat{p}_{h}(X),Y)] & \leq \mme[\ell_{\theta}(\hat{p}_{m,t+1}(X),Y) - \ell_{\theta}(\hat{p}_{m,t}(X),Y)]  + 2\epsilon\\
        & = \mme[(\ell_{\theta}(1,Y) - \ell_{\theta}(0,Y))\bone\{\hat{p}_{h}(X) > {\theta}_{h,t},\ \hat{p}_l(X) \leq {\theta}_{\ell,t}\}]  + 2\epsilon\\
        & \leq 2\epsilon,
    \end{align*}
    as above.
\end{proof}

 We are now ready to prove Lemma \ref{lem:merge_performance} which follows as an almost immediate corollary of Lemma \ref{lem:merge_iter_performance}.

 \begin{proof}[Proof of Lemma \ref{lem:merge_performance}] By Hoeffding's inequality we have that 
 \[
 \sup_{\theta_{h} \in \Theta_{h}, \theta_l \in \Theta_l, \theta \in \{\theta_{h}, \theta_l\}} \left|(\hat{\mme}_n - \mme)[(\ell_{\theta}(1,Y) - \ell_{\theta}(0,Y))\bone\{\hat{p}_{h}(X) > \theta_{h},\ \hat{p}_l(X) \leq \theta_l]\right| = O_{\mmp}\left(\sqrt{\frac{\log(|\Theta_h|\cdot |\Theta_l|)}{n}} \right).
 \]
Plugging this fact into the statement of Lemma \ref{lem:merge_iter_performance} and taking $t$ to be the last time-step of Algorithm \ref{alg:merge_detailed} gives the desired result.
 \end{proof}

With the above lemmas in hand the proof of Theorem \ref{thm:direct_omni_bound} is immediate.

\begin{proof}[Proof of Theorem \ref{thm:direct_omni_bound}]
This result follows immediately from combining Lemma \ref{lem:merge_performance} with the results of Section \ref{sec:simplification} and adding up the cumulative error over all $\log_2(m)$ rounds of Algorithm \ref{alg:ensemb_main}.
\end{proof}

\section{Additional details for the sales forecasting example}\label{sec:app_sales_details}

For our sales forecasting example in Section \ref{sec:sales} we need to compute the forecasted probability of observing a non-zero number of sales given a predicted set of quantiles. Formally, let $Y_c \in \mmr$ denote the number of sales of an item on a given day at a given Walmart location. Let $0 < \tau_1<\cdots < \tau_k < 1$ denote a set of levels and $\hat{q}^{\tau_1} \leq \cdots \leq \hat{q}^{\tau_k}$ denote a corresponding set of quantile estimates. Then, for any $x \in \mmr$ we define an estimate of the cumulative distribution function of $Y_c$ as the linear interpolation,
\[
\hat{\mmp}(Y_c \leq x) = \begin{cases}
 1,\ x >= \tau_k,\\
 0,\ x < \tau_1,\\
 \tau_{i-1} + \frac{\tau_i - \tau_{i-1}}{\hat{q}^{\tau_i} - \hat{q}^{\tau_{i-1}}}(x-  \hat{q}^{\tau_{i-1}}), \hat{q}^{\tau_{i-1}} \leq x < \hat{q}^{\tau_i}.
\end{cases}
\]

\begin{figure}[ht]
    \centering\includegraphics[width=\textwidth]{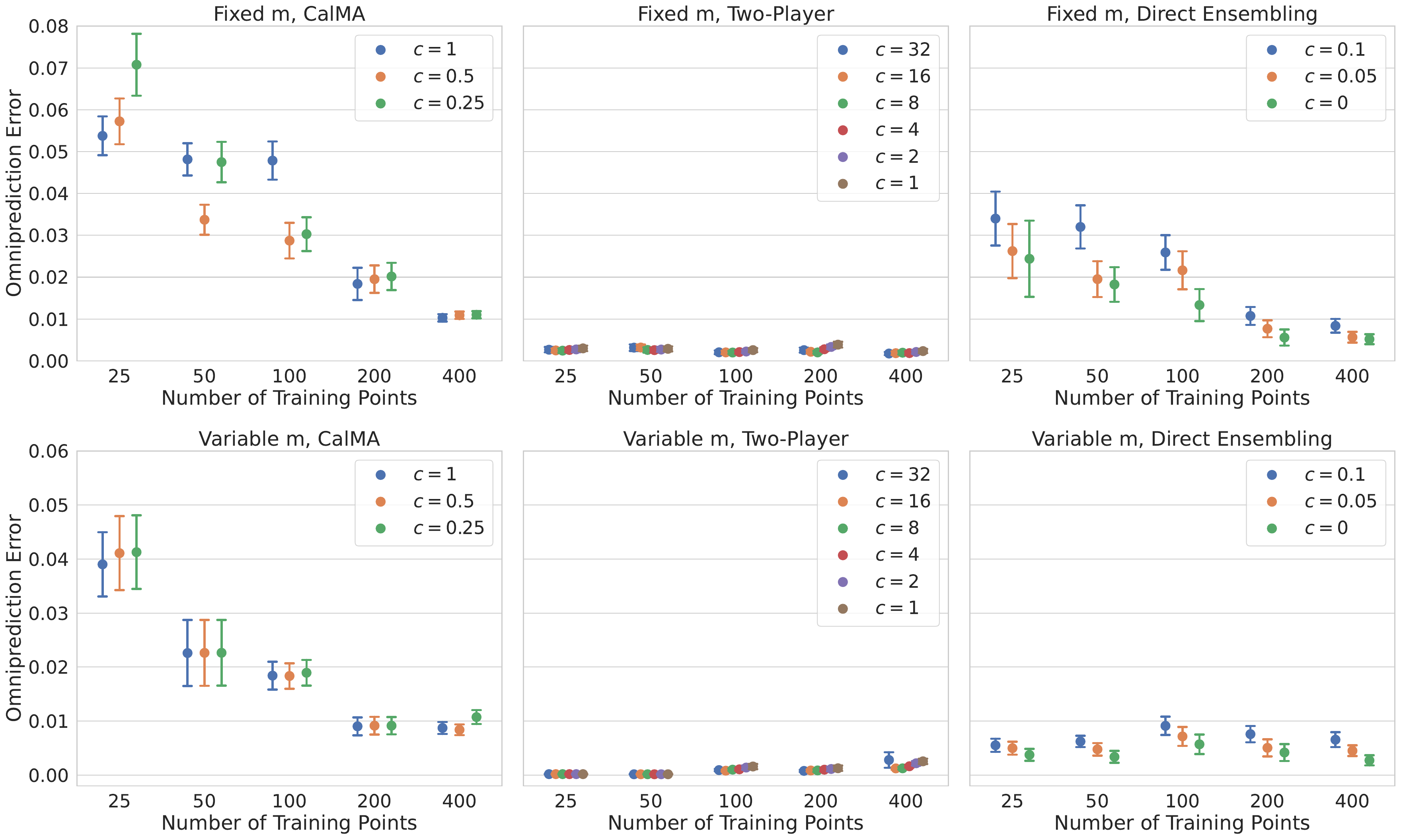}
    \caption{Omniprediction error of the calibrated multiaccuracy (left panels), two-player game based (center panels), and direct ensembling (right panels) methods across various sample sizes with $m = 16$ fixed (top row) or chosen variably as $m = 2^{\lfloor \log_2(\sqrt{n}) \rfloor}$ (bottom row) as the scaling constant $c$ varies on the M5 sales forecasting dataset. Dots and error bars show means and standard errors obtained by evaluating the omniprediction error over 2000 test points for each of 20 draws of the training data.    }\label{fig:sales_var_hyper}
\end{figure}

We conclude this section with Figure \ref{fig:sales_var_hyper} which displays the results of our sales forecasting experiments for varying hyperparameter values.

\section{Proofs for Section \ref{sec:other_targets} }

In this section we prove Proposition \ref{prop:general_loss_decomp}.

 \begin{proof}[Proof of Proposition \ref{prop:general_loss_decomp}]
    The statement given in Proposition \ref{prop:general_loss_decomp} is  a slight variant of Corollary 9 of \citet{Steinwart2014}. In particular, we have assumed that the losses under consideration are strictly proper, while \citet{Steinwart2014} instead assumes that the losses  are order sensitive. More precisely, they restrict to losses $\ell^T$ such that for all distributions $P \in \mathcal{P}$ and all $t_1,t_2 \in \text{Image}(T)$ such that either $t_2 < t_1 < T(P)$ or $T(P) < t_1 < t_2$,
    \[
    \mme_P[\ell^T(t_1,Y)] < \mme_P[\ell^T(t_2,Y)].
    \]
    We show here that this latter condition is implied by strict propriety. 
    
    Let $\ell^T$ be a strictly proper loss for $T$ and $t_1,t_2 \in \text{Image}(T)$ be such that either $t_2 < t_1 < T(P)$ or $T(P) < t_1 < t_2$. Let $P_1$ and $P_2$ be such that $T(P_1) = t_1$ and $T(P_2) = t_2$. By the continuity of $T$, there exists $\lambda \in (0,1)$ such that $T(\lambda P_2 + (1-\lambda) P) = T(P_1)$. Moreover, since $\ell^T$ is strictly proper we must have that 
    \begin{align*}
    \lambda \mme_{P_2}[\ell^T(t_1,Y)] + (1-\lambda) \mme_P[\ell^T(t_1,Y)] & = \mme_{\lambda P_2 + (1-\lambda) P}[\ell^T(t_1,Y)]\\
    & < \mme_{\lambda P_2 + (1-\lambda) P}[\ell^T(t_2,Y)]  =  \lambda \mme_{P_2}[\ell^T(t_2,Y)] + (1-\lambda) \mme_P[\ell^T(t_2,Y)],
    \end{align*}
    and so in particular,
    \[
    (1-\lambda) (\mme_P[\ell^T(t_2,Y)] - \mme_P[\ell^T(t_1,Y)]) > \lambda(\mme_{P_2}[\ell^T(t_1,Y)] - \mme_{P_2}[\ell^T(t_2,Y)]) > 0,
    \]
    as desired.

\end{proof}

\section{Auxiliary results}

In this section we state a few results from prior work that were used in the proofs from the previous sections. We begin by recalling the regret bound for the well-known hedge algorithm for learning from expert advice \citep{Vovk1990, Littlestone1994, Freund1997}.

\begin{theorem}[Regret of Hedge (e.g., Theorem 1.5 of \citet{Hazan2019})]\label{thm:hedge}
Consider an online learning problem with $m$ experts receiving bounded losses $\{\ell_{t,i}\}_{1 \leq  i \leq m, 1 \leq t \leq T}$ with $\sup_{1 \leq i \leq m, 1 \leq t \leq T} \ell_{t,i} \leq B$. Suppose that at time step $t$ we make the same prediction as expert $i$ with probability 
\[
q_{t,i} := \frac{ \exp(-\eta \sum_{s<t} \ell_{s,i})}{\sum_{j=1}^m\exp(-\eta \sum_{s<t} \ell_{s,j}) },
\]
for some $\eta > 0$. Then,
\[
\sum_{t=1}^T \mme_{I \sim q_{t}}[\ell_{t,I}] \leq \min_{1 \leq i \leq m} \sum_{t=1}^T \ell_{t,i} + \eta T B^2 + \frac{\log(M)}{\eta}.
\]
\end{theorem}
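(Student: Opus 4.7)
The plan is the standard weighted-majority potential-function proof. Define unnormalized weights $w_{t,i} = \exp(-\eta \sum_{s<t} \ell_{s,i})$ and their sum $W_t = \sum_{i=1}^m w_{t,i}$, so that $q_{t,i} = w_{t,i}/W_t$ and $W_1 = m$. The entire argument controls $\log W_{T+1}$ from two sides.

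For the upper bound, observe that
$$\frac{W_{t+1}}{W_t} = \sum_{i=1}^m q_{t,i} \exp(-\eta \ell_{t,i}) = \mme_{I \sim q_t}[\exp(-\eta \ell_{t,I})].$$
Applying the elementary pointwise inequality $\exp(-z) \leq 1 - z + z^2$, which is valid for $|z| \leq 1$ (and thus for $z = \eta \ell_{t,i}$ whenever $\eta B \leq 1$), together with $\log(1+x) \leq x$, gives
$$\log \frac{W_{t+1}}{W_t} \leq -\eta\, \mme_{I \sim q_t}[\ell_{t,I}] + \eta^2 B^2.$$
Telescoping across $t = 1, \dots, T$ yields $\log W_{T+1} \leq \log m - \eta \sum_t \mme_{I \sim q_t}[\ell_{t,I}] + T \eta^2 B^2$.

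For the lower bound, for any fixed expert $i^*$ we have $W_{T+1} \geq w_{T+1,i^*} = \exp(-\eta \sum_t \ell_{t,i^*})$, hence $\log W_{T+1} \geq -\eta \sum_t \ell_{t,i^*}$. Choosing $i^* \in \argmin_i \sum_t \ell_{t,i}$ and combining the two bounds, then dividing through by $\eta$, delivers the stated regret inequality.

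The main technical point to handle carefully is the scale condition $\eta B \leq 1$ required by the quadratic surrogate for $\exp(\cdot)$; outside that regime one would substitute Hoeffding's lemma on the MGF of the bounded random variable $\ell_{t,I}$ for the one-step bound, giving the same $\eta T B^2 + \log(m)/\eta$ dependence up to an absolute constant. Everything else is bookkeeping.
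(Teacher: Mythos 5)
Your proof is correct and is precisely the standard potential-function argument that the paper itself does not reproduce but imports by citation to Theorem 1.5 of \citet{Hazan2019}: telescoping $\log W_t$ via the surrogate $e^{-z}\le 1-z+z^2$ together with $\log(1+x)\le x$, then lower-bounding $W_{T+1}$ by the weight of the best single expert. Your caveat about the regime $\eta B\le 1$ (with the fallback to Hoeffding's lemma on the moment generating function when the losses may be negative, which is relevant here since the paper applies the bound to loss \emph{differences} taking values in $[-1,1]$) is exactly the right point to flag and is handled correctly.
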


We next recall the well-known Azuma-Hoeffding inequality \citep{Hoeffding1963, Azuma1967}.

\begin{theorem}[Azuma-Hoeffding inequality (e.g., Theorem 9.7 of \citet{Hazan2019})]\label{thm:AH} 
Let $\{X_t\}_{t=1}^T$ be a martingale with bounded differences $\mmp(|X_t - X_{t-1}| \leq B) = 1$, $\forall 2 \leq t \leq T$. Then, for all $c \in \mmr$,
\[
\mmp(|X_T - \mme[X_T]| \geq c) \leq 2\exp\left( -\frac{c^2}{2B^2 T} \right).
\]
    
\end{theorem}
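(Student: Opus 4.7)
The plan is to prove the Azuma-Hoeffding bound via the standard exponential moment (Chernoff) method applied to the martingale $X_T - \mme[X_T]$. First I would reduce to the centered case: let $D_t = X_t - X_{t-1}$ for $2 \leq t \leq T$, set $D_1 = X_1 - \mme[X_1]$, so that $X_T - \mme[X_T] = \sum_{t=1}^T D_t$ is a sum of martingale differences with $|D_t| \leq B$ almost surely and $\mme[D_t \mid \mathcal{F}_{t-1}] = 0$, where $\{\mathcal{F}_t\}$ is the natural filtration. Then by Markov's inequality, for any $\lambda > 0$,
\[
\mmp(X_T - \mme[X_T] \geq c) \leq e^{-\lambda c}\, \mme\Bigl[\exp\Bigl(\lambda \sum_{t=1}^T D_t\Bigr)\Bigr].
\]

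The main step is to control the moment generating function of each difference conditional on the past. I would invoke Hoeffding's lemma, which states that if $Z$ is a mean-zero random variable with $|Z| \leq B$ almost surely, then $\mme[e^{\lambda Z}] \leq \exp(\lambda^2 B^2/2)$. Since $\mme[D_t \mid \mathcal{F}_{t-1}] = 0$ and $|D_t| \leq B$ almost surely, applying Hoeffding's lemma conditionally gives
\[
\mme[e^{\lambda D_t} \mid \mathcal{F}_{t-1}] \leq \exp(\lambda^2 B^2 / 2) \quad \text{a.s.}
\]
Iterating the tower property $T$ times (peel off $e^{\lambda D_T}$ using $\mathcal{F}_{T-1}$, then $e^{\lambda D_{T-1}}$ using $\mathcal{F}_{T-2}$, etc.) yields
\[
\mme\Bigl[\exp\Bigl(\lambda \sum_{t=1}^T D_t\Bigr)\Bigr] \leq \exp(\lambda^2 B^2 T / 2).
\]

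Combining with the Markov step gives $\mmp(X_T - \mme[X_T] \geq c) \leq \exp(-\lambda c + \lambda^2 B^2 T/2)$, and optimizing in $\lambda$ by choosing $\lambda = c/(B^2 T)$ produces the bound $\exp(-c^2/(2B^2 T))$. Applying the same argument to the martingale $-X_t$ gives the matching lower tail bound, and a union bound yields the factor of $2$ in the two-sided statement.

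The only nontrivial ingredient is Hoeffding's lemma itself, which I would invoke as a black box (its standard proof uses convexity of $x \mapsto e^{\lambda x}$ on $[-B,B]$ together with the fact that a mean-zero variable in $[-B,B]$ has variance at most $B^2$, combined with a Taylor expansion of the log-MGF). The main obstacle, if any, is ensuring the conditional application of Hoeffding's lemma is rigorous — this requires only that $D_t$ is $\mathcal{F}_t$-measurable with conditional mean zero and an almost-sure bound, all of which follow immediately from the martingale hypothesis.
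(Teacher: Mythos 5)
Your proof is the standard Chernoff--Hoeffding's-lemma argument and it is correct; the paper does not prove this statement at all, but simply imports it as a known auxiliary result (citing Theorem 9.7 of Hazan), so there is no in-paper proof to compare against and your writeup is exactly the textbook derivation one would cite. One small caveat: the hypothesis as stated only bounds $|X_t - X_{t-1}|$ for $2 \leq t \leq T$, so your step treating $D_1 = X_1 - \mme[X_1]$ as a difference bounded by $B$ uses an assumption not literally given; in the usual formulation one either takes $X_1$ (or $X_0$) deterministic or also bounds the first increment, and with only $T-1$ bounded increments the exponent would read $c^2/(2B^2(T-1))$ --- a looseness in the paper's restatement rather than a flaw in your argument, but worth flagging if you want the constant $T$ exactly as displayed.
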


\end{document}